\documentclass{fairmeta}

\usepackage{amsmath,amssymb,mathtools,amsthm}
\usepackage{booktabs,array,colortbl,adjustbox}
\usepackage{float,enumitem,xspace,scalefnt}
\usepackage[nameinlink,capitalize]{cleveref}

\definecolor{linkcolor}{RGB}{83,83,182}
\definecolor{bestblue}{rgb}{0.70,0.82,0.95}
\definecolor{secondblue}{rgb}{0.88,0.94,1.00}
\newcommand{\best}[1]{\cellcolor{bestblue}\bfseries #1}
\newcommand{\second}[1]{\underline{#1}}

\newcommand{\Normal}{\mathcal{N}}
\newcommand{\Loss}{\mathcal{L}}
\newcommand{\dist}{\operatorname{dist}}
\newcommand{\diag}{\operatorname{diag}}
\newcommand{\algn}[1]{{\normalfont\scalefont{0.95}#1}\xspace}
\newcommand{\paretoq}{\algn{ParetoQ}}

\newtheorem{theorem}{Theorem}
\newtheorem{proposition}{Proposition}
\newtheorem{lemma}{Lemma}
\newtheorem{corollary}{Corollary}
\newtheorem{definition}{Definition}

\crefname{section}{Sec.}{Secs.}
\Crefname{section}{Sec.}{Secs.}
\crefname{table}{Tab.}{Tabs.}
\Crefname{table}{Tab.}{Tabs.}
\crefname{figure}{Fig.}{Figs.}
\Crefname{figure}{Fig.}{Figs.}
\crefname{equation}{Eq.}{Eqs.}
\Crefname{equation}{Eq.}{Eqs.}
\crefname{theorem}{Thm.}{Thms.}
\Crefname{theorem}{Thm.}{Thms.}
\crefname{lemma}{Lem.}{Lems.}
\Crefname{lemma}{Lem.}{Lems.}
\crefname{corollary}{Cor.}{Cors.}
\Crefname{corollary}{Cor.}{Cors.}
\crefname{proposition}{Prop.}{Props.}
\Crefname{proposition}{Prop.}{Props.}
\crefname{definition}{Def.}{Defs.}
\Crefname{definition}{Def.}{Defs.}
\crefname{appendix}{App.}{Apps.}
\Crefname{appendix}{App.}{Apps.}

\hypersetup{
  colorlinks=true,
  citecolor=linkcolor,
  linkcolor=linkcolor,
  pdftitle={Few Bits, One Law: Toward W2A4KV2},
  pdfauthor={Kai Yi, Tarek Elgamal, Sruthikesh Surineni, Vignesh Vivekraja, Soumyadeep Ghosh, Steven Li}
}
\title{Few Bits, One Law: Toward W2A4KV2}

\author[]{Kai Yi}
\author[]{Tarek Elgamal}
\author[]{Sruthikesh Surineni}
\author[]{Vignesh Vivekraja}
\author[]{Soumyadeep Ghosh}
\author[]{Steven Li}

\affiliation[]{Meta AI}

\abstract{
Extreme low-bit LLM compression is most challenging when weights, activations, and KV caches are quantized together: their distributions differ, and quantization errors interact throughout the network. We introduce \algn{CanonQ}, a unified quantization-aware training framework that addresses these challenges by separating source canonicalization from task-aware adaptation. Fixed rotations and energy normalization map heterogeneous tensor sources to canonical coordinates, enabling frozen Gaussian-reference codebooks to be reused across layers and models. Joint training then adapts the network to the coupled errors of weight, activation, and cache quantization within a common scalar/vector interface. We bound frozen-codebook transfer error and local task loss, and derive an exact normalization-aware straight-through Jacobian that links quantization distortion to gradient bias. The strongest gains arise under joint \texttt{W2A4KV2} compression: across \texttt{LLaMA3-1B/3B/8B}, \algn{CanonQ-Omni} achieves up to $14.28\times$ lower \texttt{WikiText-2} perplexity and up to $57.9\%$ higher mean zero-shot accuracy than prior state-of-the-art and representative quantization baselines. The benefits extend to \texttt{Qwen3-1.7B}, code generation, and mathematical reasoning: on instruction-tuned \texttt{MobileLLM-Pro-1B} at \texttt{W2A16KV16}, \algn{CanonQ} achieves relative improvements of $41.7\%$ in \texttt{HumanEval} pass@1 and $39.1\%$ in \texttt{GSM8K} exact match over the strongest evaluated quantization baseline.
}

\correspondence{\email{kaiyi96@meta.com}}

\begin{document}

\maketitle
\begin{figure}[!ht]
  \centering
  \includegraphics[width=\linewidth]{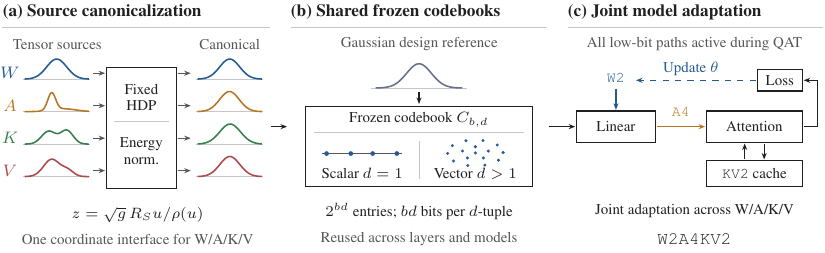}
  \caption{\textbf{Schematic overview of \algn{CanonQ}.}
  Fixed source-specific transforms and normalization map tensor blocks to canonical coordinates.
  Frozen Gaussian-reference codebooks are reused across layers and models for matching $(b,d)$.
  Joint QAT adapts the model to coupled quantization errors at \texttt{W2A4KV2}.
  }
  \label{fig:canonq_overview}
\end{figure}

\section{Introduction}
\label{sec:intro}
LLM inference is increasingly limited by memory movement: weights dominate model storage and matrix-multiplication bandwidth, activations affect prefill, and the KV cache grows with batch size and context length. Quantization is therefore central to deployment, yet the most attractive regimes, which include \texttt{W1/W2} weights, \texttt{A4} activations, and \texttt{KV2} caches, remain difficult. Post-Training Quantization (PTQ) is effective at moderate precision but often collapses when several components are compressed simultaneously \citep{dettmers2022llmint8,xiao2023smoothquant,frantar2022gptq,lin2024awq,shao2024omniquant,egiazarian2024aqlm,malinovskii2024pv}. Quantization-Aware Training (QAT) can recover quality by adapting the model to the quantized computation \citep{liu2024llmqat,liu2025paretoq,chen2025efficientqat,li2026winq,yi2026jacquant}, but the result still depends on the discrete representation to which the model is trained.

At such low precision, that representation is not a minor implementation choice. Uniform quantization uses equally spaced reconstruction levels, even though weights, activations, and KV states have different distributional shapes. A data-fitted codebook can accommodate those differences, but ties the representation to the tensors or calibration data used to construct it. Joint compression makes the choice harder: weights shape the activation distribution, and quantized activations generate the keys and values that must themselves survive a two-bit cache. This motivates a common quantization interface that supports adaptation across the coupled computation.

We ask: \emph{can a small quantization interface be designed once, then reused while an LLM learns to operate through it?} Our approach separates two decisions. Source-specific transforms and normalization determine the coordinates presented to the quantizer; a source-independent reference determines the reconstruction alphabet. QAT then adapts the network to the resulting fixed codec. We introduce \algn{CanonQ} around this principle: canonicalize the source, then train the model to use the codec. A fixed source-specific Hadamard--diagonal--permutation (HDP) transform mixes each block, and energy normalization removes its magnitude. The codebooks and HDP maps stay fixed throughout QAT. \algn{CanonQ-W} applies this interface to weights; \algn{CanonQ-Omni} applies it to the complete W/A/K/V computation.

Our central finding is that the benefit of canonical quantization becomes most pronounced when weights, activations, and KV caches are compressed together.

Our contributions are: \textbf{(1) A unified framework for joint W/A/KV quantization.} We introduce \algn{CanonQ}, which unifies scalar and vector quantization of weights, activations, and KV caches through a common canonical interface. Frozen reference codebooks are reusable across layers and models, while joint QAT adapts the network to interacting quantization errors across the full computation.
\textbf{(2) A unified theory linking source alignment to optimization.} We establish a quantitative chain from source mismatch to codebook distortion, local task loss, and optimization error. Wasserstein bounds characterize when frozen reference codebooks transfer to actual tensor distributions, while an exact normalization-aware straight-through estimator (STE) Jacobian identity reveals how assignment distortion controls backward sensitivity and gradient bias. We further derive distortion-dependent stationarity guarantees for a smooth reference objective, providing a theoretical foundation for the canonical training interface.
\textbf{(3) Large quality gains under extreme joint compression.} At \texttt{W2A4KV2}, \algn{CanonQ-Omni} achieves up to $14.28\times$ lower \texttt{WikiText-2} perplexity and up to $57.9\%$ higher mean zero-shot accuracy than strong quantization baselines across \texttt{LLaMA3-1B/3B/8B}. Further evaluations demonstrate transfer to \texttt{Qwen3-1.7B}, substantial coding and \texttt{GSM8K} improvements under \texttt{W2A16KV16}, and stable relative perplexity within the \texttt{8K} continued-QAT training range.

\section{Related Work}
\label{sec:related_work}
\textbf{Low-bit optimization and codebooks.}
PTQ methods including \algn{GPTQ}, \algn{AWQ}, and \algn{OmniQuant} improve low-bit reconstruction \citep{frantar2022gptq,lin2024awq,shao2024omniquant}; \algn{LLM-QAT}, \algn{EfficientQAT}, \algn{ParetoQ}, \algn{WinQ}, and \algn{JacQuant} adapt models through quantized training \citep{liu2024llmqat,chen2025efficientqat,liu2025paretoq,li2026winq,yi2026jacquant}.
Binary and ternary methods likewise make representation a first-order design choice \citep{wang2023bitnet,ma2024era,shang2023pbllm,huang2024billm,xu2024onebit}.
\algn{Lloyd--Max} and \algn{Linde--Buzo--Gray} provide classical scalar and vector constructions \citep{max1960quantizing,lloyd1982least,linde1980algorithm}; modern non-uniform, additive, and lattice approaches include \algn{NF4}, \algn{SqueezeLLM}, \algn{AQLM}, \algn{PV-Tuning}, and \algn{QuIP\#} \citep{dettmers2023qlora,kim2024squeezellm,egiazarian2024aqlm,malinovskii2024pv,tseng2024quipsharp}.
\algn{LC-QAT} is a close weight-only QAT comparison: it represents two-bit quantized weights through a learned affine map over discrete vectors and avoids explicit codebook lookup in the training forward pass \citep{wang2026lcqat}.
\algn{CanonQ} instead freezes the reference table, retains nearest-codeword assignment, and trains all enabled W/A/K/V paths.

\textbf{Rotations, canonicalization, and caches.}
Randomized rotations support distributed compression and LLM incoherence processing \citep{vargaftik2021drive,vargaftik2022eden,chee2023quip,tseng2024quipsharp}.
\algn{QuaRot} and \algn{SpinQuant} suppress outliers or learn rotations \citep{ashkboos2024quarot,liu2024spinquant}; \algn{KIVI} and \algn{KVQuant} exploit cache-specific structure \citep{liu2024kivi,hooper2024kvquant}.
\algn{TurboQuant} separates norm from direction and constructs scalar quantizers for rotation-induced marginals; its inner-product variant additionally quantizes a residual with \algn{QJL} \citep{zandieh2025turboquant}.
This is the closest source-modeling connection. We study task-aware adaptation to a fixed canonical codec, allow joint tuple assignment, and quantify the implemented HDP map's mismatch from its design reference. Building on joint four-bit compression exemplified by \algn{QuaRot}, \algn{CanonQ} targets the more aggressive \texttt{W1/W2-A4/A8-KV2} regime through a reusable canonical interface and joint task-aware adaptation.

\section{\algn{CanonQ}: Unified W/A/KV Quantization}
\label{sec:method}
\label{sec:preliminaries}
As illustrated in \cref{fig:canonq_overview}, \algn{CanonQ} separates source-specific coordinates from source-independent reconstruction choices. Let $u\in\mathbb R^{g_S}$ be a block from $S\in\{W,A,K,V\}$. Draw and fix the orthogonal map $R_S=P_SHD_S$, where $H$ is a normalized Hadamard matrix, $D_S$ contains random signs, and $P_S$ is a permutation. Each source shares its map across layers; keys and values share one KV map.
With $\rho_S(u)=\|u\|_2+\epsilon$, define
\begin{equation}
 z=\sqrt{g_S}R_Su/\rho_S(u).
 \label{eq:canonical_coordinates}
\end{equation}
Rotation redistributes energy; normalization removes magnitude. For $\epsilon=0$, the representation is invariant to positive rescaling of nonzero $u$. The Gaussian law $Q_d=\Normal(0,I_d)$ provides a reusable codebook design reference, motivated by the large-$g_S$ behavior of Haar-rotated sphere coordinates \citep{diaconis1984asymptotics}. The canonicalization defect in \cref{def:canonicalization_defect} quantifies deviations under the implemented structured HDP map.

\subsection{Reusable Codebooks in Canonical Coordinates}
For $b$ bits per scalar and tuple dimension $d$, the population target is the best $K=2^{bd}$-point quantizer of $Q_d$,
\begin{equation}
 C_{b,d}^{\star}\in\arg\min_{1\le |C|\le 2^{bd}}\frac1d\mathbb E_{V\sim Q_d}\dist(V,C)^2.
 \label{eq:canonical_vq_design}
\end{equation}
Generalized \algn{Lloyd/LBG} iterations approximate this target offline; we denote the deployed numerical table by $C^{\mathrm{can}}_{b,d}:=\widehat C_{b,d}$ and retain its finite fitting gap $\varepsilon_{\mathrm{vq}}$ in \cref{eq:canonical_fit_gap}.
For $d=1$ the construction is \algn{Lloyd--Max}; for $d>1$ assignment is genuinely joint rather than a Cartesian product of scalar decisions.
For example, \texttt{W2} with $d=4$ uses 256 four-dimensional codewords, so one 8-bit index represents four weights.
A table depends only on $(b,d)$ and stays frozen across groups, layers, models, and QAT updates.

Group energies and source maps handle scale and coordinates, while QAT adapts the model without refitting each tensor's alphabet. The shared construction therefore accommodates different raw W/A/K/V distributions.

Partition $z$ into consecutive tuples $v_j\in\mathbb R^{d_S}$, with $d_S\mid g_S$. The codec is
\begin{equation}
 \widehat v_j=\Pi_{C^{\mathrm{can}}_{b_S,d_S}}(v_j),\quad
 \widehat u=\gamma_{S,\ell}\frac{\rho_S(u)}{\sqrt{g_S}}R_S^\top\operatorname{concat}_j(\widehat v_j).
 \label{eq:canonical_codec}
\end{equation}
Let $q=\operatorname{concat}_j(\widehat v_j)$ denote the assigned canonical block. The nonnegative gain $\gamma_{S,\ell}$ adjusts its scale after assignment; clipping and decoding metadata remain source-specific. Here $g_S$ is the normalization-block size and $d_S$ is the assignment dimension. Assignment error $\|q-z\|_2$ and reconstruction error $\|\gamma_{S,\ell}q-z\|_2$ therefore describe different quantities. With identity assignment and unit gain, the codec recovers $u$ exactly.

\subsection{Joint Adaptation to Quantization Errors}
Nearest-codeword assignment uses the identity STE,
\[
 \widetilde v_j=v_j+\operatorname{sg}\!\left(\Pi_{C^{\mathrm{can}}_{b_S,d_S}}(v_j)-v_j\right),
 \quad
 \min_{\theta,\phi}\mathbb E_{(x,y)\sim\mathcal D}\ell\!\left(f_{Q_C(\theta;\phi)}(x),y\right),
\]
where $\operatorname{sg}$ stops gradients, $\theta$ denotes shadow-model parameters, and $\phi$ contains enabled trainable quantizer parameters.
The reported canonical activation and KV paths use energy normalization and frozen tables without an additional learned, absmax, or percentile clipping parameter.
All low-bit paths in \algn{CanonQ-Omni} are active during QAT, so training sees the coupled errors that the evaluation operator will introduce.
Evaluation re-quantizes the shadow weights and uses the labeled activation/cache path.
{The norm is recomputed from the current block. Differentiating it in both encoding and decoding makes the block STE differ from the identity, as derived in \cref{subsec:ste_analysis}.}

\subsection{Scalar/Vector Variants and Computational Cost}
\algn{CanonQ-W} canonicalizes weights and retains uniform quantizers for enabled A/KV paths; \algn{CanonQ-Omni} canonicalizes every enabled source. We also implement \algn{VQLLM-QAT}, a raw-space control with group-fitted weight codebooks, the same tuple dimension and index rate, nearest-codeword assignment, and STE. This control changes transformation and table fitting together.
Sources literally share a table only when $(b,d)$ matches; otherwise they share the construction.
The suffix \texttt{d} denotes $d_W$; reported canonical A/K/V paths use $d_A=d_K=d_V=1$.
Each tuple uses $bd$ index bits. A shared table occupies $2^{bd}d q_C$ bits for $q_C$-bit components. Exhaustive assignment costs $O(g_S2^{bd})$ arithmetic per block and the Hadamard transform costs $O(g_S\log g_S)$. These arithmetic costs motivate small $d$. Group metadata, table bytes, and unquantized modules are accounted for in \cref{app:bit_accounting}.

\section{Theory: From Source Alignment to Optimization}
\label{sec:theory}

The fixed interface raises three questions: when do reference codebooks transfer, how do quantization residuals affect local task loss, and how does distortion influence training?

\subsection{Bounding Frozen-Codebook Transfer Error}
\label{subsec:canonical_transfer}

Let $Q_d=\Normal(0,I_d)$. Haar-rotated normalized blocks converge to this reference in $W_2$ for fixed $d$ (\cref{prop:haar_reference}).
The implemented $R=PHD$ is a structured orthogonal transform. For blocks independent of its random signs, \cref{prop:hadamard_bridge} establishes sub-Gaussian coordinates, a marginal Gaussian-approximation bound controlled by $\|u\|_3^3/\|u\|_2^3$, and an exact mean tuple-covariance defect controlled by $g\|u\|_4^4/\|u\|_2^4$. Our transfer analysis explicitly accounts for deviations from the Gaussian reference through the checkpoint-specific joint tuple law.

Write $\omega=(\theta,\phi)$ and let $P_{S,d}^{\omega}$ denote the post-HDP tuple law induced by checkpoint $\omega$ under a fixed sampling rule over layers, groups, tokens, heads, and tuple positions.
\begin{definition}[Canonicalization defect]
\label{def:canonicalization_defect}
For any tuple law $P$ with finite second moment, define
\begin{equation}
    \Delta_d(P)
    :=
    \frac{1}{\sqrt d}W_2(P,Q_d),
    \quad
    \Delta_{S,d}(\omega)
    :=
    \Delta_d(P_{S,d}^{\omega}).
    \label{eq:canonicalization_defect}
\end{equation}
\end{definition}

This checkpoint-specific defect measures mismatch in the full joint tuple law, including coordinate dependence, structured-transform effects, and $\epsilon>0$ normalization. It connects the reference-codebook design to the tensor distributions encountered during QAT.

For a $K$-point codebook $C\subset\mathbb R^d$, let
\begin{equation}
    D_{P,d}(C)
    :=
    \frac1d\mathbb E_{V\sim P}\dist(V,C)^2,
    \quad
    \mathcal R_{P,d}(C):=\sqrt{D_{P,d}(C)},
    \label{eq:root_distortion}
\end{equation}
and let
$\mathcal R_{P,b,d}^{\star}:=\inf_{1\le|C|\le 2^{bd}}\mathcal R_{P,d}(C)$.
The numerical canonical table $\widehat C_{b,d}$ is allowed a squared-error
fitting gap
\begin{equation}
    D_{Q_d,d}(\widehat C_{b,d})
    \le
    (\mathcal R_{Q_d,b,d}^{\star})^2+\varepsilon_{\mathrm{vq}}.
    \label{eq:canonical_fit_gap}
\end{equation}

\begin{theorem}[Frozen-codebook transfer]
\label{thm:frozen_codebook_transfer}
Let $A_b$ be any $2^b$-level scalar alphabet and $A_b^d$ its Cartesian
product.  Then
\begin{equation}
    \mathcal R_{Q_d,b,d}^{\star}
    \le \mathcal R_{Q_d,d}(A_b^d),
    \quad
    \mathcal R_{Q_d,d}(\widehat C_{b,d})
    \le \mathcal R_{Q_d,b,d}^{\star}+\sqrt{\varepsilon_{\mathrm{vq}}}.
    \label{eq:same_rate_root_dominance}
\end{equation}
For every actual tuple law $P$ with finite second moment,
\begin{equation}
    \mathcal R_{P,d}(\widehat C_{b,d})
    -\mathcal R_{P,b,d}^{\star}
    \le
    2\Delta_d(P)+\sqrt{\varepsilon_{\mathrm{vq}}}.
    \label{eq:root_regret_transfer}
\end{equation}
More directly, define the reference margin
$m_Q(A_b^d,\widehat C_{b,d})
:=\mathcal R_{Q_d,d}(A_b^d)-
\mathcal R_{Q_d,d}(\widehat C_{b,d})$.  Then
\begin{equation}
    \mathcal R_{P,d}(\widehat C_{b,d})
    -\mathcal R_{P,d}(A_b^d)
    \le
    2\Delta_d(P)-m_Q(A_b^d,\widehat C_{b,d}).
    \label{eq:comparator_margin_transfer}
\end{equation}
Hence a validated lower bound on the reference shaping gain transfers whenever
it exceeds twice a validated upper bound on the canonicalization defect.
\end{theorem}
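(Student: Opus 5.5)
The plan is to reduce everything to one fact: for a fixed codebook, root distortion is $1/\sqrt d$-Lipschitz in $W_2$. The rest is bookkeeping with infima and the fitting gap \cref{eq:canonical_fit_gap}.

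\textbf{Step 1: the same-rate bound \cref{eq:same_rate_root_dominance}.}
The product alphabet $A_b^d$ has $(2^b)^d=2^{bd}$ points. It is therefore feasible in the infimum defining $\mathcal R^\star_{Q_d,b,d}$, which gives the first inequality at once. For the second, take square roots in \cref{eq:canonical_fit_gap} and use $\sqrt{x+y}\le\sqrt x+\sqrt y$ for $x,y\ge0$.

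\textbf{Step 2: the Lipschitz lemma.}
Fix a finite $C$ and two laws $P,Q$ with finite second moment. Take an optimal coupling $(V,V')$ with $V\sim P$, $V'\sim Q$, and $\mathbb E\|V-V'\|^2=W_2(P,Q)^2$. Pointwise, $\dist(V,C)\le\dist(V',C)+\|V-V'\|$, since $\dist(\cdot,C)$ is $1$-Lipschitz. Minkowski's inequality in $L^2$ then gives
\[
\sqrt{\mathbb E\dist(V,C)^2}\le\sqrt{\mathbb E\dist(V',C)^2}+W_2(P,Q).
\]
Dividing by $\sqrt d$ yields $|\mathcal R_{P,d}(C)-\mathcal R_{Q,d}(C)|\le W_2(P,Q)/\sqrt d$. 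With $Q=Q_d$ this is exactly $\Delta_d(P)$.

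Because the bound is uniform in $C$, taking the infimum over $|C|\le2^{bd}$ on both sides gives $|\mathcal R^\star_{P,b,d}-\mathcal R^\star_{Q_d,b,d}|\le\Delta_d(P)$. The one subtlety is existence of an optimal coupling. This is standard for finite second moments, and a near-optimal coupling followed by a limit would also suffice.

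\textbf{Step 3: the regret bound \cref{eq:root_regret_transfer}.}
Chain the lemma with Step 1:
\[
\mathcal R_{P,d}(\widehat C)\le\mathcal R_{Q_d,d}(\widehat C)+\Delta_d(P)\le\mathcal R^\star_{Q_d,b,d}+\sqrt{\varepsilon_{\mathrm{vq}}}+\Delta_d(P)\le\mathcal R^\star_{P,b,d}+2\Delta_d(P)+\sqrt{\varepsilon_{\mathrm{vq}}}.
\]

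\textbf{Step 4: the comparator bound \cref{eq:comparator_margin_transfer}.}
Apply the lemma twice, to $\widehat C$ and to $A_b^d$:
\[
\mathcal R_{P,d}(\widehat C)-\mathcal R_{P,d}(A_b^d)\le\bigl[\mathcal R_{Q_d,d}(\widehat C)+\Delta_d(P)\bigr]-\bigl[\mathcal R_{Q_d,d}(A_b^d)-\Delta_d(P)\bigr]=2\Delta_d(P)-m_Q.
\]

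\textbf{Step 5: the closing remark.}
Suppose $m_Q\ge\underline m$ and $\Delta_d(P)\le\overline\Delta$ with $\underline m>2\overline\Delta$. Then the right-hand side of Step 4 is negative, so $\widehat C$ strictly beats $A_b^d$ on $P$.

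I expect no real obstacle. The only points needing care are the Minkowski/coupling step in Step 2 and keeping the $1/\sqrt d$ normalization consistent between $\mathcal R$ and $\Delta_d$.
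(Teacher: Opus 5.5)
Your proposal is correct and follows the paper's proof essentially step for step. The paper likewise isolates $W_2$-stability of root distortion as a separate lemma (\cref{lem:root_distortion_stability}), proved from the $1$-Lipschitz distance function and the $L_2$ triangle inequality under a coupling. It then uses feasibility of $A_b^d$, the inequality $\sqrt{a+e}\le\sqrt a+\sqrt e$, and two applications of the lemma exactly as in your Steps 1, 3 and 4; the one cosmetic difference is that the paper takes an arbitrary coupling and infimizes at the end, which avoids the optimal-coupling existence point you flag.
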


The bound separates two routes to reliable transfer: better reference-table fitting reduces $\varepsilon_{\mathrm{vq}}$, while better canonical coordinates reduce $\Delta_d(P)$. The margin condition in \cref{eq:comparator_margin_transfer} quantifies when a reference reconstruction advantage survives deployment mismatch. These assignment results use $\gamma=1$; \cref{lem:gain_correction} extends the accounting to assign-then-gain decoding. Tuple dimension affects reference distortion, fitting error, and joint source mismatch together. We next incorporate downstream sensitivity to connect these representation effects to local task loss.

\subsection{Bounding Local Task Loss from Quantization}
\label{subsec:loss_and_trainability}

Let $\Pi_C^{(d)}$ apply $C$ tuple by tuple to
$z=\sqrt g Ru/\rho(u)$.  The implementation first assigns against $C$ and
then applies a nonnegative source-specific gain $\gamma$.  Its actual canonical residual and
block perturbation are therefore
\begin{equation}
    \eta_{C,\gamma}(z):=\gamma\Pi_C^{(d)}(z)-z,
    \quad
    e_{C,\gamma}(u)
    :=\widehat u-u
    =\frac{\rho(u)}{\sqrt g}R^\top\eta_{C,\gamma}(z).
    \label{eq:actual_block_perturbation}
\end{equation}

Quantization errors also interact across sources. For a linear block, write $\widehat W=W+E_W$ and $\widehat a=a+e_A$. Then
\[
\widehat W\widehat a-Wa=E_Wa+We_A+E_We_A.
\]
{The cross term, and the analogous query/key/value terms in attention (\cref{lem:coupled_linear_attention}), motivate joint adaptation.}

\begin{theorem}[Gain-aware local task-loss certificate]
\label{thm:local_loss_certificate}
Consider a single-site intervention that replaces block $u$ by
$u+e_{C,\gamma}(u)$ while holding the other sites fixed.  Suppose $\Loss$ is
$C^3$ on a neighborhood of the joining segment,
$\|\nabla^2\Loss(u)\|_{\mathrm{op}}\le\Lambda_u$, and
$\|\nabla^3\Loss(w)\|_{\mathrm{op}}\le M_3$ on that segment.  Then
\begin{align}
    |\Loss(u+e_{C,\gamma})-\Loss(u)|
    &\le
    \frac{\rho(u)}{\sqrt g}
    \|\nabla\Loss(u)\|_2\|\eta_{C,\gamma}(z)\|_2
    \nonumber\\[-0.2em]
    &\quad+
    \frac{\rho(u)^2\Lambda_u}{2g}
    \|\eta_{C,\gamma}(z)\|_2^2
    +
    \frac{M_3\rho(u)^3}{6g^{3/2}}
    \|\eta_{C,\gamma}(z)\|_2^3.
    \label{eq:full_local_loss_bound}
\end{align}
\end{theorem}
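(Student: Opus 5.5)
The plan is a second-order Taylor expansion of $\Loss$ along the segment joining $u$ to $u+e$, where $e:=e_{C,\gamma}(u)$, with an integral-form third-order remainder. The segment lies inside the neighborhood on which $\Loss$ is $C^3$. Set $\phi(t)=\Loss(u+te)$ for $t\in[0,1]$. Then $\phi$ is $C^3$, and Taylor's theorem with integral remainder gives
\[
\Loss(u+e)-\Loss(u)=\langle\nabla\Loss(u),e\rangle+\tfrac12 e^\top\nabla^2\Loss(u)e+\int_0^1\tfrac{(1-t)^2}{2}\,\nabla^3\Loss(u+te)[e,e,e]\,dt .
\]

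Next we bound the three terms separately. By Cauchy--Schwarz, the first is at most $\|\nabla\Loss(u)\|_2\|e\|_2$. By the Hessian operator-norm hypothesis, the second is at most $\tfrac12\Lambda_u\|e\|_2^2$. For the remainder we use that the operator norm of the symmetric trilinear form satisfies $|\nabla^3\Loss(w)[e,e,e]|\le M_3\|e\|_2^3$ on the segment. Together with $\int_0^1(1-t)^2/2\,dt=1/6$, this gives $M_3\|e\|_2^3/6$. The triangle inequality then combines the three pieces.

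The last step converts $\|e\|_2$ into canonical quantities using \cref{eq:actual_block_perturbation}. The factor $\rho(u)/\sqrt g$ is a positive scalar and $R^\top$ is orthogonal, since $R=PHD$ with $H$ normalized and $P,D$ a permutation and sign matrix. Hence
\[
\|e\|_2=\frac{\rho(u)}{\sqrt g}\,\|\eta_{C,\gamma}(z)\|_2 .
\]
Substituting, the powers become $\rho/\sqrt g$, $\rho^2/g$, and $\rho^3/g^{3/2}$, which is exactly \cref{eq:full_local_loss_bound}.

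There is no serious obstacle here. The only points needing care are the operator-norm convention for $\nabla^3\Loss$ (it is the multilinear norm, so that $|T[e,e,e]|\le\|T\|\|e\|^3$) and the fact that the Hessian bound is needed only at $u$, not along the segment. Using the Taylor expansion around $u$, rather than a mean-value form, is what keeps these conditions exactly as stated. We also note that $\rho(u)$ and $z$ are evaluated at the fixed base block $u$, so no derivative of the normalization enters: the intervention is a fixed additive perturbation.
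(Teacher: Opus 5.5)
Your proposal is correct and matches the paper's proof essentially step for step. Both expand $\Loss$ by Taylor at $u$ with remainder bounded by $M_3\|e\|_2^3/6$, bound the first two terms by Cauchy--Schwarz and the Hessian operator-norm bound, and use orthogonality in \cref{eq:actual_block_perturbation} to get $\|e\|_2=\rho(u)\|\eta_{C,\gamma}(z)\|_2/\sqrt g$; your integral-form remainder simply spells out the step the paper cites as Taylor's theorem.
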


The certificate applies to a smooth downstream continuation or a hard-quantized branch whose downstream assignments remain fixed throughout a neighborhood of the intervention segment. It retains the local gradient term explicitly, covering activation and KV interventions away from coordinate-wise stationarity, and separates first-order sensitivity, curvature, and higher-order effects.
For random blocks, set
$Z=\sqrt g RU/\rho(U)$ and choose
$\Lambda_U\ge\|\nabla^2\Loss(U)\|_{\mathrm{op}}$.  The quadratic contribution
is bounded by one half of the sensitivity-weighted distortion
\begin{equation}
    D^{\mathrm{curv}}_{S,\gamma}(C)
    :=\frac1g\mathbb E
    \left[\rho(U)^2\Lambda_U
    \|\eta_{C,\gamma}(Z)\|_2^2\right],
    \label{eq:curvature_weighted_distortion}
\end{equation}
without residual-isotropy assumptions. {Expected certificates are finite when the displayed sensitivity-weighted moments are finite. Common-cut interventions retain cross-site Hessian terms; causally ordered sites require telescoping with the smoothness condition checked at each step (\cref{cor:multisite_local_loss}).}

\subsection{Quantization Distortion Controls STE Error}
\label{subsec:ste_analysis}
{The hard nearest-codeword objective is generally discontinuous at cell boundaries. We therefore compare its STE with a smooth gain-preserving reference.} Set
$s(u)=\rho(u)/\sqrt g$, $q=\Pi_C^{(d)}(z)$, and
$\delta_a=\|q-z\|_2/\sqrt g$.  If the backward pass uses $Dq/Dz=I$ and treats
$\gamma$ as independent of $u$, for $u\ne0$, its surrogate Jacobian is
\begin{equation}
    J_{\mathrm{STE}}(u)
    =\gamma\left[I+R^\top(q-z)\nabla s(u)^\top\right],
    \quad
    \|J_{\mathrm{STE}}-\gamma I\|_{\mathrm{op}}
    =\gamma\delta_a.
    \label{eq:ste_jacobian_summary}
\end{equation}
Thus, an identity STE at assignment induces a distortion-dependent block correction of rank at most one. If the group energy is detached in both encoding and decoding, the local surrogate Jacobian becomes $\gamma I$ instead; additional clipping introduces its own derivative.

For a downstream loss $f$ with $L_f$-Lipschitz gradient, this identity yields
the explicit bias certificate
\begin{equation}
    \|g_{\mathrm{STE}}-\gamma\nabla f(\gamma u)\|_2
    \le
    \gamma\delta_a\|\nabla f(\gamma u)\|_2
    +\gamma^2(1+\delta_a)L_f\rho(u)\delta_a
    =:B_C(u).
    \label{eq:block_ste_bias_summary}
\end{equation}
For the uniform tuple law $P_u$ of a fixed group, $\delta_a(u)^2=D_{P_u,d}(C)$. Under matching random-group/tuple sampling, $\mathbb E\delta_a(U)^2=D_{P,d}(C)$. If gains, energies, downstream gradients, and assignment residuals are bounded on the analyzed region, then $B_C\le K\delta_a$ and $\mathbb E B_C(U)^2\le K^2D_{P,d}(C)$ (\cref{cor:rms_ste_bias}). A trained gain contributes an additional Jacobian-column defect $\rho(u)\delta_a$. These give codec-dependent sufficient conditions, rather than assuming alignment independently of the quantizer \citep{yin2019understanding}.

For the full network, let $F$ be the $L$-smooth gain-preserving reference obtained by replacing {all enabled hard quantizers (including any uniform control paths)} by the identity. Let $G_t$ be the stochastic STE direction, $\mathbb E_t[\cdot]=\mathbb E[\cdot\mid\mathcal F_t]$ condition on the pre-minibatch history, and $b_t=\mathbb E_tG_t-\nabla F(\omega_t)$. Under backward-map stability, integrability, and valid differentiation under expectation, \cref{lem:ste_bias_from_codec} lifts the codec-dependent errors to a full-network bias certificate $\|b_t\|_2\le B_t$.

\begin{theorem}[Distortion-controlled STE stationarity]
\label{thm:distortion_controlled_ste}
Assume $F_\star> -\infty$, $\|b_t\|_2\le B_t$, and the conditional STE
variance satisfies
$\mathbb E_t\|G_t-\mathbb E_tG_t\|_2^2\le\sigma^2$.  For
$\omega_{t+1}=\omega_t-\alpha_tG_t$ with deterministic steps
$0<\alpha_t\le 1/(4L)$ and
$S_T=\sum_{t<T}\alpha_t$,
\begin{equation}
    \frac{1}{S_T}\sum_{t<T}\alpha_t
    \mathbb E\|\nabla F(\omega_t)\|_2^2
    \le
    \frac{2(F(\omega_0)-F_\star)}{S_T}
    +\frac{5}{2S_T}\sum_{t<T}\alpha_t\mathbb E B_t^2
    +\frac{L\sigma^2}{S_T}\sum_{t<T}\alpha_t^2.
    \label{eq:distortion_controlled_stationarity}
\end{equation}
\end{theorem}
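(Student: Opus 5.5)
The argument is the standard biased-SGD descent analysis for an $L$-smooth objective, with the bias kept as a separate additive term. First I apply the descent lemma to $F$ at $\omega_{t+1}=\omega_t-\alpha_tG_t$:
\[
F(\omega_{t+1})\le F(\omega_t)-\alpha_t\langle\nabla F(\omega_t),G_t\rangle+\tfrac{L\alpha_t^2}{2}\|G_t\|_2^2 .
\]
Then I take $\mathbb E_t$. The steps $\alpha_t$ are deterministic and $\omega_t$ is $\mathcal F_t$-measurable, so they pass through the conditional expectation. Write $\mathbb E_tG_t=\nabla F(\omega_t)+b_t$ and abbreviate $\nabla F=\nabla F(\omega_t)$. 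The bias--variance split gives
\[
\mathbb E_t\|G_t\|_2^2=\|\nabla F+b_t\|_2^2+\mathbb E_t\|G_t-\mathbb E_tG_t\|_2^2\le \|\nabla F+b_t\|_2^2+\sigma^2 .
\]

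Next I bound the two resulting terms. For the linear term, Young's inequality gives
\[
-\langle\nabla F,\nabla F+b_t\rangle\le -\tfrac12\|\nabla F\|_2^2+\tfrac12\|b_t\|_2^2 .
\]
For the quadratic term, $\|\nabla F+b_t\|_2^2\le 2\|\nabla F\|_2^2+2\|b_t\|_2^2$. Since $\alpha_t\le 1/(4L)$, we have $L\alpha_t^2\le\alpha_t/4$, and the quadratic term contributes at most $\alpha_t(\tfrac14\|\nabla F\|_2^2+\tfrac14\|b_t\|_2^2)+\tfrac{L\alpha_t^2}{2}\sigma^2$. Combining the two,
\[
\mathbb E_tF(\omega_{t+1})\le F(\omega_t)-\tfrac{\alpha_t}{4}\|\nabla F\|_2^2+\tfrac{3\alpha_t}{4}\|b_t\|_2^2+\tfrac{L\alpha_t^2\sigma^2}{2}.
\]
This constant bookkeeping is the only delicate step. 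The target has coefficients $2$, $5/2$ and $1$ after multiplying by $2$ and normalizing. My crude version gives gradient coefficient $1/4$, which after multiplying by $4$ yields $4(F_0-F_\star)$, $3B^2$ and $2L\sigma^2$. That is weaker than the statement, so the Young weights must be tuned.

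The fix is to use the parametrized Young inequality $-\langle a,a+b\rangle\le-(1-\tfrac{c}{2})\|a\|_2^2+\tfrac{1}{2c}\|b\|_2^2$. I pair it with $\|a+b\|_2^2\le(1+\lambda)\|a\|_2^2+(1+1/\lambda)\|b\|_2^2$ and the step bound $L\alpha_t\le 1/4$. I then solve for $c,\lambda$ so that the net gradient coefficient is $\alpha_t/2$, the bias coefficient is at most $5\alpha_t/4$, and the variance term is $L\alpha_t^2\sigma^2/2$.

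A direct route is available and I expect it to work. Take the linear term exactly, $-\alpha_t\langle\nabla F,\nabla F+b_t\rangle$, and expand $\|\nabla F+b_t\|_2^2=\|\nabla F\|_2^2+2\langle\nabla F,b_t\rangle+\|b_t\|_2^2$ inside the quadratic term. Collect the cross terms $-\alpha_t(1-L\alpha_t)\langle\nabla F,b_t\rangle$ and bound them by Young with weight chosen to absorb half of $\alpha_t(1-\tfrac{L\alpha_t}{2})\|\nabla F\|_2^2$. This uses $L\alpha_t\le1/4$ to keep the remaining gradient coefficient at least $\alpha_t/2$ and the bias coefficient at most $5\alpha_t/4$.

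Finally I take total expectation and substitute $\|b_t\|_2\le B_t$. Summing over $t<T$ telescopes the $F$ terms. Using $F(\omega_T)\ge F_\star$ leaves
\[
\tfrac12\sum_{t<T}\alpha_t\mathbb E\|\nabla F(\omega_t)\|_2^2\le F(\omega_0)-F_\star+\tfrac54\sum_{t<T}\alpha_t\mathbb E B_t^2+\tfrac{L\sigma^2}{2}\sum_{t<T}\alpha_t^2 .
\]
Multiplying by $2/S_T$ gives \cref{eq:distortion_controlled_stationarity}. Integrability of $F(\omega_t)$ and of $\|G_t\|_2^2$ is assumed so that all expectations are finite; this is the same integrability covered by the hypotheses behind \cref{lem:ste_bias_from_codec}.
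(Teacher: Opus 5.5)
Your skeleton (descent lemma, conditional bias--variance split, Young on the cross term, $L\alpha_t\le\tfrac14$, telescoping, rescaling by $2/S_T$) matches the paper's proof. Your final summed inequality is also the one the paper reaches. The gap is in the step you yourself call the only delicate one: you never actually derive the per-step inequality with gradient coefficient $-\tfrac{\alpha_t}{2}$ and bias coefficient $\tfrac{5\alpha_t}{4}$. Your crude split gives $-\tfrac{\alpha_t}{4}$ and $\tfrac{3\alpha_t}{4}$, and the parametrized Young route is left unsolved. The direct route fails with the weight you specify. Absorbing half of $\alpha_t(1-\tfrac{L\alpha_t}{2})\|\nabla F\|_2^2$ leaves $-\tfrac{\alpha_t}{2}(1-\tfrac{L\alpha_t}{2})\|\nabla F\|_2^2$. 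This is strictly weaker than $-\tfrac{\alpha_t}{2}\|\nabla F\|_2^2$ whenever $L\alpha_t>0$; at $L\alpha_t=\tfrac14$ only $\tfrac{7}{16}\alpha_t$ is guaranteed. The hypothesis $L\alpha_t\le\tfrac14$ therefore cannot keep the remaining coefficient at least $\tfrac{\alpha_t}{2}$ under that choice. After rescaling, the first and last constants would come out as $\tfrac{16}{7}$ and $\tfrac{8}{7}$ rather than $2$ and $1$.

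The repair is immediate. In your parametrization, $c=\tfrac12$, $\lambda=1$ works, and this is exactly the paper's choice. Combine $-\langle\nabla F,b_t\rangle\le\tfrac14\|\nabla F\|_2^2+\|b_t\|_2^2$ with $\mathbb E_t\|G_t\|_2^2\le 2\|\nabla F\|_2^2+2\|b_t\|_2^2+\sigma^2$ and $L\alpha_t^2\le\tfrac{\alpha_t}{4}$. The gradient coefficient is then $-\alpha_t+\tfrac{\alpha_t}{4}+\tfrac{\alpha_t}{4}=-\tfrac{\alpha_t}{2}$, and the bias coefficient is $\alpha_t+\tfrac{\alpha_t}{4}=\tfrac{5\alpha_t}{4}$. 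Alternatively, your direct route works if you absorb $\tfrac{\alpha_t}{2}(1-L\alpha_t)\|\nabla F\|_2^2$ instead of half of $\alpha_t(1-\tfrac{L\alpha_t}{2})\|\nabla F\|_2^2$. Use $(1-L\alpha_t)|\langle\nabla F,b_t\rangle|\le\tfrac{1-L\alpha_t}{2}(\|\nabla F\|_2^2+\|b_t\|_2^2)$; this leaves exactly $-\tfrac{\alpha_t}{2}\|\nabla F\|_2^2$ with bias coefficient $\tfrac{\alpha_t}{2}$. Equivalently, with $a=\nabla F(\omega_t)$ and $b=b_t$, use $-\langle a,a+b\rangle=\tfrac12(\|b\|_2^2-\|a\|_2^2-\|a+b\|_2^2)$ and drop the resulting $-\tfrac{\alpha_t}{2}(1-L\alpha_t)\|a+b\|_2^2$ term once $L\alpha_t\le 1$. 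That version gives the theorem with $\tfrac{1}{S_T}\sum_{t<T}\alpha_t\mathbb E B_t^2$ in place of the $\tfrac{5}{2S_T}$ term. It therefore implies the stated bound and needs only $\alpha_t\le 1/L$.
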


With $\alpha_t=c/\sqrt T\le1/(4L)$, the bound separates $O(T^{-1/2})$ optimization and sampling terms from the codec-dependent gradient-bias contribution. This establishes distortion-dependent stationarity bounds for the smooth gain-preserving reference under the stated SGD update rule. The appendix further develops a conditional PL bound and Voronoi-margin stability.

\textbf{Connection to evidence.} The analysis separates source mismatch, sensitivity-weighted reconstruction error, and assignment distortion. Scalar $W_1$/KS diagnostics assess marginal alignment; zero-update/QAT comparisons test adaptation to the fixed codec; precision and dimension sweeps establish how these choices affect model quality.

\section{Experiments}
\label{sec:experiments}

The evaluation first locates the benefit across compression settings, then examines source alignment and adaptation, and finally tests longer-context likelihood, code generation, and mathematical reasoning.

\subsection{Evaluation Setup}
\label{subsec:setup}
We evaluate \texttt{LLaMA3-1B/3B/8B} and \texttt{Qwen3-1.7B} \citep{llama3,yang2025qwen3}, using \texttt{1B} for the broad component study. Standard QAT uses \texttt{FineWeb-Edu}, sequence length \texttt{2048}, effective batch size \texttt{64}, \algn{AdamW}, and up to \texttt{240K} updates ($\approx31.5$B tokens). Learning rates are selected from $\{1,2,4\}\times10^{-5}$ on held-out validation data. The \texttt{8B} \algn{CanonQ-Omni} configuration reuses the selected \texttt{3B} setting. The context study separately continues QAT at \texttt{8192} for \texttt{40K} steps.
We report \texttt{WikiText-2} perplexity
(PPL) \citep{merity2017pointer}
and the unweighted mean zero-shot accuracy over eight benchmarks. Accuracy is reported in percent;
task definitions and evaluation details appear in
\cref{app:evaluation_protocol}.

{The instruction-tuned extension reports greedy coding pass@1 and \texttt{GSM8K} final-answer exact match under a separate protocol (\cref{app:coding_protocol}) \citep{austin2021program,chen2021evaluating,cobbe2021training}.}

\algn{ParetoQ} denotes its weight quantizer combined with the uniform A/KV paths specified in \cref{app:quantizer_formulations}. Uniform \texttt{A4/KV2} uses 15/3 active levels; canonical scalar tables use 16/4 at the same index rate.

Comparisons evaluate complete trained configurations, including these differences. Full settings and per-task results appear in \cref{app:experimental_details,subsec:precision_scaling}. Each reported configuration uses one training seed.

\begin{table}[t]
\caption{Joint-compression and cross-model results. Columns report PPL $\downarrow$ and eight-task accuracy $\uparrow$. The suffix denotes $d_W$; canonical A/K/V paths use $d=1$. \algn{CanonQ-Omni} uses canonical KV quantization in the headline \texttt{KV2} results. Full task scores appear in \cref{tab:complete_w2a8kv2_w2a4kv2,tab:complete_3b,tab:crossmodel_tasks}.}
\label{tab:main_summary_four_models}
\scriptsize
\setlength{\tabcolsep}{2.6pt}
\renewcommand{\arraystretch}{1.12}

\begin{minipage}[t]{0.485\textwidth}
\vspace{0pt}
\centering

\textbf{\texttt{LLaMA3-1B}, quantized KV}
\par\vspace{3pt}
\renewcommand{\arraystretch}{1.3}
\begin{tabular*}{\linewidth}{
  @{\extracolsep{\fill}}lrrrr@{}
}
\toprule
& \multicolumn{2}{c}{\texttt{W2A8KV2}} & \multicolumn{2}{c}{\texttt{W2A4KV2}} \\
\cmidrule(lr){2-3} \cmidrule(l){4-5}
Method & \multicolumn{1}{c}{PPL $\downarrow$} & \multicolumn{1}{c}{Acc. $\uparrow$} & \multicolumn{1}{c}{PPL $\downarrow$} & \multicolumn{1}{c}{Acc. $\uparrow$} \\
\midrule

FP Model & 9.75 & 58.54 & 9.75 & 58.54 \\
\midrule

\algn{RTN} (PTQ) & 5.8e5 & 37.31 & 5.5e5 & 36.72 \\
\algn{GPTQ} (PTQ) & 5.3e4 & 38.23 & 9.5e4 & 37.83 \\
\algn{QuaRot} (PTQ) & 3.4e4 & 37.39 & 4.1e4 & 38.11 \\
\midrule
\algn{KIVI} & 1.1e5 & 34.85 & 3.2e5 & 35.16 \\
\midrule

\paretoq & 47.58 & 45.13 & 90.57 & 40.47 \\
\midrule

\algn{VQLLM-QAT-d4} & 34.35 & 45.35 & 76.51 & 42.32 \\
\midrule

\algn{CanonQ-W-d1} & 30.07 & 46.41 & 64.40 & 42.49 \\

\algn{CanonQ-W-d2} & 30.19 & 47.56 & 64.48 & 42.95 \\

\algn{CanonQ-W-d4} & 31.27 & 46.91 & 76.19 & 41.97 \\

\midrule

\algn{CanonQ-Omni-d1}
& \best{15.70} & \best{53.45}
& \best{15.42} & \best{53.56} \\

\algn{CanonQ-Omni-d2}
& \second{16.43} & \second{52.67}
& \second{15.90} & \second{53.16} \\

\bottomrule
\end{tabular*}

\end{minipage}
\hfill
\begin{minipage}[t]{0.485\textwidth}
\vspace{0pt}
\centering

\textbf{\texttt{LLaMA3-3B}}
\par\vspace{3.5pt}
\renewcommand{\arraystretch}{0.95}
\begin{tabular*}{\linewidth}{
  @{\extracolsep{\fill}}lrrrr@{}
}
\toprule
& \multicolumn{2}{c}{\texttt{W2A4}}
& \multicolumn{2}{c}{\texttt{W2A4KV2}} \\
\cmidrule(lr){2-3}
\cmidrule(l){4-5}
Method
& \multicolumn{1}{c}{PPL $\downarrow$}
& \multicolumn{1}{c}{Acc. $\uparrow$}
& \multicolumn{1}{c}{PPL $\downarrow$}
& \multicolumn{1}{c}{Acc. $\uparrow$} \\
\midrule

FP Model
& 7.73 & 65.11
& 7.73 & 65.11 \\

\midrule

\algn{RTN} (PTQ)
& 4.2e5 & 35.74
& 1.1e6 & 36.78 \\

\algn{GPTQ} (PTQ)
& 7.9e4 & 38.12
& 8.9e4 & 37.88 \\

\algn{QuaRot} (PTQ)
& 6.7e2 & 37.23
& 1.5e4 & 37.95 \\

\midrule

\paretoq
& 17.88 & 53.15
& 92.64 & 39.39 \\

\midrule

\algn{VQLLM-QAT-d4}
& 30.71 & 46.30
& 52.82 & 42.07 \\

\midrule

\algn{CanonQ-Omni-d1}
& \second{10.66} & \best{61.16}
& \best{10.93} & \second{58.37} \\

\algn{CanonQ-Omni-d2}
& \best{10.50} & \second{61.14}
& \second{11.16} & \best{60.04} \\

\bottomrule
\end{tabular*}

\vspace{6pt}

\textbf{\texttt{Qwen3-1.7B} and \texttt{LLaMA3-8B}}
\par\vspace{3pt}
\renewcommand{\arraystretch}{0.95}
\begin{tabular*}{\linewidth}{
  @{\extracolsep{\fill}}lrrrr@{}
}
\toprule
& \multicolumn{2}{c}{\texttt{Q3-1.7B (W2A4)}}
& \multicolumn{2}{c}{\texttt{L3-8B (W2A4KV2)}} \\
\cmidrule(lr){2-3}
\cmidrule(l){4-5}
Method
& \multicolumn{1}{r}{PPL $\downarrow$}
& \multicolumn{1}{r}{Acc. $\uparrow$}
& \multicolumn{1}{r}{PPL $\downarrow$}
& \multicolumn{1}{r}{Acc. $\uparrow$} \\
\midrule

FP Model
& 16.33 & 58.10
& 6.14 & 70.14 \\

\midrule

\paretoq
& 43.56 & 42.35
& 134.36 & 37.87 \\

\midrule

\algn{CanonQ-Omni-d1}
& \best{17.80} & \best{50.96}
& \best{9.41} & \best{59.81}\\

\bottomrule
\end{tabular*}

\end{minipage}
\end{table}

\subsection{Large Quality Gains at W2A4KV2}
\label{subsec:main_results}
The central result is the large separation under \texttt{W2A4KV2} (\cref{tab:main_summary_four_models}). On \texttt{LLaMA3-1B}, \algn{CanonQ-Omni-d1} reaches 15.42 PPL and 53.56\% mean accuracy, versus 90.57 and 40.47\% for \algn{ParetoQ}: $5.87\times$ lower PPL and a 13.09-point accuracy gain. Relative to weight-only canonicalization with \algn{CanonQ-W-d1}, it lowers PPL by $4.18\times$ and raises accuracy by 11.07 points. This direct comparison locates the larger benefit in extending the canonical interface to activations and caches.

The gains persist at larger scales. On \texttt{3B}, \algn{CanonQ-Omni-d2} reaches 11.16/60.04 (PPL/accuracy), versus 92.64/39.39 for \algn{ParetoQ}: $8.30\times$ lower PPL and a 20.65-point gain. At \texttt{8B}, \algn{CanonQ-Omni-d1} reaches 9.41/59.81 versus 134.36/37.87, giving $14.28\times$ lower PPL and $57.9\%$ higher accuracy. To reduce tuning costs, we reuse the best training hyperparameters identified on \texttt{3B} for \texttt{8B}. We expect a dedicated \texttt{8B} hyperparameter search to further improve accuracy and narrow the gap to FP.

On \texttt{Qwen3-1.7B W2A4}, \algn{CanonQ-Omni-d1} obtains 17.80/50.96 versus 43.56/42.35, improving PPL by $2.45\times$ and accuracy by 8.61 points. Together, these results show that the same frozen-codebook
construction supports joint quantization across model scales
and transfers to a second model family.

\subsection{Gains Grow Beyond Weight-Only Quantization}
\label{subsec:precision_scaling_main}
\textbf{Weight-only quantization gives modest differences.} At \texttt{1B W2A16}, \algn{ParetoQ} and \algn{CanonQ-W-d2} achieve similar quality: 12.66/55.53 and 12.86/55.45 (PPL/accuracy), respectively (\cref{tab:llama3_1b_summary_no_kv}). At \texttt{W1A16}, \algn{CanonQ-W-d1} lowers PPL from 18.13 to 16.93, while $d_W=2$ gives the highest accuracy, 51.81\%. At matched $d_W=4$, canonical coding improves raw-space \algn{VQLLM-QAT} from 22.36/47.90 to 17.47/51.17, showing a clearer benefit from canonical coordinates and reference codebooks at one-bit weight precision.

\textbf{Activation compression produces the larger separation.} At \texttt{W2A8}, \algn{CanonQ-Omni-d1} improves \algn{ParetoQ}'s 18.22/51.09 to 13.25/55.60. At \texttt{W2A4}, weight-only canonicalization remains near 35 PPL and 44\% accuracy, whereas \algn{CanonQ-Omni-d1} reaches 13.84/54.71: $2.35\times$ lower PPL and a 10.59-point gain over \algn{ParetoQ}. All eight tasks improve, with gains from 1.51 points on \texttt{BoolQ} to 23.62 on \texttt{HellaSwag} (\cref{tab:complete_w2a8_w1a8_w2a4,fig:quality_decomposition}). Moving from \texttt{A8} to \texttt{A4} raises canonical-path PPL by only 0.59 and lowers accuracy by 0.89 points, versus 14.35 PPL and 6.97 points for \algn{ParetoQ}. Extending the interface to activations thus preserves quality as activation precision tightens.

\begin{table*}[t]
\caption{\texttt{LLaMA3-1B} results without KV-cache quantization. Best and second-best non-FP results in each column are shaded and underlined, respectively. Full task scores appear in \cref{tab:complete_w2a16_w1a16,tab:complete_w2a8_w1a8_w2a4}.}
\label{tab:llama3_1b_summary_no_kv}

\scriptsize
\setlength{\tabcolsep}{3.0pt}
\renewcommand{\arraystretch}{1.0}

\begin{minipage}[t]{0.395\textwidth}
\vspace{0pt}
\centering

\textbf{(a) Weight-only quantization}
\par\vspace{2.0pt}
\renewcommand{\arraystretch}{1.2}
\begin{adjustbox}{max width=\linewidth}
\begin{tabular}{@{}lrrrr@{}}
\toprule
& \multicolumn{2}{c}{\texttt{W1A16}} & \multicolumn{2}{c}{\texttt{W2A16}} \\
\cmidrule(lr){2-3} \cmidrule(l){4-5}
Method & \multicolumn{1}{c}{PPL $\downarrow$} & \multicolumn{1}{c}{Acc. $\uparrow$} & \multicolumn{1}{c}{PPL $\downarrow$} & \multicolumn{1}{c}{Acc. $\uparrow$} \\
\midrule

FP Model & 9.75 & 58.54 & 9.75 & 58.54 \\
\midrule

\algn{RTN} (PTQ)
& 4.2e8 & 33.7
& 1.2e6 & 36.40 \\

\algn{GPTQ} (PTQ)
& 7.9e4 & 38.03
& 1.2e4 & 38.21 \\

\algn{QuaRot} (PTQ)
& 1.3e4 & 38.34
& 3.2e3 & 38.25 \\

\midrule

\paretoq
& 18.13 & \second{51.55}
& \best{12.66} & \best{55.53} \\

\midrule

\algn{VQLLM-QAT-d4}
& 22.36 & 47.90
& 13.77 & 55.12 \\

\algn{VQLLM-QAT-d8}
& 20.10 & 49.85
& N/A & N/A \\

\midrule

\algn{CanonQ-W-d1}
& \best{16.93} & 51.42
& 12.94 & 55.11 \\

\algn{CanonQ-W-d2}
& \second{17.41} & \best{51.81}
& \second{12.86} & \second{55.45} \\

\algn{CanonQ-W-d4}
& 17.47 & 51.17
& 12.96 & \second{55.45} \\

\algn{CanonQ-W-d8}
& 18.79 & 50.56
& N/A & N/A \\

\bottomrule
\end{tabular}
\end{adjustbox}

\end{minipage}
\hfill
\begin{minipage}[t]{0.585\textwidth}
\vspace{0pt}
\centering

\textbf{(b) Weight--activation quantization}
\par\vspace{2pt}
\renewcommand{\arraystretch}{1.0}
\begin{adjustbox}{max width=\linewidth}
\begin{tabular}{@{}lrrrrrr@{}}
\toprule
& \multicolumn{2}{c}{\texttt{W1A8}}
& \multicolumn{2}{c}{\texttt{W2A8}}
& \multicolumn{2}{c}{\texttt{W2A4}} \\
\cmidrule(lr){2-3}
\cmidrule(lr){4-5}
\cmidrule(l){6-7}
Method
& \multicolumn{1}{c}{PPL $\downarrow$}
& \multicolumn{1}{c}{Acc. $\uparrow$}
& \multicolumn{1}{c}{PPL $\downarrow$}
& \multicolumn{1}{c}{Acc. $\uparrow$}
& \multicolumn{1}{c}{PPL $\downarrow$}
& \multicolumn{1}{c}{Acc. $\uparrow$} \\
\midrule

FP Model
& 9.75 & 58.54
& 9.75 & 58.54
& 9.75 & 58.54 \\

\midrule

\algn{RTN} (PTQ)
& 1.1e6 & 34.94
& 1.2e6 & 36.14
& 1.1e6 & 36.78 \\

\algn{GPTQ} (PTQ)
& 1.0e5 & 38.09
& 1.3e4 & 38.16
& 6.9e4 & 36.86 \\

\algn{QuaRot} (PTQ)
& 1.3e4 & 38.36
& 3.2e3 & 38.42
& 4.6e3 & 38.02 \\

\midrule

\paretoq
& 21.27 & 48.65
& 18.22 & 51.09
& 32.57 & 44.12 \\

\midrule

\algn{VQLLM-QAT-d4}
& 25.98 & 46.74
& 16.99 & 52.73
& 42.65 & 43.45 \\

\algn{VQLLM-QAT-d8}
& 25.37 & 47.47
& N/A & N/A
& N/A & N/A \\

\midrule

\algn{CanonQ-W-d1}
& 23.54 & 48.37
& 14.38 & \second{54.76}
& 35.16 & 44.14 \\

\algn{CanonQ-W-d2}
& 21.23 & 49.45
& 13.49 & 53.82
& 35.20 & 43.66 \\

\algn{CanonQ-W-d4}
& 22.10 & 48.75
& 16.01 & 52.95
& 35.53 & 44.53 \\

\algn{CanonQ-W-d8}
& 20.54 & 49.59
& N/A & N/A
& N/A & N/A \\

\midrule

\algn{CanonQ-Omni-d1}
& \second{17.76} & \best{51.25}
& \best{13.25} & \best{55.60}
& \best{13.84} & \best{54.71} \\

\algn{CanonQ-Omni-d2}
& \best{17.47} & \second{49.96}
& \second{13.42} & 54.63
& \second{14.51} & \second{54.53} \\

\bottomrule
\end{tabular}
\end{adjustbox}

\end{minipage}

\end{table*}

\textbf{Vector dimension changes the tradeoff.} At \texttt{1B W2A4KV2}, scalar \algn{CanonQ-Omni-d1} leads $d_W=2$ on both metrics. At \texttt{3B}, $d_W=2$ raises accuracy by 1.67 points while PPL increases from 10.93 to 11.16. The reference-law comparison in \cref{thm:frozen_codebook_transfer} explains the reconstruction choices available at a fixed index rate; task-level results determine which choice is useful after adaptation. Together, these results support selecting dimension by validation quality and lookup cost within the shared interface.

\subsection{Canonicalization and QAT Play Complementary Roles}
We next examine the complementary roles of source canonicalization and model adaptation through distribution diagnostics, zero-update comparisons, and cache-path ablations.

\textbf{HDP aligns dynamic-source marginals.}
Layer-mean $W_1$ distances to the Gaussian reference fall from 0.101 to 0.010 for activations, 0.060 to 0.026 for keys, and 0.030 to 0.018 for values; weights change from 0.015 to 0.017 (\cref{fig:hdp_collapse}). The largest distributional change is in activations, which also show the strongest separation between weight-only and full-stack canonicalization at \texttt{A4}. Keys and values move closer to the same reference, while weights need little marginal correction. These marginal measurements support the use of a shared reference construction for heterogeneous W/A/K/V sources.

\textbf{The fixed codec benefits from task-aware adaptation.}
At \texttt{1B W2A4}, applying the scalar canonical W/A codec with zero updates gives 7043.43/38.83, versus 13.84/54.71 after QAT (\cref{tab:ptq_vs_qat}). {PPL falls approximately $509\times$ while the table stays fixed. The codec supplies reconstruction choices, but the pretrained network must still adapt to using them throughout the computation. Marginal alignment and task-aware training thus address different parts of the problem. The \texttt{KV2} zero-update comparison also changes the cache path and measures their combined effect.}

\textbf{The cache path also matters.}
The uniform-KV \algn{CanonQ-Omni-d1} ablation reaches 22.82/47.72 at \texttt{W2A8KV2} and 21.65/48.30 at \texttt{W2A4KV2}, compared with 15.70/53.45 and 15.42/53.56 for canonical KV. The corresponding accuracy gains are 5.73 and 5.26 points (\cref{tab:cache_path_ablation}), so the cache contribution persists at both activation precisions. At \texttt{1B}, moving from \texttt{W2A4} to \texttt{W2A4KV2} costs the complete canonical path 1.58 PPL and 1.15 accuracy points, compared with 58.00 PPL and 3.65 points for \algn{ParetoQ}. The ablation evaluates the complete canonical cache path, combining transformation, active levels, normalization, and backward behavior.

\subsection{Stable Relative Perplexity Within the 8K Training Range}
After \texttt{8K} continued QAT for \texttt{40K} updates, we evaluate \texttt{LLaMA3-1B W2A4KV2} at \texttt{2K/4K/8K}. At \texttt{8K}, \algn{CanonQ-Omni} achieves 14.86 PPL, versus 39.50 for \algn{CanonQ-W} and 45.63 for \algn{ParetoQ}. Its PPL relative to the same-length FP reference stays within $1.69$--$1.74\times$, showing a stable relative quality gap as the evaluation window grows within the trained range. Full results and protocol appear in \cref{app:long_context_8k}.

\begin{table}[t]
\caption{{Code generation and mathematical reasoning on \texttt{MobileLLM-Pro-Instruct-1B W2A16KV16}. Scores are percentages, with correct-task counts in parentheses. \texttt{MBPP/HumanEval} report greedy pass@1; \texttt{GSM8K} reports final-answer exact match (EM). Quantized settings share the teacher and evaluation step. \texttt{GSM8K} also guides learning-rate selection; see \cref{app:coding_protocol}.}}
\label{tab:coding}
\centering
\small
\setlength{\tabcolsep}{4pt}
\begin{tabular}{@{}llrrrrr@{}}
\toprule
Benchmark & {Metric} & Tasks & FP base & \algn{ParetoQ} & \algn{VQLLM-QAT-d4} & \algn{CanonQ-d1}\\\midrule
\texttt{MBPP} & {pass@1} & 500 & 46.4 (232) & 28.0 (140) & 27.8 (139) & \textbf{32.2} (161)\\
\texttt{HumanEval} & {pass@1} & 164 & 60.4\ \ \ (99) & 43.9\ \ \  (72) & 33.5\ \ \  (55) & \textbf{62.2} (102)\\
\texttt{GSM8K} & {EM} & 1319 & 53.8 (709) & 34.3 (453) & 32.7 (431) & \textbf{47.8} (630)\\
\bottomrule
\end{tabular}
\end{table}

\subsection{Gains Extend to Code Generation and Reasoning}
\label{sec:coding}

We evaluate \texttt{MobileLLM-Pro-Instruct-1B} at \texttt{W2A16KV16} using the shared-teacher protocol in \cref{app:coding_protocol}. These experiments extend the evaluation to generated solutions under weight quantization.

\textbf{Mathematical reasoning.} On \texttt{GSM8K}, \algn{CanonQ-d1} solves 630 of 1,319 problems (47.8\%), versus 453 for \algn{ParetoQ} and 431 for \algn{VQLLM-QAT-d4}. This gives a $39.1\%$ relative improvement over the stronger quantized baseline and retains $88.9\%$ of the FP score. \texttt{GSM8K} guides learning-rate selection as specified in \cref{app:coding_protocol}.

\textbf{Code generation.} \algn{CanonQ-d1} achieves 32.2\% \texttt{MBPP} and 62.2\% \texttt{HumanEval} greedy pass@1 (\cref{tab:coding}), exceeding \algn{ParetoQ} by 4.2 and 18.3 points. The \texttt{HumanEval} gain is a $41.7\%$ relative improvement. The FP reference scores 60.4\% (paired comparison: $p=0.74$). Neither coding benchmark guides selection, and \algn{CanonQ} leads the evaluated quantized baselines on all three generation benchmarks.

\FloatBarrier
\section{Discussion and Limitations}
\label{sec:discussion}
\vspace{-2mm}

\textbf{A shared interface for coupled errors.} CanonQ separates reusable reconstruction choices from task-aware adaptation. Canonicalization makes a fixed reference alphabet useful across heterogeneous sources, while joint QAT adapts the network to their interacting errors. The larger gains under joint compression, together with the alignment and zero-update studies, support this division of roles. These results favor selecting the codec under the complete target precision configuration: methods with similar weight-only quality separate sharply when activations and caches are also quantized. Evaluating the intended W/A/KV format therefore captures interactions that weight-only comparisons leave untested.

\textbf{Choosing scalar or vector assignment.} The interface supports both choices at a fixed index rate. Scalar tables give the strongest reported \texttt{1B W2A4KV2} result; vector assignment improves \texttt{3B} accuracy with a small PPL tradeoff. Dimension is therefore a practical choice guided by validation quality and lookup cost. Joint tuple statistics and sensitivity-weighted residuals offer concrete quantities for explaining these differences.

\paragraph{Deployment and further evaluation.}
The present study establishes model quality at extreme low bit rates. Deployment evaluation will complement these results with exported-model size, KV-storage, and prefill/decode throughput measurements for fused kernels, accounting for group metadata and unquantized modules (\cref{app:bit_accounting}). Evaluation beyond the \texttt{8K} training range, reasoning benchmarks with independent hyperparameter selection, and repeated training seeds are directions for extending this evidence.

\section{Conclusion}
\label{sec:conclusion}
\vspace{-2mm}

We introduced \algn{CanonQ}, a unified framework for weight, activation, and KV-cache quantization using canonical coordinates and frozen reference codebooks. Our theory links source mismatch to reconstruction, task loss, and normalization-aware STE bias. Experiments establish substantial quality gains under \texttt{W2A4KV2} across model scales, with further improvements in coding and reasoning and stable relative perplexity within the \texttt{8K} training range.


\bibliographystyle{meta/assets/canonq}
\bibliography{references}

@article{benbasat2026rht,
  title         = {Quantizing With Randomized Hadamard Transforms: Efficient Heuristic Now Proven},
  author        = {Ben-Basat, Ran and Kuszmaul, William and Mitzenmacher, Michael and Portnoy, Amit and Vargaftik, Shay},
  journal       = {arXiv preprint arXiv:2605.06014},
  year          = {2026},
  eprint        = {2605.06014},
  archivePrefix = {arXiv},
  url           = {https://arxiv.org/abs/2605.06014}
}

@inproceedings{yin2019understanding,
  title     = {Understanding Straight-Through Estimator in Training Activation Quantized Neural Nets},
  author    = {Yin, Penghang and Lyu, Jiancheng and Zhang, Shuai and Osher, Stanley and Qi, Yingyong and Xin, Jack},
  booktitle = {International Conference on Learning Representations},
  year      = {2019},
  url       = {https://arxiv.org/abs/1903.05662}
}

@article{li2026winq,
  title={WinQ: Accelerating Quantization-Aware Training of Language Models Around Saddle Points},
  author={Li, Dongyue and Liu, Zechun and Yi, Kai and Zhang, Zhenshuo and Zhao, Changsheng and Krishnamoorthi, Raghuraman and Khaitan, Harshit and Zhang, Hongyang R and Li, Steven},
  journal={ICML},
  year={2026}
}

@article{linde1980algorithm,
  title   = {An Algorithm for Vector Quantizer Design},
  author  = {Linde, Yoseph and Buzo, Andres and Gray, Robert M.},
  journal = {IEEE Transactions on Communications},
  volume  = {28},
  number  = {1},
  pages   = {84--95},
  year    = {1980},
  month   = jan,
  doi     = {10.1109/TCOM.1980.1094577}
}

@inproceedings{vargaftik2021drive,
  title     = {{DRIVE}: One-bit Distributed Mean Estimation},
  author    = {Vargaftik, Shay and Ben-Basat, Ran and Portnoy, Amit and
               Mendelson, Gal and Ben-Itzhak, Yaniv and Mitzenmacher, Michael},
  booktitle = {Advances in Neural Information Processing Systems},
  volume    = {34},
  pages     = {362--377},
  year      = {2021},
  publisher = {Curran Associates, Inc.},
  url       = {https://proceedings.neurips.cc/paper/2021/hash/0397758f8990c1b41b81b43ac389ab9f-Abstract.html}
}

@inproceedings{vargaftik2022eden,
  title     = {{EDEN}: Communication-Efficient and Robust Distributed Mean
               Estimation for Federated Learning},
  author    = {Vargaftik, Shay and Ben Basat, Ran and Portnoy, Amit and
               Mendelson, Gal and Ben Itzhak, Yaniv and Mitzenmacher, Michael},
  booktitle = {Proceedings of the 39th International Conference on Machine Learning},
  editor    = {Chaudhuri, Kamalika and Jegelka, Stefanie and Song, Le and
               Szepesvari, Csaba and Niu, Gang and Sabato, Sivan},
  series    = {Proceedings of Machine Learning Research},
  volume    = {162},
  pages     = {21984--22014},
  year      = {2022},
  publisher = {PMLR},
  url       = {https://proceedings.mlr.press/v162/vargaftik22a.html}
}

@article{malinovskii2024pv,
  title={Pv-tuning: Beyond straight-through estimation for extreme llm compression},
  author={Malinovskii, Vladimir and Mazur, Denis and Ilin, Ivan and Kuznedelev, Denis and Burlachenko, Konstantin and Yi, Kai and Alistarh, Dan and Richtarik, Peter},
  journal={Advances in Neural Information Processing Systems},
  volume={37},
  pages={5074--5121},
  year={2024}
}

@article{wang2026lcqat,
  title   = {{LC-QAT}: Data-Efficient 2-Bit {QAT} for {LLM}s via
             Linear-Constrained Vector Quantization},
  author  = {Wang, Haoyu and Yu, Xingyu and Zhao, Haiyan and
             Wang, Fengxiang and Han, Xu},
  journal={ICML},
  year={2026}
}

@article{yi2026jacquant,
  title={JacQuant: STE-Free Quantization-Aware Training via Learned Jacobian Surrogates},
  author={Yi, Kai and Vivekraja, Vignesh and Khaitan, Harshit and Li, Steven},
  journal={arXiv preprint arXiv:2605.25469},
  year={2026}
}

@article{llama3,
  title={The llama 3 herd of models},
  author={Dubey, Abhimanyu and Jauhri, Abhinav and Pandey, Abhinav and Kadian, Abhishek and Al-Dahle, Ahmad and Letman, Aiesha and Mathur, Akhil and Schelten, Alan and Yang, Amy and Fan, Angela and others},
  journal={arXiv e-prints},
  pages={arXiv--2407},
  year={2024}
}

@article{yang2025qwen3,
  title={Qwen3 technical report},
  author={Yang, An and Li, Anfeng and Yang, Baosong and Zhang, Beichen and Hui, Binyuan and Zheng, Bo and Yu, Bowen and Gao, Chang and Huang, Chengen and Lv, Chenxu and others},
  journal={arXiv preprint arXiv:2505.09388},
  year={2025}
}

@article{diaconis1984asymptotics,
  title={Asymptotics of Graphical Projection Pursuit},
  author={Diaconis, Persi and Freedman, David},
  journal={The Annals of Statistics},
  volume={12},
  number={3},
  pages={793--815},
  year={1984},
  doi={10.1214/aos/1176346703}
}

@inproceedings{chen2025efficientqat,
  title={Efficientqat: Efficient quantization-aware training for large language models},
  author={Chen, Mengzhao and Shao, Wenqi and Xu, Peng and Wang, Jiahao and Gao, Peng and Zhang, Kaipeng and Luo, Ping},
  booktitle={Proceedings of the 63rd Annual Meeting of the Association for Computational Linguistics (Volume 1: Long Papers)},
  pages={10081--10100},
  year={2025}
}

@inproceedings{clark2018think,
  title={Think you have solved question answering? Try ARC, the AI2 reasoning challenge},
  author={Clark, Peter and Cowhey, Isaac and Etzioni, Oren and Khot, Tushar and Sabharwal, Ashish and Schoenick, Carissa and Tafjord, Oyvind},
  booktitle={arXiv preprint arXiv:1803.05457},
  year={2018}
}

@inproceedings{clark2019boolq,
  title={BoolQ: Exploring the surprising difficulty of natural yes/no questions},
  author={Clark, Christopher and Lee, Kenton and Chang, Ming-Wei and Kwiatkowski, Tom and Collins, Michael and Toutanova, Kristina},
  booktitle={Proceedings of the 2019 Conference of the North American Chapter of the Association for Computational Linguistics: Human Language Technologies, Volume 1 (Long and Short Papers)},
  pages={2924--2936},
  year={2019}
}

@inproceedings{bisk2020piqa,
  title={PIQA: Reasoning about physical commonsense in natural language},
  author={Bisk, Yonatan and Zellers, Rowan and Gao, Jianfeng and Choi, Yejin and others},
  booktitle={Proceedings of the AAAI Conference on Artificial Intelligence},
  volume={34},
  pages={7432--7439},
  year={2020}
}

@inproceedings{sap2019socialiqa,
  title={SocialIQA: Commonsense reasoning about social interactions},
  author={Sap, Maarten and Rashkin, Hannah and Chen, Derek and LeBras, Ronan and Choi, Yejin},
  booktitle={Proceedings of the 2019 Conference on Empirical Methods in Natural Language Processing and the 9th International Joint Conference on Natural Language Processing (EMNLP-IJCNLP)},
  pages={4463--4473},
  year={2019}
}

@inproceedings{zellers2019hellaswag,
  title={HellaSwag: Can a machine really finish your sentence?},
  author={Zellers, Rowan and Holtzman, Ari and Bisk, Yonatan and Farhadi, Ali and Choi, Yejin},
  booktitle={Proceedings of the 57th Annual Meeting of the Association for Computational Linguistics},
  pages={4791--4800},
  year={2019}
}

@inproceedings{merity2017pointer,
  title={Pointer Sentinel Mixture Models},
  author={Merity, Stephen and Xiong, Caiming and Bradbury, James and Socher, Richard},
  booktitle={International Conference on Learning Representations},
  year={2017}
}

@inproceedings{mihaylov2018can,
  title={Can a suit of armor conduct electricity? A new dataset for open book question answering},
  author={Mihaylov, Todor and Clark, Peter and Khot, Tushar and Sabharwal, Ashish},
  booktitle={Proceedings of the 2018 Conference on Empirical Methods in Natural Language Processing},
  pages={2381--2391},
  year={2018}
}

@inproceedings{sakaguchi2021winogrande,
  title={WinoGrande: An adversarial winograd schema challenge at scale},
  author={Sakaguchi, Keisuke and Bras, Ronan Le and Bhagavatula, Chandra and Choi, Yejin},
  booktitle={Proceedings of the AAAI Conference on Artificial Intelligence},
  volume={34},
  pages={8732--8740},
  year={2020}
}

@inproceedings{dettmers2022llmint8,
  title     = {{LLM.int8()}: 8-bit Matrix Multiplication for Transformers at Scale},
  author    = {Dettmers, Tim and Lewis, Mike and Belkada, Younes and Zettlemoyer, Luke},
  booktitle = {Advances in Neural Information Processing Systems},
  volume    = {35},
  year      = {2022},
  url       = {https://arxiv.org/abs/2208.07339}
}

@inproceedings{xiao2023smoothquant,
  title     = {{SmoothQuant}: Accurate and Efficient Post-Training Quantization for Large Language Models},
  author    = {Xiao, Guangxuan and Lin, Ji and Seznec, Mickael and Wu, Hao and Demouth, Julien and Han, Song},
  booktitle = {Proceedings of the 40th International Conference on Machine Learning},
  series    = {Proceedings of Machine Learning Research},
  volume    = {202},
  publisher = {PMLR},
  year      = {2023},
  url       = {https://proceedings.mlr.press/v202/xiao23c.html}
}

@inproceedings{frantar2022gptq,
  title     = {{GPTQ}: Accurate Post-Training Quantization for Generative Pre-trained Transformers},
  author    = {Frantar, Elias and Ashkboos, Saleh and Hoefler, Torsten and Alistarh, Dan},
  booktitle = {International Conference on Learning Representations},
  year      = {2023},
  url       = {https://arxiv.org/abs/2210.17323}
}

@inproceedings{lin2024awq,
  title     = {{AWQ}: Activation-aware Weight Quantization for On-Device {LLM} Compression and Acceleration},
  author    = {Lin, Ji and Tang, Jiaming and Tang, Haotian and Yang, Shang and Chen, Wei-Ming and Wang, Wei-Chen and Xiao, Guangxuan and Dang, Xingyu and Gan, Chuang and Han, Song},
  booktitle = {Proceedings of Machine Learning and Systems},
  year      = {2024},
  url       = {https://arxiv.org/abs/2306.00978}
}

@inproceedings{shao2024omniquant,
  title     = {{OmniQuant}: Omnidirectionally Calibrated Quantization for Large Language Models},
  author    = {Shao, Wenqi and Chen, Mengzhao and Zhang, Zhaoyang and Xu, Peng and Zhao, Lirui and Li, Zhiqian and Zhang, Kaipeng and Gao, Peng and Qiao, Yu and Luo, Ping},
  booktitle = {International Conference on Learning Representations},
  year      = {2024},
  url       = {https://openreview.net/forum?id=8Wuvhh0LYW}
}

@article{wang2023bitnet,
  title         = {{BitNet}: Scaling 1-bit Transformers for Large Language Models},
  author        = {Wang, Hongyu and Ma, Shuming and Dong, Li and Huang, Shaohan and Wang, Huaijie and Ma, Lingxiao and Yang, Fan and Wang, Ruiping and Wu, Yi and Wei, Furu},
  journal       = {arXiv preprint arXiv:2310.11453},
  year          = {2023},
  eprint        = {2310.11453},
  archivePrefix = {arXiv},
  url           = {https://arxiv.org/abs/2310.11453}
}

@article{ma2024era,
  title         = {The Era of 1-bit {LLM}s: All Large Language Models are in 1.58 Bits},
  author        = {Ma, Shuming and Wang, Hongyu and Ma, Lingxiao and Wang, Lei and Wang, Wenhui and Huang, Shaohan and Dong, Li and Wang, Ruiping and Xue, Jilong and Wei, Furu},
  journal       = {arXiv preprint arXiv:2402.17764},
  year          = {2024},
  eprint        = {2402.17764},
  archivePrefix = {arXiv},
  url           = {https://arxiv.org/abs/2402.17764}
}

@article{shang2023pbllm,
  title         = {{PB-LLM}: Partially Binarized Large Language Models},
  author        = {Shang, Yuzhang and Yuan, Zhihang and Wu, Qiang and Dong, Zhen},
  journal       = {arXiv preprint arXiv:2310.00034},
  year          = {2023},
  eprint        = {2310.00034},
  archivePrefix = {arXiv},
  url           = {https://arxiv.org/abs/2310.00034}
}

@inproceedings{huang2024billm,
  title     = {{BiLLM}: Pushing the Limit of Post-Training Quantization for {LLM}s},
  author    = {Huang, Wei and Liu, Yangdong and Qin, Haotong and Li, Ying and Zhang, Shiming and Liu, Xianglong and Magno, Michele and Qi, Xiaojuan},
  booktitle = {Proceedings of the 41st International Conference on Machine Learning},
  series    = {Proceedings of Machine Learning Research},
  volume    = {235},
  publisher = {PMLR},
  year      = {2024},
  url       = {https://arxiv.org/abs/2402.04291}
}

@inproceedings{xu2024onebit,
  title     = {{OneBit}: Towards Extremely Low-bit Large Language Models},
  author    = {Xu, Yuzhuang and Han, Xu and Yang, Zonghan and Wang, Shuo and Zhu, Qingfu and Liu, Zhiyuan and Liu, Weidong and Che, Wanxiang},
  booktitle = {Advances in Neural Information Processing Systems},
  year      = {2024},
  url       = {https://arxiv.org/abs/2402.11295}
}

@inproceedings{liu2024llmqat,
  title     = {{LLM-QAT}: Data-Free Quantization Aware Training for Large Language Models},
  author    = {Liu, Zechun and Oguz, Barlas and Zhao, Changsheng and Chang, Ernie and Stock, Pierre and Mehdad, Yashar and Shi, Yangyang and Krishnamoorthi, Raghuraman and Chandra, Vikas},
  booktitle = {Findings of the Association for Computational Linguistics: ACL 2024},
  pages     = {467--484},
  publisher = {Association for Computational Linguistics},
  year      = {2024},
  url       = {https://aclanthology.org/2024.findings-acl.26/}
}

@article{liu2025paretoq,
  title         = {{ParetoQ}: Scaling Laws in Extremely Low-bit {LLM} Quantization},
  author        = {Liu, Zechun and Zhao, Changsheng and Huang, Hanxian and Chen, Sijia and Zhang, Jing and Zhao, Jiawei and Roy, Scott and Jin, Lisa and Xiong, Yunyang and Shi, Yangyang and Xiao, Lin and Tian, Yuandong and Soran, Bilge and Krishnamoorthi, Raghuraman and Blankevoort, Tijmen and Chandra, Vikas},
  journal       = {arXiv preprint arXiv:2502.02631},
  year          = {2025},
  eprint        = {2502.02631},
  archivePrefix = {arXiv},
  url           = {https://arxiv.org/abs/2502.02631}
}

@article{lloyd1982least,
  title   = {Least Squares Quantization in {PCM}},
  author  = {Lloyd, Stuart P.},
  journal = {IEEE Transactions on Information Theory},
  volume  = {28},
  number  = {2},
  pages   = {129--137},
  year    = {1982},
  doi     = {10.1109/TIT.1982.1056489}
}

@article{max1960quantizing,
  title   = {Quantizing for Minimum Distortion},
  author  = {Max, Joel},
  journal = {IRE Transactions on Information Theory},
  volume  = {6},
  number  = {1},
  pages   = {7--12},
  year    = {1960},
  doi     = {10.1109/TIT.1960.1057548}
}

@inproceedings{dettmers2023qlora,
  title     = {{QLoRA}: Efficient Finetuning of Quantized {LLM}s},
  author    = {Dettmers, Tim and Pagnoni, Artidoro and Holtzman, Ari and Zettlemoyer, Luke},
  booktitle = {Advances in Neural Information Processing Systems},
  volume    = {36},
  year      = {2023},
  url       = {https://arxiv.org/abs/2305.14314}
}

@inproceedings{kim2024squeezellm,
  title     = {{SqueezeLLM}: Dense-and-Sparse Quantization},
  author    = {Kim, Sehoon and Hooper, Coleman and Gholami, Amir and Dong, Zhen and Li, Xiuyu and Shen, Sheng and Mahoney, Michael W. and Keutzer, Kurt},
  booktitle = {Proceedings of the 41st International Conference on Machine Learning},
  series    = {Proceedings of Machine Learning Research},
  volume    = {235},
  publisher = {PMLR},
  year      = {2024},
  url       = {https://arxiv.org/abs/2306.07629}
}

@inproceedings{egiazarian2024aqlm,
  title     = {Extreme Compression of Large Language Models via Additive Quantization},
  author    = {Egiazarian, Vage and Panferov, Andrei and Kuznedelev, Denis and Frantar, Elias and Babenko, Artem and Alistarh, Dan},
  booktitle = {Proceedings of the 41st International Conference on Machine Learning},
  series    = {Proceedings of Machine Learning Research},
  volume    = {235},
  publisher = {PMLR},
  year      = {2024},
  url       = {https://arxiv.org/abs/2401.06118}
}

@inproceedings{tseng2024quipsharp,
  title     = {{QuIP}$\#$: Even Better {LLM} Quantization with Hadamard Incoherence and Lattice Codebooks},
  author    = {Tseng, Albert and Chee, Jerry and Sun, Qingyao and Kuleshov, Volodymyr and De Sa, Christopher},
  booktitle = {Proceedings of the 41st International Conference on Machine Learning},
  series    = {Proceedings of Machine Learning Research},
  volume    = {235},
  pages     = {48630--48656},
  publisher = {PMLR},
  year      = {2024},
  url       = {https://proceedings.mlr.press/v235/tseng24a.html}
}

@inproceedings{chee2023quip,
  title     = {{QuIP}: 2-Bit Quantization of Large Language Models With Guarantees},
  author    = {Chee, Jerry and Cai, Yaohui and Kuleshov, Volodymyr and De Sa, Christopher},
  booktitle = {Advances in Neural Information Processing Systems},
  volume    = {36},
  year      = {2023},
  url       = {https://arxiv.org/abs/2307.13304}
}

@inproceedings{ashkboos2024quarot,
  title     = {{QuaRot}: Outlier-Free 4-Bit Inference in Rotated {LLM}s},
  author    = {Ashkboos, Saleh and Mohtashami, Amirkeivan and Croci, Maximilian L. and Li, Bo and Cameron, Pashmina and Jaggi, Martin and Alistarh, Dan and Hoefler, Torsten and Hensman, James},
  booktitle = {Advances in Neural Information Processing Systems},
  year      = {2024},
  url       = {https://arxiv.org/abs/2404.00456}
}

@inproceedings{liu2024spinquant,
  title     = {{SpinQuant}: {LLM} Quantization with Learned Rotations},
  author    = {Liu, Zechun and Zhao, Changsheng and Fedorov, Igor and Soran, Bilge and Choudhary, Dhruv and Krishnamoorthi, Raghuraman and Chandra, Vikas and Tian, Yuandong and Blankevoort, Tijmen},
  booktitle = {International Conference on Learning Representations},
  year      = {2025},
  url       = {https://arxiv.org/abs/2405.16406}
}

@article{zandieh2025turboquant,
  title         = {{TurboQuant}: Online Vector Quantization with Near-optimal Distortion Rate},
  author        = {Zandieh, Amir and Daliri, Majid and Hadian, Majid and Mirrokni, Vahab},
  journal       = {arXiv preprint arXiv:2504.19874},
  year          = {2025},
  eprint        = {2504.19874},
  archivePrefix = {arXiv},
  url           = {https://arxiv.org/abs/2504.19874}
}

@inproceedings{liu2024kivi,
  title     = {{KIVI}: A Tuning-Free Asymmetric 2bit Quantization for {KV} Cache},
  author    = {Liu, Zirui and Yuan, Jiayi and Jin, Hongye and Zhong, Shaochen and Xu, Zhaozhuo and Braverman, Vladimir and Chen, Beidi and Hu, Xia},
  booktitle = {Proceedings of the 41st International Conference on Machine Learning},
  series    = {Proceedings of Machine Learning Research},
  volume    = {235},
  pages     = {32332--32344},
  publisher = {PMLR},
  year      = {2024},
  url       = {https://proceedings.mlr.press/v235/liu24bz.html}
}

@inproceedings{hooper2024kvquant,
  title     = {{KVQuant}: Towards 10 Million Context Length {LLM} Inference with {KV} Cache Quantization},
  author    = {Hooper, Coleman and Kim, Sehoon and Mohammadzadeh, Hiva and Mahoney, Michael W. and Shao, Yakun Sophia and Keutzer, Kurt and Gholami, Amir},
  booktitle = {Advances in Neural Information Processing Systems},
  year      = {2024},
  url       = {https://arxiv.org/abs/2401.18079}
}

@article{austin2021program,
 title={Program Synthesis with Large Language Models},
 author={Austin, Jacob and Odena, Augustus and Nye, Maxwell and Bosma, Maarten and Michalewski, Henryk and Dohan, David and Jiang, Ellen and Cai, Carrie and Terry, Michael and Le, Quoc and Sutton, Charles},
 journal={arXiv preprint arXiv:2108.07732}, year={2021}, url={https://arxiv.org/abs/2108.07732}}

@article{chen2021evaluating,
 title={Evaluating Large Language Models Trained on Code},
 author={Chen, Mark and Tworek, Jerry and Jun, Heewoo and Yuan, Qiming and others},
 journal={arXiv preprint arXiv:2107.03374}, year={2021}, url={https://arxiv.org/abs/2107.03374}}

@article{cobbe2021training,
  title = {Training Verifiers to Solve Math Word Problems},
  author = {Cobbe, Karl and Kosaraju, Vineet and Bavarian, Mohammad and Chen, Mark and Jun, Heewoo and Kaiser, Lukasz and Plappert, Matthias and Tworek, Jerry and Hilton, Jacob and Nakano, Reiichiro and Hesse, Christopher and Schulman, John},
  journal = {arXiv preprint arXiv:2110.14168},
  year = {2021},
  url = {https://arxiv.org/abs/2110.14168}
}

\newpage
\begingroup
\setcounter{tocdepth}{2}
\small
\setlength{\parskip}{0pt}
\tableofcontents
\endgroup
\newpage
\beginappendix
\section{Experimental Details}
\label{app:experimental_details}
\subsection{Models, Training, and Evaluation}
\label{app:models_and_data}
\label{app:qat_hparams}
\label{app:evaluation_protocol}
We use pretrained \texttt{LLaMA-3.2-1B/3B}, \texttt{LLaMA3-8B}, and \texttt{Qwen3-1.7B} checkpoints with their native tokenizers \citep{llama3,yang2025qwen3}. The \texttt{1B} model supports the broad precision and component study; the larger models test transfer of the same frozen-table construction. Standard QAT starts from the corresponding full-precision checkpoint and samples a continuous \texttt{FineWeb-Edu} stream. The architecture, vocabulary, and positional encoding are unchanged. Training excludes the evaluation datasets.

\paragraph{Optimization.}
We train on eight \texttt{H100/A100-80G} GPUs, with eight sequences per GPU. The standard recipe uses \texttt{2048}-token sequences, effective batch size \texttt{64}, and at most \texttt{240K} updates, or approximately \texttt{31.5B} tokens. We use \algn{AdamW} with $\beta=(0.9,0.999)$, $\epsilon=10^{-8}$, zero weight decay, and global gradient-norm clipping at $1.0$. A cosine schedule follows \texttt{1000} linear-warmup steps and decays to zero. Learning rates are selected from $\{1,2,4\}\times10^{-5}$ on held-out validation data; enabled scale and gain parameters share the weights' learning rate. For the reported \texttt{8B} \algn{CanonQ-Omni} configuration, we reuse the best training hyperparameters selected at \texttt{3B} to reduce tuning costs. The training seed is \texttt{42}. The \texttt{8K/40K} continuation and {instruction-tuned coding and reasoning extension} use the separate protocols in \cref{app:long_context_8k,app:coding_protocol}.

\paragraph{Language modeling and downstream tasks.}
For \texttt{WikiText-2}, we concatenate tokenized test documents without added separators, split the stream into non-overlapping \texttt{2048}-token windows, and drop the trailing partial window. PPL is the exponential of mean next-token negative log-likelihood over the scored tokens. All methods use the same tokenized corpus and evaluation implementation.
Zero-shot accuracy is the unweighted mean over
\texttt{ARC-Easy} and \texttt{ARC-Challenge} \citep{clark2018think},
\texttt{BoolQ} \citep{clark2019boolq},
\texttt{PIQA} \citep{bisk2020piqa},
\texttt{SIQA} \citep{sap2019socialiqa},
\texttt{HellaSwag} \citep{zellers2019hellaswag},
\texttt{OpenBookQA} \citep{mihaylov2018can},
and \texttt{WinoGrande} \citep{sakaguchi2021winogrande},
using common prompts and answer normalization across methods.
Evaluation re-quantizes shadow weights and keeps the trained activation/cache operators enabled. Each result represents one training seed unless stated otherwise.

\subsection{Quantization Paths and Baselines}
\label{app:quantization_configurations}
\label{app:quantizer_formulations}
\Cref{tab:method_routing} summarizes which sources each method quantizes. \texttt{A16/KV16} denotes a floating-point path; other labels specify index bits. A canonical $(b,d)$ table has $2^{bd}$ vector codewords, so a $bd$-bit index represents $d$ scalars. Canonical A/K/V paths use $d=1$ throughout; method suffixes specify $d_W$.

\begin{table}[!htbp]
\caption{Quantization paths. ``Canonical'' means HDP, energy normalization, and a frozen Gaussian-reference table. All QAT paths are active during training and evaluation.}
\label{tab:method_routing}
\centering
\setlength{\tabcolsep}{4pt}
\begin{adjustbox}{max width=\linewidth}
\begin{tabular}{llll}
\toprule
Method & Weights & Activations & KV cache\\\midrule
\algn{ParetoQ} & learned {uniform} grid & dynamic {uniform} & dynamic {uniform}\\
\algn{VQLLM-QAT} & group-fitted raw codebook & dynamic {uniform} & dynamic {uniform}\\
\algn{CanonQ-W} & canonical & dynamic {uniform} & dynamic {uniform}\\
\algn{CanonQ-Omni} & canonical & canonical & canonical\\
\algn{HDP+LM PTQ} & canonical, no updates & canonical, no updates & {uniform} if enabled\\\bottomrule
\end{tabular}
\end{adjustbox}
\end{table}

\paragraph{Baseline scope.}
\algn{ParetoQ} supplies the weight quantizer, extended here with the {uniform} A/KV operators below \citep{liu2025paretoq}. \algn{VQLLM-QAT} fits raw-space weight tables by \algn{k-means/LBG}; matching its index rate and tuple dimension to \algn{CanonQ-W} changes both source coordinates and table construction. \algn{RTN} and \algn{GPTQ} specify weight PTQ procedures; their low-A/KV rows evaluate composite configurations. \algn{KIVI} specifies a cache codec with inherited W/A quantization \citep{frantar2022gptq,ashkboos2024quarot,liu2024kivi}. These composite rows measure end-to-end configurations and do not isolate the original weight or cache algorithm.

\paragraph{{Uniform weight and dynamic-source quantizers.}}
The \algn{ParetoQ} weight path uses a learned per-row scale $\alpha$: \texttt{W1} reconstructs $\alpha\operatorname{sign}(w)$, while \texttt{W2} rounds to $\frac{\alpha}{4}\{-3,-1,1,3\}$. For a dynamic A/KV group $x$, the implemented symmetric operator is
\begin{equation}
 M=\max_i|x_i|,\quad s=\frac{2^{b-1}-1}{M+\epsilon},\quad
 \widehat x=\frac{\operatorname{round}(sx)}{s+\epsilon}.
 \label{eq:sym_activation_quantizer}
\end{equation}
Its active cardinalities are \texttt{15/255/3} at \texttt{A4/A8/KV2}, compared with \texttt{16/256/4} for canonical scalar tables. Consequently, comparisons at the same index rate can still differ in cardinality, normalization, and backward behavior.

\paragraph{Canonical paths.}
All sources use the codec in \cref{eq:canonical_coordinates,eq:canonical_codec}, with source-specific $(g_S,b_S,d_S,R_S,\gamma_{S,\ell})$. Activations and KV states use energy normalization without an additional learned, absmax, or percentile clipping parameter. K and V are quantized independently and decoded to their original coordinates before attention. Assignment uses the identity STE, with the energy differentiated through encoding and decoding as in \cref{eq:ste_jacobian_summary}. A source with a fixed unit gain sets $\gamma_{S,\ell}=1$.

\subsection{HDP Maps and Frozen Tables}
\label{app:hdp_implementation}
\label{app:codebook_init}
\paragraph{Grouping and rotations.}
Canonical groups contain $g_W=g_A=128$ scalar coordinates and $g_K=g_V=64$. Each source uses one fixed map $R_S=P_SHD_S$ across layers. Seeds are $s_S=s_0+o_S$, with $s_0=0$ and offsets $o_W=0$, $o_A=1$, $o_K=o_V=2$; a seeded PRNG draws signs and then the permutation. Thus K and V share a map, while W/A/KV maps differ. The normalized fast Hadamard transform uses \texttt{hammer.ops.fast\_hadamard\_transform} with a \texttt{PyTorch} butterfly fallback. Power-of-two groups require no padding. Quantized linear layers include attention $q,k,v,o$ and MLP $w_1,w_2,w_3$ projections; token embeddings and the language-model head remain unrotated.

\paragraph{Offline construction and assignment.}
Scalar tables are fitted to $\Normal(0,1)$ using double-precision \algn{Lloyd--Max} iterations; vector tables use \algn{Lloyd/LBG} assignments and centroid updates. The fitting gap $\varepsilon_{\mathrm{vq}}$ in \cref{eq:canonical_fit_gap} explicitly accounts for numerical codebook optimization. The scalar \texttt{W1} levels are $\pm0.7978846$; the scalar \texttt{W2/KV2} levels are approximately $\pm0.4528$ and $\pm1.5104$. Exact A4/A8 levels and vector centers are serialized with the configuration. Tables remain frozen through QAT and evaluation.

For \texttt{W1}, the evaluated dimensions $d\in\{1,2,4,8\}$ give $K\in\{2,4,16,256\}$; for \texttt{W2}, $d\in\{1,2,4\}$ gives $K\in\{4,16,256\}$. Scalar \texttt{A4/A8/KV2} tables have \texttt{16/256/4} entries. Sources literally share a table only when $(b,d)$ and the codewords match. Scalar assignment uses sorted midpoint thresholds and \texttt{torch.bucketize} with \texttt{right=False}, so midpoint ties select the lower level. Vector assignment uses \texttt{argmin} and selects the lowest index on ties. The same rule is used during training and evaluation.

\subsection{Index Rate and Stored Footprint}
\label{app:bit_accounting}
The label \texttt{W2A4KV2} specifies an index-rate tuple of \texttt{2/4/2/2}; it excludes decoding factors, table storage, and unquantized modules. For source $S$ with $N_S$ scalar values, the stored rate and once-per-model allocation are
\begin{align}
 b_S^{\mathrm{stored}}&=b_S^{\mathrm{idx}}+
 \frac{B_S^{\mathrm{decode}}+B_S^{\mathrm{local\text{-}table}}+B_S^{\mathrm{aux}}}{N_S},
 \label{eq:stored_rate}\\
 B^{\mathrm{resident}}&=B^{\mathrm{global\text{-}table}}+B^{\mathrm{rotation}}+B^{\mathrm{constant}}.
 \label{eq:resident_storage}
\end{align}
Here auxiliary bits include offsets, padding, and alignment. Globally shared objects are counted once; a group-fitted codebook contributes to local-table storage.

\paragraph{Decoding metadata.}
A group of $g$ values with $r$ stored $q_s$-bit decoding scalars adds $rq_s/g$ bits per value. A static canonical weight group needs the folded factor $\alpha_g=\gamma_g\rho_g/\sqrt g$ in $\widehat u_g=\alpha_gR_g^\top\widehat z_g$. Storing this product requires one scalar; retaining norm and gain separately requires two. Table~\ref{tab:w2a4kv2_stored_bits} gives both cases for the declared scalar group sizes. The rates are representation accounting, not measured exported bytes.

\begin{table}[t]
\caption{Canonical \texttt{W2A4KV2} storage before padding and unquantized modules. Each decoding scalar is 16 bits; $r$ counts independent factors after folding. A rates apply only to materialized quantized activation buffers. Global tables are separate.}
\label{tab:w2a4kv2_stored_bits}
\centering
\setlength{\tabcolsep}{10pt}
\begin{tabular}{lrrrr}
\toprule
Source & Index bits & Group $g$ & Stored bits, $r=1$ & Stored bits, $r=2$\\\midrule
Weights & 2 & 128 & 2.125 & 2.250\\
Activations & 4 & 128 & 4.125 & 4.250\\
Keys / values & 2 & 64 & 2.250 & 2.500\\\bottomrule
\end{tabular}
\end{table}

The same accounting gives 2.25 bits per value for a symmetric two-bit cache with one 16-bit scale per 64 values. An asymmetric cache with a scale and offset uses $2+(q_s+q_z)/g$ bits per value; at $g=32$ and $q_s=q_z=16$, this is 3.0. Per-row \algn{ParetoQ} weight scales instead contribute $16\sum_\ell m_\ell/\sum_\ell m_\ell n_\ell$ bits per weight for matrices of shape $m_\ell\times n_\ell$. Total storage therefore accounts for both index bits and method-specific metadata.

\paragraph{Shared and group-fitted tables.}
A canonical $(b,d)$ table with $q_C$-bit components occupies $2^{bd}dq_C$ bits. At 16 bits per component, \texttt{W2-d1/d2/d4} tables require \texttt{8/64/2048} bytes and an \texttt{A4-d1} table requires \texttt{32} bytes. Each distinct exported table is counted once. In contrast, a raw-space table shared by each block of 128 output rows contributes
\begin{equation}
 B_W^{\mathrm{local\text{-}table}}=
 \sum_\ell\lceil m_\ell/128\rceil\,2^{bd}dq_C.
 \label{eq:direct_full_stored_rate}
\end{equation}
These table-sharing blocks differ from the 128-scalar canonical normalization groups. Any separately retained raw-codebook normalization factors add their own metadata; gains absorbed into raw codewords are not counted twice.

\paragraph{Deployment implications.}
HDP maps can be regenerated from seeds without storing dense matrices; exported signs, permutations, and replicated tables still occupy resident bytes. KV decoding factors persist with cached indices. Activation factors can remain on chip in a fused kernel, so their contribution depends on whether an activation buffer is materialized. The relevant deployment quantity is $M_{\mathrm{total}}(B,L)=M_W+M_{\mathrm{resident}}+M_{KV}(B,L)+M_A^{\mathrm{peak}}$, where $M_W$ includes unquantized modules. Actual memory and speed also depend on packing, lookup, and kernel boundaries.

\subsection{{Code Generation and Mathematical Reasoning}}
\label{app:coding_protocol}
\Cref{tab:coding} evaluates \texttt{MobileLLM-Pro-Instruct-1B} at \texttt{W2A16KV16}. Token embeddings and output/language-model heads remain full precision. \algn{CanonQ-d1}, \algn{VQLLM-QAT-d4}, and \algn{ParetoQ} use canonical scalar, raw vector, and uniform weight quantizers, respectively. All three arms start from the same instruction-tuned FP checkpoint and use it as a full-precision distillation teacher.

{\textbf{Checkpoint and learning-rate selection.} All quantized methods are evaluated at step \texttt{279236}. For each method with a learning-rate sweep, \texttt{GSM8K} exact match selects the setting used for all three benchmarks. The reported reasoning score therefore reflects \texttt{GSM8K}-guided selection, while both coding benchmarks evaluate the selected model without contributing to selection.}

\paragraph{{Coding metrics and decoding.}}
Evaluation uses a dialog prompt: \texttt{HumanEval} is zero-shot with maximum prompt/generation lengths \texttt{2000/1024}, while \texttt{MBPP} is three-shot with maximum generation length \texttt{256}. Greedy decoding generates one sample per task, with seed \texttt{42} and batch size \texttt{32}. The FP reference uses the same harness and decoding settings. Generated code runs in an isolated subprocess with a five-second timeout; a timeout or nonzero exit is a failure. Thus pass@1 is the fraction of solved tasks, and parentheses in \cref{tab:coding} give integer counts.

{\textbf{Mathematical-reasoning metric.} \texttt{GSM8K} exact match measures final-answer correctness on 1,319 problems. FP, \algn{ParetoQ}, \algn{VQLLM-QAT-d4}, and \algn{CanonQ-d1} solve 709, 453, 431, and 630 problems, respectively. We display scores to one decimal place and compute improvements from the counts before rounding. CanonQ gives net gains of 177 and 199 correct answers over the quantized baselines. Relative to \algn{ParetoQ}, this is $100(630-453)/1319=13.4$ accuracy points or $100(630/453-1)=39.1\%$ relative improvement. CanonQ retains $630/709=88.9\%$ of the FP score and recovers $(630-453)/(709-453)=69.1\%$ of the baseline-to-FP gap.}

{\textbf{Uncertainty and interpretation.} On \texttt{HumanEval}, CanonQ and FP solve 102 and 99 tasks, with 20 solved only by CanonQ and 17 only by FP. Exact McNemar testing gives $p=0.74$; the paired-bootstrap 95\% interval for the accuracy difference is $[-5.5,9.1]$ points. This interval spans both signs, and no equivalence margin was tested. On \texttt{MBPP}, the discordant counts are 17 and 88 ($p<10^{-12}$), consistent with the 14.2-point FP gap. These comparisons quantify variation across benchmark tasks under greedy decoding; training-seed variation is a separate quantity. For \texttt{GSM8K}, the available aggregate counts support accuracy and count comparisons; paired significance testing requires per-problem outcomes.}

\section{Supplementary Experimental Results}
\subsection{Marginal Alignment and Task Adaptation}
\label{app:canonical_marginal_validation}
We test the scalar-marginal consequence of canonicalization using a trained \texttt{LLaMA3-1B W2A4KV2} checkpoint. Weights come from the checkpoint; activations and KV states are captured at quantizer inputs on a fixed calibration batch. Each comparison uses identical groups, with $g_W=g_A=128$ and $g_K=g_V=64$, and identical energy normalization. Thus $z_{\mathrm{norot}}=\sqrt{g_S}x/(\|x\|_2+\epsilon)$ and $z_{\mathrm{HDP}}=R_Sz_{\mathrm{norot}}$ differ only by rotation. Measurements precede codeword assignment.

For each layer we compute scalar $W_1$ distance to $\Normal(0,1)$ and the KS distance $\sup_t|\widehat F(t)-\Phi(t)|$. Table~\ref{tab:hdp_dist} averages layers equally. HDP reduces activation $W_1$ by 90\%, key $W_1$ by 57\%, and value $W_1$ by 40\%; weights are already close and show a small increase. The range of source-wise means narrows from $0.015$--$0.101$ to $0.010$--$0.026$. This explains the shared marginal reference in \cref{fig:hdp_collapse}. {These statistics assess marginal alignment. Estimating $\Delta_{S,d}$ requires a $W_2$ assessment of the corresponding tuple law, including coordinate dependence when $d>1$.}

\begin{figure}[H]
\centering
\includegraphics[width=\linewidth]{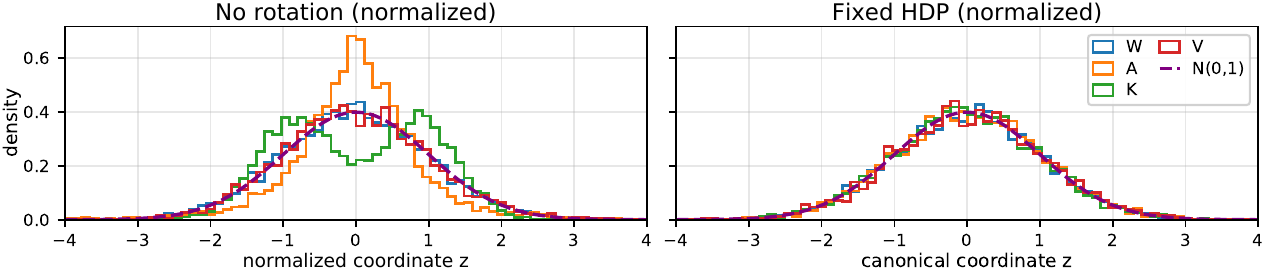}

\caption{Paired scalar marginals before and after HDP, using identical normalized groups. Dynamic A/K/V sources approach the Gaussian reference; weights remain near it. These scalar diagnostics illustrate alignment with the shared design reference. Full statistics appear in \cref{tab:hdp_dist}.}
\label{fig:hdp_collapse}
\end{figure}

\begin{table}[H]
\centering
\caption{Paired scalar-marginal distances to $\Normal(0,1)$ for a trained \texttt{LLaMA3-1B W2A4KV2} checkpoint. Values are layer means $\pm$ standard deviations, not uncertainty across training seeds. Both conditions use the same groups and normalization.}
\label{tab:hdp_dist}
\begin{tabular}{lcccc}
\toprule
&
\multicolumn{2}{c}{$W_1$ to $\Normal(0,1)$}
&
\multicolumn{2}{c}{KS to $\Normal(0,1)$}
\\
\cmidrule(lr){2-3}
\cmidrule(lr){4-5}
Source
& No rotation
& HDP
& No rotation
& HDP
\\
\midrule
Weights $W$
& $0.015\pm0.008$
& $0.017\pm0.012$
& $0.006\pm0.003$
& $0.010\pm0.006$
\\
Activations $A$
& $0.101\pm0.055$
& $\mathbf{0.010\pm0.004}$
& $0.039\pm0.019$
& $\mathbf{0.005\pm0.002}$
\\
Keys $K$
& $0.060\pm0.017$
& $\mathbf{0.026\pm0.013}$
& $0.035\pm0.011$
& $\mathbf{0.014\pm0.006}$
\\
Values $V$
& $0.030\pm0.014$
& $\mathbf{0.018\pm0.008}$
& $0.016\pm0.006$
& $\mathbf{0.009\pm0.003}$
\\
\bottomrule
\end{tabular}
\end{table}

\paragraph{Adaptation to a fixed codec.}
\label{app:ptq_vs_qat}
The zero-update control in \cref{tab:ptq_vs_qat} enables the canonical W/A codec directly on the pretrained \texttt{1B} model, with scalar tables, group size \texttt{128}, rotation seed \texttt{0}, and no gradient updates or layer reconstruction. At \texttt{W2A4}, QAT reduces PPL from 7043.43 to 13.84 and raises accuracy by 15.88 points without changing table capacity. The result demonstrates the importance of model adaptation to this fixed codec. The \texttt{KV2} pair additionally changes {uniform} KV to canonical KV, so its improvement reflects both adaptation and the cache path.

\begin{table}[!htbp]
\caption{Zero-update canonical PTQ and trained \algn{CanonQ-Omni-d1} on \texttt{LLaMA3-1B}. The \texttt{W2A4} pair shares the W/A codec; the \texttt{KV2} pair also changes the cache codec.}
\label{tab:ptq_vs_qat}
\centering
\setlength{\tabcolsep}{7pt}
\begin{tabular}{llrr}
\toprule
Format & Method / cache path & PPL $\downarrow$ & Avg. ZS $\uparrow$\\\midrule
\texttt{FP} & Full precision & 9.75 & 58.54\\ \midrule
\texttt{W2A4} & \algn{HDP+LM PTQ} & 7043.43 & 38.83\\
\texttt{W2A4} & \algn{CanonQ-Omni-d1} QAT & 13.84 & 54.71\\ \midrule
\texttt{W2A4KV2} & \algn{HDP+LM PTQ}, {uniform} KV & 8868.91 & 36.76\\
\texttt{W2A4KV2} & \algn{CanonQ-Omni-d1} QAT, canonical KV & 15.42 & 53.56\\\bottomrule
\end{tabular}
\end{table}

\subsection{{Canonical versus Uniform Cache Quantization}}\label{app:cache_path_ablation}
{\Cref{tab:cache_path_ablation} compares cache codecs while keeping W/A quantization canonical. The uniform-KV variant uses the dynamic uniform operator in \cref{app:quantizer_formulations}. Canonical KV improves both PPL and eight-task accuracy at \texttt{A8} and \texttt{A4}, extending the benefit beyond activation handling. This comparison evaluates the complete cache design, including its transform, active levels, normalization, and backward graph.}

\begin{table}[!htbp]
\centering\small
\caption{{Cache-path ablation for \texttt{LLaMA3-1B CanonQ-Omni-d1}. W/A paths remain canonical; the comparison changes the complete cache path.}}
\label{tab:cache_path_ablation}
\setlength{\tabcolsep}{8pt}
\begin{tabular}{llrr}
\toprule
Format & Cache path & PPL $\downarrow$ & Avg. ZS $\uparrow$\\\midrule
\texttt{W2A8KV2} & {uniform} & 22.82 & 47.72\\
 & canonical & \textbf{15.70} & \textbf{53.45}\\\midrule
\texttt{W2A4KV2} & {uniform} & 21.65 & 48.30\\
 & canonical & \textbf{15.42} & \textbf{53.56}\\\bottomrule
\end{tabular}
\end{table}

\subsection{Per-task Results across Precisions and Models}
\label{subsec:precision_scaling}
Tables~\ref{tab:complete_w2a16_w1a16}--\ref{tab:crossmodel_tasks} report the complete eight-task scores. \texttt{ARC-e/c}, \texttt{HellaS.}, \texttt{OBQA}, and \texttt{Wino.} abbreviate \texttt{ARC-Easy/Challenge}, \texttt{HellaSwag}, \texttt{OpenBookQA}, and \texttt{WinoGrande}. The mean weights the eight tasks equally.

\paragraph{Weight-only and activation precision.}
In \cref{tab:complete_w2a16_w1a16}, \texttt{1B W2A16} gives closely matched results for \algn{ParetoQ} and \algn{CanonQ-W-d2}: 12.66/55.53 versus 12.86/55.45 (PPL/accuracy). At \texttt{W1A16}, \algn{CanonQ-W-d1} has the lowest PPL, while $d_W=2$ has the highest accuracy. The matched-$d_W=4$ comparison improves raw-space \algn{VQLLM-QAT} from 22.36/47.90 to 17.47/51.17. In \cref{tab:complete_w2a8_w1a8_w2a4}, extending canonicalization to activations preserves low PPL when the {uniform} activation path degrades: at \texttt{W2A4}, \algn{CanonQ-Omni-d1} achieves 13.84 PPL against 35.16 for \algn{CanonQ-W-d1}.

\begin{table}[!htbp]
\centering
\caption{\texttt{LLaMA3-1B} weight-only quantization. All task scores are zero-shot accuracies (\%); PPL is measured on \texttt{WikiText-2}.}
\label{tab:complete_w2a16_w1a16}
\resizebox{\linewidth}{!}{
\begin{tabular}{lcccccccccl}
\toprule
Method & \texttt{ARC-e} &  \texttt{ARC-c} & \texttt{BoolQ} & \texttt{PIQA} & \texttt{SIQA} & \texttt{HellaS.} & \texttt{OBQA} & \texttt{Wino.} & Avg. ZS $\uparrow$ & PPL $\downarrow$ \\
\midrule
FP16 Base & 64.57 & 42.85 & 65.35 & 75.05 & 44.91 & 64.31 & 50.00 & 61.25 & 58.54 & 9.75\\
\midrule\midrule

\texttt{W2A16}\\ \midrule
\algn{RTN} (PTQ) & 27.19 & 24.69 & 54.03 & 50.00 & 35.89 & 25.43 & 23.05 & 50.94 & 36.40 & 1.2e6 \\
\algn{GPTQ} (PTQ) & 26.95 & 25.31 & 61.66 & 49.12 & 36.67 & 25.47 & 29.69 & 50.78 & 38.21 & 1.2e4 \\
\algn{QuaRot} (PTQ) & 28.12 & 24.69 & 62.35 & 50.34 & 36.23 & 26.72 & 28.12 & 49.45 & 38.25 & 3.2e3\\
\midrule
\paretoq & 64.60 & 39.69 & 61.05 & 72.75 & 42.79 & 56.74 & 47.85 & 58.75 & \cellcolor{bestblue}{55.53} & \cellcolor{bestblue}{12.66}\\ \midrule
\algn{VQLLM-QAT-d4} & 62.89 & 40.62 & 62.65 & 72.61 & 42.63 & 54.43 & 46.09 & 59.06 & 55.12 & 13.77 \\ \midrule
\algn{CanonQ-W-d1} & 64.38 & 38.05 & 62.71 & 72.36 & 43.21 & 56.19 & 46.68 & 57.27 & 55.11 & 12.94\\
\algn{CanonQ-W-d2} & 62.77 & 41.02 & 63.91 & 71.68 & 43.21 & 56.35 & 45.31 & 59.38 & \cellcolor{secondblue}55.45 & \cellcolor{secondblue}12.86\\
\algn{CanonQ-W-d4} & 64.34 & 41.09 & 61.51 & 72.41 & 43.46 & 56.46 & 47.66 & 56.64 & \cellcolor{secondblue}55.45 & 12.96\\
\midrule\midrule

\texttt{W1A16}\\ \midrule
\algn{RTN} (PTQ) & 25.0 & 22.5 & 37.6 & 49.5 & 32.9 & 25.0 & 27.1 & 49.6 & 33.7 & 4.2e8\\
\algn{GPTQ} (PTQ) & 28.59 & 22.73 & 62.32 & 50.29 & 36.62 & 25.29 & 28.71 & 49.69 & 38.03 & 7.9e4\\
\algn{QuaRot} (PTQ) & 28.16 & 26.02 & 62.41 & 48.58 & 37.16 & 24.80 & 28.71 & 50.86 & 38.34 & 1.3e4\\ \midrule
\paretoq & 59.73 & 35.62 & 61.54 & 68.65 & 42.29 & 46.48 & 43.75 & 54.38 & \cellcolor{secondblue}51.55 & 18.13\\ \midrule
\algn{VQLLM-QAT-d4} & 53.55 & 31.72 & 60.22 & 65.33 & 40.87 & 40.03 & 37.11 & 54.38 & 47.90 & 22.36\\
\algn{VQLLM-QAT-d8} & 57.77 & 36.80 & 59.83 & 66.65 & 41.60 & 42.93 & 40.62 & 52.58 & 49.85 & 20.10\\ \midrule
\algn{CanonQ-W-d1} & 58.95 & 35.55 & 61.66 & 68.65 & 42.77 & 47.20 & 42.97 & 53.59 & 51.42 & \cellcolor{bestblue}{16.93}\\
\algn{CanonQ-W-d2} & 60.66 & 36.41 & 61.96 & 68.60 & 41.70 & 46.76 & 42.19 & 56.17 & \cellcolor{bestblue}{51.81} & \cellcolor{secondblue}17.41\\
\algn{CanonQ-W-d4} & 58.79 & 36.25 & 61.99 & 68.16 & 41.75 & 46.98 & 40.82 & 54.61 & 51.17 & 17.47\\
\algn{CanonQ-W-d8} & 57.77 & 34.63 & 59.68 & 68.10 & 41.63 & 44.85 & 42.38 & 55.47 & 50.56 & 18.79\\
\bottomrule
\end{tabular}
}
\end{table}

\begin{table}[!htbp]
\centering
\caption{\texttt{LLaMA3-1B} with quantized weights and activations, retaining the floating-point cache. The three panels vary weight and activation precision.}
\label{tab:complete_w2a8_w1a8_w2a4}
\resizebox{\linewidth}{!}{\begin{tabular}{lrrrrrrrrrr}
\toprule
Method & \texttt{ARC-e} & \texttt{ARC-c} & \texttt{BoolQ} & \texttt{PIQA} & \texttt{SIQA} & \texttt{HellaS.} & \texttt{OBQA} & \texttt{Wino.} & Avg. ZS $\uparrow$ & PPL $\downarrow$\\\midrule
FP base & 64.57 & 42.85 & 65.35 & 75.05 & 44.91 & 64.31 & 50.00 & 61.25 & 58.54 & 9.75\\ \midrule
\midrule\multicolumn{11}{l}{\texttt{W2A8}}\\\midrule
\algn{RTN} (PTQ) & 26.91 & 25.16 & 51.47 & 50.29 & 36.43 & 25.72 & 23.05 & 50.08 & 36.14 & 1.2e6\\
\algn{GPTQ} (PTQ) & 26.68 & 25.39 & 62.08 & 48.39 & 37.11 & 25.56 & 29.69 & 50.39 & 38.16 & 1.3e4\\
\algn{QuaRot} (PTQ) & 28.71 & 24.38 & 62.44 & 50.15 & 36.08 & 26.74 & 28.91 & 49.92 & 38.42 & 3.2e3\\ \midrule
\algn{ParetoQ} & 58.99 & 35.73 & 61.56 & 68.16 & 41.43 & 45.72 & 42.58 & 54.53 & 51.09 & 18.22\\ \midrule
\algn{VQLLM-QAT-d4} & 61.36 & 37.42 & 61.20 & 70.10 & 42.69 & 49.91 & 43.75 & 55.39 & 52.73 & 16.99\\ \midrule
\algn{CanonQ-W-d1} & 63.56 & 40.71 & 61.41 & 71.01 & 43.50 & 54.06 & 45.51 & 58.36 & \cellcolor{secondblue}54.76 & 14.38\\
\algn{CanonQ-W-d2} & 64.15 & 39.95 & 59.92 & 71.07 & 43.04 & 55.67 & 43.16 & 53.59 & 53.82 & 13.49\\
\algn{CanonQ-W-d4} & 61.06 & 39.86 & 62.11 & 70.20 & 42.49 & 50.69 & 43.36 & 53.83 & 52.95 & 16.01\\ \midrule
\algn{CanonQ-Omni-d1} & 65.84 & 40.20 & 62.35 & 72.14 & 43.85 & 55.99 & 47.85 & 56.56 & \cellcolor{bestblue}55.60 & \cellcolor{bestblue}13.25\\
\algn{CanonQ-Omni-d2} & 61.99 & 38.01 & 64.17 & 72.04 & 43.04 & 56.63 & 43.75 & 57.42 & 54.63 & \cellcolor{secondblue}13.42\\ \midrule
\midrule\multicolumn{11}{l}{\texttt{W1A8}}\\\midrule
\algn{RTN} (PTQ) & 28.24 & 25.47 & 39.39 & 51.46 & 36.04 & 24.70 & 23.63 & 50.55 & 34.94 & 1.1e6\\
\algn{GPTQ} (PTQ) & 27.77 & 23.05 & 62.38 & 50.98 & 36.47 & 25.09 & 29.69 & 49.30 & 38.09 & 1.0e5\\
\algn{QuaRot} (PTQ) & 28.01 & 25.62 & 62.41 & 48.88 & 37.30 & 24.89 & 28.32 & 51.41 & 38.36 & 1.3e4\\ \midrule
\algn{ParetoQ} & 55.62 & 32.35 & 61.92 & 66.11 & 41.13 & 40.65 & 40.82 & 50.62 & 48.65 & 21.27\\ \midrule
\algn{VQLLM-QAT-d4} & 51.90 & 31.50 & 62.35 & 63.95 & 40.27 & 36.76 & 35.94 & 51.25 & 46.74 & 25.98\\
\algn{VQLLM-QAT-d8} & 51.94 & 32.26 & 60.77 & 63.85 & 41.08 & 39.13 & 38.87 & 51.88 & 47.47 & 25.37\\ \midrule
\algn{CanonQ-W-d1} & 55.83 & 33.78 & 60.16 & 65.14 & 41.48 & 41.07 & 36.72 & 52.73 & 48.37 & 23.54\\
\algn{CanonQ-W-d2} & 55.83 & 36.57 & 61.83 & 65.41 & 41.23 & 42.37 & 39.84 & 52.50 & 49.45 & 21.23\\
\algn{CanonQ-W-d4} & 54.73 & 32.77 & 61.68 & 66.22 & 40.47 & 41.30 & 41.99 & 50.86 & 48.75 & 22.10\\
\algn{CanonQ-W-d8} & 54.65 & 34.88 & 59.83 & 67.40 & 42.19 & 43.75 & 40.43 & 53.59 & 49.59 & 20.54\\ \midrule
\algn{CanonQ-Omni-d1} & 58.15 & 36.99 & 60.25 & 68.10 & 42.54 & 47.21 & 42.19 & 54.53 & \cellcolor{bestblue}51.25 & \cellcolor{secondblue}17.76\\
\algn{CanonQ-Omni-d2} & 57.39 & 34.54 & 57.98 & 68.37 & 42.09 & 46.71 & 39.84 & 52.73 & \cellcolor{secondblue}49.96 & \cellcolor{bestblue}17.47\\ \midrule
\midrule\multicolumn{11}{l}{\texttt{W2A4}}\\\midrule
\algn{RTN} (PTQ) & 26.60 & 26.02 & 54.24 & 50.34 & 36.82 & 25.30 & 23.83 & 51.09 & 36.78 & 1.1e6\\
\algn{GPTQ} (PTQ) & 27.54 & 24.06 & 51.20 & 52.15 & 37.06 & 24.79 & 25.98 & 52.11 & 36.86 & 6.9e4\\
\algn{QuaRot} (PTQ) & 26.45 & 25.31 & 60.61 & 51.27 & 37.35 & 25.81 & 27.34 & 50.00 & 38.02 & 4.6e3\\ \midrule
\algn{ParetoQ} & 47.76 & 27.87 & 58.68 & 61.96 & 40.57 & 31.66 & 32.23 & 52.27 & 44.12 & 32.57\\ \midrule
\algn{VQLLM-QAT-d4} & 45.99 & 27.62 & 60.13 & 61.31 & 41.03 & 30.71 & 31.05 & 49.77 & 43.45 & 42.65\\ \midrule
\algn{CanonQ-W-d1} & 45.90 & 27.96 & 59.38 & 60.34 & 40.98 & 31.39 & 34.38 & 52.81 & 44.14 & 35.16\\
\algn{CanonQ-W-d2} & 47.76 & 27.79 & 56.64 & 61.75 & 39.67 & 31.56 & 33.01 & 51.09 & 43.66 & 35.20\\
\algn{CanonQ-W-d4} & 47.30 & 29.48 & 60.53 & 60.78 & 40.12 & 32.01 & 34.77 & 51.25 & 44.53 & 35.53\\ \midrule
\algn{CanonQ-Omni-d1} & 63.89 & 39.10 & 60.19 & 71.39 & 43.40 & 55.28 & 46.09 & 58.36 & \cellcolor{bestblue}54.71 & \cellcolor{bestblue}13.84\\
\algn{CanonQ-Omni-d2} & 64.40 & 39.78 & 62.04 & 70.64 & 42.24 & 54.85 & 45.90 & 56.41 & \cellcolor{secondblue}54.53 & \cellcolor{secondblue}14.51\\
\bottomrule\end{tabular}}
\end{table}

\paragraph{Joint compression at \texttt{1B}.}
Table~\ref{tab:complete_w2a8kv2_w2a4kv2} shows that \algn{CanonQ-Omni-d1} retains similar quality at \texttt{A8} and \texttt{A4} with \texttt{KV2}: 15.70/53.45 and 15.42/53.56. The corresponding \algn{ParetoQ} results fall from 47.58/45.13 to 90.57/40.47. At \texttt{W2A4KV2}, \algn{CanonQ-Omni-d1} improves all eight tasks over \algn{ParetoQ}; scalar canonical tables suffice for this operating point.

\begin{table}[!htbp]
\centering
\caption{\texttt{LLaMA3-1B} joint W/A/KV quantization. \algn{CanonQ-Omni} uses {canonical KV quantization}; other QAT methods use {uniform} KV.}
\label{tab:complete_w2a8kv2_w2a4kv2}
\resizebox{\linewidth}{!}{
\begin{tabular}{lcccccccccl}
\toprule
Method & \texttt{ARC-e} &  \texttt{ARC-c} & \texttt{BoolQ} & \texttt{PIQA} & \texttt{SIQA} & \texttt{HellaS.} & \texttt{OBQA} & \texttt{Wino.} & Avg. ZS $\uparrow$ & PPL $\downarrow$ \\
\midrule
FP16 Base & 64.57 & 42.85 & 65.35 & 75.05 & 44.91 & 64.31 & 50.00 & 61.25 & 58.54 & 9.75\\
\midrule\midrule

\texttt{W2A8KV2}\\ \midrule
 \algn{RTN} (PTQ) & 26.88 & 22.50 & 56.16 & 51.07 & 37.26 & 25.30 & 27.54 & 51.80 & 37.31 & 5.8e5\\
 \algn{GPTQ} (PTQ) & 27.42 & 26.41 & 61.24 & 49.76 & 37.70 & 25.72 & 25.59 & 52.03 & 38.23 & 5.3e4\\
 \algn{QuaRot} (PTQ) & 25.86 & 24.06 & 61.90 & 51.95 & 36.43 & 24.81 & 25.00 & 49.14 & 37.39 & 3.4e4\\ \midrule
 \algn{KIVI} & 27.73 & 23.36 & 37.59 & 52.34 & 38.67 & 26.75 & 23.83 & 48.52 & 34.85 & 1.1e5\\ \midrule
 \paretoq & 47.76 & 29.81 & 61.89 & 62.98 & 40.68 & 31.72 & 34.96 & 51.25 & 45.13 & 47.58 \\ \midrule
 \algn{VQLLM-QAT-d4} & 46.33 & 29.81 & 61.98 & 61.75 & 39.67 & 34.56 & 34.96 & 53.75 & 45.35 & 34.35\\ \midrule
 \algn{CanonQ-W-d1} & 51.65 & 31.67 & 61.41 & 64.33 & 40.32 & 37.06 & 33.59 & 51.25 & 46.41 & 30.07 \\
 \algn{CanonQ-W-d2} & 53.34 & 33.61 & 57.19 & 65.79 & 40.07 & 38.06 & 39.65 & 52.81 & 47.56 & 30.19 \\
 \algn{CanonQ-W-d4} & 50.76 & 31.25 & 61.59 & 64.71 & 40.47 & 37.57 & 37.89 & 51.02 & 46.91 & 31.27 \\ \midrule
 \algn{CanonQ-Omni-d1} & 60.22 & 38.85 & 60.53 & 70.26 & 43.90 & 53.56 & 45.51 & 54.77 & \cellcolor{bestblue}53.45 & \cellcolor{bestblue}15.70 \\
 \algn{CanonQ-Omni-d2} & 60.73 & 34.97 & 62.65 & 70.15 & 43.25 & 50.52 & 44.14 & 54.92 & \cellcolor{secondblue}52.67 & \cellcolor{secondblue}16.43 \\
\midrule\midrule

\texttt{W2A4KV2}\\ \midrule
 \algn{RTN} (PTQ) & 28.83 & 25.55 & 54.33 & 50.59 & 36.08 & 25.11 & 23.24 & 50.00 & 36.72 & 5.5e5\\
 \algn{GPTQ} (PTQ) & 28.12 & 23.91 & 58.77 & 52.64 & 37.30 & 25.23 & 24.41 & 52.27 & 37.83 & 9.5e4\\
 \algn{QuaRot} (PTQ) & 27.58 & 25.16 & 61.75 & 51.17 & 36.28 & 24.83 & 25.98 & 52.11 & 38.11 & 4.1e4 \\
 \midrule
 \algn{KIVI} & 26.48 & 25.16 & 37.80 & 52.69 & 37.40 & 25.80 & 25.00 & 50.94 & 35.16 & 3.2e5\\ \midrule
 \paretoq & 40.41 & 25.17 & 54.34 & 56.57 & 39.21 & 28.13 & 30.08 & 49.84 & 40.47 & 90.57 \\ \midrule
 \algn{VQLLM-QAT-d4} & 41.72 & 25.93 & 61.95 & 58.41 & 39.62 & 28.84 & 33.40 & 48.67 & 42.32 & 76.51 \\ \midrule
 \algn{CanonQ-W-d1} & 41.47 & 28.29 & 58.10 & 59.38 & 39.47 & 28.78 & 32.42 & 52.03 & 42.49 & 64.40 \\
 \algn{CanonQ-W-d2} & 42.57 & 29.90 & 56.10 & 60.78 & 39.92 & 29.22 & 36.33 & 48.83 & 42.95 & 64.48 \\
 \algn{CanonQ-W-d4} & 42.02 & 25.25 & 57.68 & 58.14 & 39.31 & 28.28 & 32.62 & 52.42 & 41.97 & 76.19 \\ \midrule
 \algn{CanonQ-Omni-d1} & 61.32 & 40.79 & 62.14 & 70.20 & 42.99 & 52.23 & 43.95 & 54.84 & \cellcolor{bestblue}53.56 & \cellcolor{bestblue}15.42 \\
 \algn{CanonQ-Omni-d2} & 63.47 & 36.74 & 59.34 & 70.20 & 43.30 & 52.48 & 44.34 & 55.39 & \cellcolor{secondblue}53.16 & \cellcolor{secondblue}15.90 \\
\bottomrule
\end{tabular}
}
\end{table}

\paragraph{Weight-only and joint compression at \texttt{3B}.}
Table~\ref{tab:complete_3b} places all \texttt{3B} precisions under one FP reference. The \texttt{W2A16} difference is small: \algn{CanonQ-W-d2} reaches 9.19/63.02 versus 9.29/62.84 for \algn{ParetoQ}. Under \texttt{W2A4KV2}, the separation grows to 11.16/60.04 versus 92.64/39.39. Dimension changes the PPL--accuracy tradeoff: $d_W=1$ yields lower PPL (10.93), while $d_W=2$ raises accuracy from 58.37 to 60.04. Neither dimension dominates both metrics.

\begin{table}[!htbp]
\caption{\texttt{LLaMA3-3B}: weight-only, weight--activation, and joint W/A/KV quantization. The same FP reference applies to all three panels. Accuracy is in percent; lower PPL is better.}
\label{tab:complete_3b}
\resizebox{\linewidth}{!}{
\begin{tabular}{lrrrrrrrrrr}
\toprule
Method & \texttt{ARC-e} & \texttt{ARC-c} & \texttt{BoolQ} & \texttt{PIQA} & \texttt{SIQA} & \texttt{HellaS.} & \texttt{OBQA} & \texttt{Wino.} & Avg. ZS $\uparrow$ & PPL $\downarrow$\\\midrule
FP base & {72.99} & {49.84} & {75.12} & {78.12} & {48.29} & {74.05} & {53.52} & {68.91} & {65.11} & {7.73}\\\midrule\midrule
\texttt{W2A16}\\ \midrule
 \algn{RTN} (PTQ) & 27.11 & 25.94 & 47.30 & 50.34 & 36.91 & 25.45 & 25.78 & 49.30 & 36.02 & 3.6e5 \\
 \algn{GPTQ} (PTQ) & 27.34 & 23.83 & 61.54 & 51.12 & 35.35 & 26.10 & 29.49 & 48.83 & 37.95 & 7.3e3\\
 \algn{QuaRot} (PTQ) & 28.01 & 25.70 & 56.43 & 50.24 & 37.45 & 25.61 & 26.17 & 50.08 & 37.46 & 3.3e2\\ \midrule
\paretoq & {73.97} & 50.00 & 68.42 & 75.83 & {46.78} & 68.30 & {55.47} & 63.91 & \cellcolor{secondblue}62.84 & \cellcolor{secondblue}9.29 \\ \midrule
\algn{CanonQ-W-d1} & 73.31 & {50.08} & 69.02 & {75.99} & 46.63 & {68.54} & 51.76 & 64.22 & 62.44 & 9.37\\
\algn{CanonQ-W-d2} & 73.56 & 49.38 & {72.48} & 75.78 & 45.95 & 68.42 & 53.32 & {65.23} & \cellcolor{bestblue}{63.02} & \cellcolor{bestblue}{9.19} \\
\midrule\midrule
\multicolumn{11}{l}{\texttt{W2A4}}\\\midrule
\algn{RTN} (PTQ) & 27.23 & 24.84 & 46.30 & 50.54 & 37.60 & 25.34 & 24.80 & 49.30 & 35.74 & 4.2e5\\
\algn{GPTQ} (PTQ) & 28.05 & 25.08 & 61.57 & 51.51 & 37.84 & 24.99 & 27.54 & 48.36 & 38.12 & 7.9e4\\
\algn{QuaRot} (PTQ) & 27.93 & 25.47 & 55.05 & 51.71 & 37.84 & 26.29 & 24.80 & 48.75 & 37.23 & 6.7e2\\\midrule
\algn{ParetoQ} & 62.80 & 40.03 & 55.98 & 68.10 & 44.46 & 53.89 & 45.90 & 54.06 & 53.15 & 17.88\\ \midrule
\algn{VQLLM-QAT-d4} & 49.32 & 30.24 & 61.86 & 63.09 & 40.12 & 34.92 & 37.70 & 53.12 & 46.30 & 30.71\\ \midrule
\algn{CanonQ-W-d1} & 62.50 & 36.64 & 60.82 & 69.38 & 42.09 & 50.72 & 38.67 & 53.52 & 51.79 & 16.52 \\
\algn{CanonQ-W-d2} & 64.10 & 37.58 & 59.53 & 68.80 & 42.44 & 52.99 & 45.31 & 53.98 & 53.09 & 18.12\\ \midrule
\algn{CanonQ-Omni-d1} & 72.17 & 46.88 & 69.17 & 75.70 & 44.56 & 66.26 & 52.73 & 61.80 & \cellcolor{bestblue}61.16 & \cellcolor{secondblue}10.66 \\
\algn{CanonQ-Omni-d2} & 72.26 & 47.47 & 67.84 & 76.08 & 46.22 & 66.14 & 50.78 & 62.34 & \cellcolor{secondblue}61.14 & \cellcolor{bestblue}10.50\\\midrule\midrule
\multicolumn{11}{l}{\texttt{W2A4KV2}}\\\midrule
\algn{RTN} (PTQ) & 27.42 & 25.62 & 50.84 & 50.44 & 37.35 & 25.15 & 26.37 & 51.02 & 36.78 & 1.1e6\\
\algn{GPTQ} (PTQ) & 27.62 & 26.33 & 59.83 & 51.07 & 36.77 & 25.45 & 27.54 & 48.44 & 37.88 & 8.9e4\\
\algn{QuaRot} (PTQ) & 27.81 & 24.84 & 61.75 & 50.34 & 36.04 & 25.59 & 27.73 & 49.53 & 37.95 & 1.5e4\\\midrule
\algn{ParetoQ} & 37.29 & 23.82 & 49.91 & 55.23 & 39.26 & 26.18 & 32.23 & 51.17 & 39.39 & 92.64\\ \midrule
\algn{VQLLM-QAT-d4} & 45.14 & 27.20 & 44.54 & 60.51 & 41.08 & 30.78 & 34.77 & 52.58 & 42.07 & 52.82\\ \midrule
\algn{CanonQ-W-d1} & 39.02 & 26.52 & 44.39 & 55.50 & 39.21 & 26.94 & 28.52 & 52.11 & 39.03 & 76.28\\
\algn{CanonQ-W-d2} & 43.75 & 30.07 & 55.31 & 59.05 & 39.11 & 34.06 & 33.98 & 52.97 & 43.54 & 30.24 \\
\midrule
\algn{CanonQ-Omni-d1} & 68.29 & 44.09 & 67.05 & 74.46 & 44.20 & 63.18 & 45.70 & 60.00 & \cellcolor{secondblue}58.37 & \cellcolor{bestblue}10.93\\
\algn{CanonQ-Omni-d2} & 70.62 & 45.62 & 69.20 & 74.61 & 45.46 & 64.38 & 50.78 & 59.61 & \cellcolor{bestblue}60.04 & \cellcolor{secondblue}11.16\\\bottomrule
\end{tabular}}
\end{table}

{\textbf{Transfer across model family and scale.} In \cref{tab:crossmodel_tasks}, \algn{CanonQ-Omni-d1} improves every task over \algn{ParetoQ} on both \texttt{Qwen3-1.7B} and \texttt{LLaMA3-8B}. The largest gains occur on \texttt{HellaSwag}: 17.93 and 42.68 points, respectively. Improvements also span the remaining reasoning and commonsense tasks, so the aggregate gain reflects broad recovery with task-dependent magnitudes. The different PPL and accuracy gaps to FP reinforce the value of evaluating likelihood and task performance together.}

\begin{table}[!htbp]
\centering
\caption{Cross-family and scale transfer: \texttt{Qwen3-1.7B W2A4} and \texttt{LLaMA3-8B W2A4KV2}.}
\label{tab:crossmodel_tasks}
\resizebox{\linewidth}{!}{
\begin{tabular}{lrrrrrrrrrr}
\toprule
\multicolumn{11}{l}{\texttt{Qwen3-1.7B, W2A4}}\\ \midrule
Method & \texttt{ARC-e} & \texttt{ARC-c} & \texttt{BoolQ} & \texttt{PIQA} & \texttt{SIQA} & \texttt{HellaS.} & \texttt{OBQA} & \texttt{Wino.} & Avg. ZS $\uparrow$ & PPL $\downarrow$\\\midrule
FP base & 68.79 & 41.48 & 79.12 & 71.39 & 44.73 & 59.51 & 38.67 & 61.09 & 58.10 & 16.33\\ \midrule
\algn{ParetoQ} & 41.01 & 25.25 & 60.76 & 59.43 & 41.48 & 28.67 & 30.66 & 51.56 & 42.35 & 43.56\\ \midrule
\algn{CanonQ-Omni-d1} & 58.11 & 34.12 & 62.38 & 67.13 & 43.04 & 46.60 & 43.95 & 52.34 & \cellcolor{bestblue}50.96 & \cellcolor{bestblue}17.80\\\midrule\midrule
\multicolumn{11}{l}{\texttt{LLaMA3-8B, W2A4KV2}}\\ \midrule
FP base & 80.95 & 57.88 & 83.34 & 81.14 & 49.34 & 79.50 & 55.56 & 73.43 & 70.14 & 6.14\\ \midrule
\algn{ParetoQ} & 36.36 & 26.71 & 39.98 & 55.33 & 37.91 & 26.15 & 30.36 & 50.16 & 37.87 & 134.36\\ \midrule
\algn{CanonQ-Omni-d1} & 70.86 & 42.04 & 71.30 & 73.80 & 45.26 & 68.83 & 43.65 & 62.74 & \cellcolor{bestblue}59.81 & \cellcolor{bestblue}9.41\\\bottomrule
\end{tabular}}
\end{table}

\paragraph{Where the aggregate gain comes from.}
Figure~\ref{fig:quality_decomposition} connects the precision sweep to its task-level pattern. As compression extends from \texttt{W2A8} to \texttt{W2A4KV2}, \algn{CanonQ-Omni-d1} stays within 13.25--15.42 PPL, while the other QAT paths degrade. In \texttt{W2A4}, all eight tasks improve over \algn{ParetoQ}; gains range from 1.51 points on \texttt{BoolQ} to 23.62 on \texttt{HellaSwag}. Thus the mean improvement is distributed across tasks, although its size depends strongly on the benchmark.

\begin{figure}[!htbp]
\centering
\includegraphics[width=\linewidth]{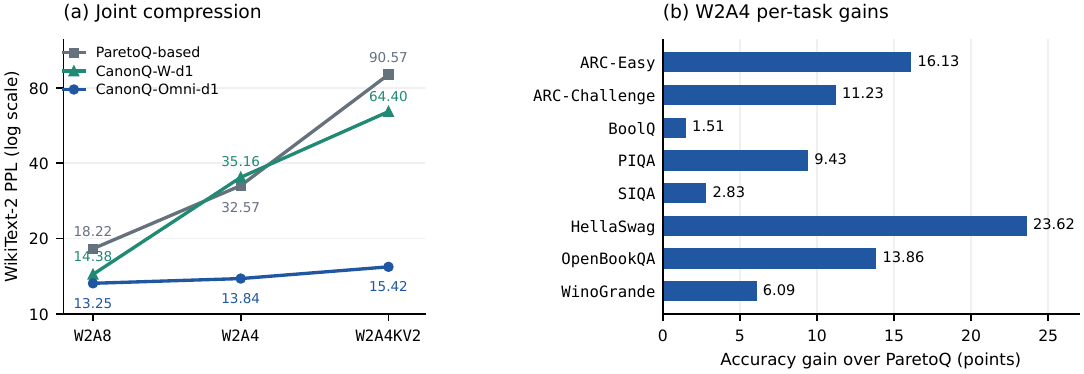}
\caption{{Quality under joint compression on \texttt{LLaMA3-1B}. (a) \algn{CanonQ-Omni-d1} retains low PPL as activation and cache precision decrease; the vertical axis is logarithmic. (b) Its \texttt{W2A4} improvement over \algn{ParetoQ} spans all eight tasks, with uneven gain magnitudes. Points summarize separately trained configurations; task scores appear in \cref{tab:complete_w2a8_w1a8_w2a4}.}}
\label{fig:quality_decomposition}
\end{figure}

\clearpage
\section{Context-Length Robustness after \texttt{8K} Continued QAT}
\label{app:long_context_8k}
\paragraph{Protocol.}
All context-length results use \texttt{LLaMA3-1B W2A4KV2}, continued QAT with \texttt{8192}-token sequences for \texttt{40K} updates, and evaluation at \texttt{2048/4096/8192} tokens. The FP reference is the base checkpoint evaluated at each length. Table~\ref{tab:long_context_8k} reports PPL and the same-length ratio $r_M(L)=\operatorname{PPL}_M(L)/\operatorname{PPL}_{\mathrm{FP}}(L)$.

\begin{table}[t]
\centering\small
\caption{\texttt{LLaMA3-1B W2A4KV2}: all quantized methods receive \texttt{8K} continued QAT for \texttt{40K} updates. \texttt{WikiText-2} PPL and its ratio to the same-length FP reference.}
\label{tab:long_context_8k}
\setlength{\tabcolsep}{9pt}
\begin{tabular}{lrrrrrr}
\toprule
& \multicolumn{3}{c}{PPL $\downarrow$} & \multicolumn{3}{c}{$\times$ FP $\downarrow$}\\
\cmidrule(lr){2-4}\cmidrule(lr){5-7}
Method & \texttt{2K} & \texttt{4K} & \texttt{8K} & \texttt{2K} & \texttt{4K} & \texttt{8K}\\\midrule
FP reference & 9.71 & 9.21 & 8.74 & 1.00 & 1.00 & 1.00\\ \midrule
\algn{ParetoQ} & 50.42 & 46.67 & 45.63 & 5.19 & 5.07 & 5.22\\
\algn{CanonQ-W} & 43.16 & 40.24 & 39.50 & 4.44 & 4.37 & 4.52\\
\algn{CanonQ-Omni} & \textbf{16.86} & \textbf{15.55} & \textbf{14.86} & \textbf{1.74} & \textbf{1.69} & \textbf{1.70}\\\bottomrule
\end{tabular}
\end{table}

\paragraph{Relative quality remains stable across the tested lengths.}
\algn{CanonQ-Omni} improves from 16.86 to 15.55 to 14.86 PPL as the evaluation window grows. Its reductions relative to \algn{ParetoQ} are $2.99\times$, $3.00\times$, and $3.07\times$; at \texttt{8K}, it also improves over \algn{CanonQ-W} by $2.66\times$. FP improves from 9.71 to 9.21 to 8.74, leaving \algn{CanonQ-Omni} at $1.69$--$1.74\times$ FP across lengths, versus $4.37$--$4.52\times$ for \algn{CanonQ-W} and $5.07$--$5.22\times$ for \algn{ParetoQ}. The ordering is consistent at all three lengths.

\paragraph{Interpretation.}
All evaluation windows lie within the \texttt{8K} continued-QAT training range. The experiment measures language-modeling quality across these windows. Quantized models receive continued QAT, while the FP checkpoint serves as a fixed same-length reference; the resulting ratios combine training and quantization effects. Longer windows provide additional preceding tokens and can change the scored-token set. We therefore assess context-length robustness through the consistent method ordering and approximately stable same-length FP ratio.

\section{Theory Details and Proofs}
\label{app:proofs}

Throughout this appendix, $Q_d=\Normal(0,I_d)$ and all tuple laws have finite
second moment.  Wasserstein distance is induced by the Euclidean norm.  For a
finite codebook $C\subset\mathbb R^d$, write
$d_C(v)=\dist(v,C)$ and let $\Pi_C(v)$ denote nearest-codeword assignment with
the deterministic tie rule used by the implementation.  The main transfer
result uses root mean-squared distortion because it is stable under $W_2$
without bounded support.

\subsection{The Haar Reference and the Implemented Hadamard Map}
\label{app:reference_and_hadamard}

\begin{proposition}[Joint Haar reference law]
\label{prop:haar_reference}
Let $R$ be Haar-random on $\mathsf O(g)$, let $u\ne0$ be fixed, and let
$1\le d<g$.  The first $d$ coordinates of
$V=\sqrt g\,Ru/\|u\|_2$ have density
\begin{equation}
    p_{g,d}(v)
    =
    \frac{\Gamma(g/2)}
    {\pi^{d/2}g^{d/2}\Gamma((g-d)/2)}
    \left(1-\frac{\|v\|_2^2}{g}\right)^{(g-d-2)/2},
    \qquad \|v\|_2<\sqrt g,
    \label{eq:joint_hypersphere_density}
\end{equation}
The law is independent of the direction of $u$, and for fixed $d$ it
converges to $Q_d$ in $W_2$ as $g\to\infty$.
\end{proposition}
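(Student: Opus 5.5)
By Haar invariance, $Ru/\|u\|_2$ is uniformly distributed on the unit sphere $S^{g-1}$ for every fixed $u\ne0$. Indeed, for any fixed unit vector $e$, pick an orthogonal $O$ with $Ou/\|u\|_2=e$. Then $Ru/\|u\|_2=(RO^\top)e$, and $RO^\top$ is again Haar. This gives independence from the direction of $u$ immediately: $V=\sqrt g\,\Theta$ with $\Theta$ uniform on the sphere.

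Next I will represent $\Theta=G/\|G\|_2$ with $G\sim\Normal(0,I_g)$. For the density, write $\Theta_{1:d}=X/\sqrt{X^\top X+Y}$, where $X=G_{1:d}$ and $Y=\|G_{d+1:g}\|_2^2\sim\chi^2_{g-d}$ are independent. The squared norm $\|\Theta_{1:d}\|^2$ is $\mathrm{Beta}(d/2,(g-d)/2)$, and the direction of $\Theta_{1:d}$ is uniform and independent of its norm. The radial density $r^{d-1}(1-r^2)^{(g-d-2)/2}$ then yields a density on the unit ball proportional to $(1-\|t\|^2)^{(g-d-2)/2}$. The normalizing constant is $\Gamma(g/2)/(\pi^{d/2}\Gamma((g-d)/2))$, which I check by the standard Dirichlet/Beta integral. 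Rescaling $v=\sqrt g\,t$ multiplies the constant by $g^{-d/2}$ and gives \cref{eq:joint_hypersphere_density}. This requires $d<g$ so that $Y$ has a density.

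For $W_2$ convergence I will use an explicit coupling rather than density asymptotics. Take the same $G$ and compare $V_{1:d}=\sqrt g\,X/\|G\|_2$ with $X\sim Q_d$. Then
\[
\mathbb E\|V_{1:d}-X\|_2^2=\mathbb E\Big[\|X\|_2^2\big(\sqrt g/\|G\|_2-1\big)^2\Big].
\]
I bound this by Cauchy--Schwarz: the first factor is $\sqrt{\mathbb E\|X\|^4}=\sqrt{d(d+2)}$, and the second is $\sqrt{\mathbb E(\sqrt g/\|G\|-1)^4}$. The second factor tends to $0$ because $\|G\|_2/\sqrt g\to1$ almost surely by the law of large numbers.

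The main obstacle is uniform integrability of $(\sqrt g/\|G\|)^4$ near $\|G\|=0$. I handle it with the inverse chi-square moment $\mathbb E[(g/\chi^2_g)^2]=g^2/((g-2)(g-4))$, which is bounded for $g\ge 6$. Dominated convergence, or simply $L^p$ convergence via bounded higher moments, then gives the limit $0$. Since $W_2^2$ is at most the coupling cost, $W_2(\mathrm{Law}(V_{1:d}),Q_d)\to0$ for fixed $d$.

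Alternatively, I can verify convergence directly: the density converges pointwise to the Gaussian density via $(1-\|v\|^2/g)^{(g-d-2)/2}\to e^{-\|v\|^2/2}$ and $\Gamma(g/2)/\Gamma((g-d)/2)\sim(g/2)^{d/2}$. Scheffé's lemma then gives weak convergence, and the second moments match exactly, since $\mathbb E\|V_{1:d}\|^2=d$. Weak convergence together with convergence of second moments is equivalent to $W_2$ convergence.
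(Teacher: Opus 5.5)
Your proof is correct. The density part follows the paper's proof: Haar invariance gives uniformity on the sphere, then the $G/\|G\|_2$ representation, the $\mathrm{Beta}(d/2,(g-d)/2)$ squared tuple norm, polar coordinates, and rescaling. Your closing alternative is essentially the paper's $W_2$ argument. The only difference there is that the paper gets weak convergence from the law of large numbers and Slutsky applied to $G_{1:d}/\sqrt{\|G\|_2^2/g}$, not from Scheffé. It then uses the exact identity $\mathbb E\|V_{1:d}\|_2^2=d$ and the characterization of $W_2$ convergence as weak convergence plus convergence of second moments.

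Your main route is genuinely different: you bound $W_2$ by the cost of the synchronous coupling $(\sqrt g\,G_{1:d}/\|G\|_2,\,G_{1:d})$. This avoids that characterization, but as written it needs uniform integrability of $(\sqrt g/\|G\|_2)^4$. A merely bounded $\mathbb E[(g/\chi^2_g)^2]$ does not give uniform integrability. You can fix this with a higher inverse moment such as $\mathbb E[(g/\chi^2_g)^3]=g^3/((g-2)(g-4)(g-6))$, or with the fact that $\mathbb E[(g/\chi^2_g)^2]\to1$ together with convergence in probability.

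Better still, drop the Cauchy--Schwarz split. Since $\|G\|_2$ is independent of $\Theta=G/\|G\|_2$ and $V_{1:d}-G_{1:d}=(\sqrt g-\|G\|_2)\Theta_{1:d}$, the coupling cost is exactly $(d/g)\,\mathbb E(\sqrt g-\|G\|_2)^2$. This satisfies $(d/g)\,\mathbb E(\sqrt g-\|G\|_2)^2\le (d/g)\,\mathbb E(g-\|G\|_2^2)^2/g=2d/g$. That gives the explicit rate $W_2\le\sqrt{2d/g}$, i.e.\ a Haar-reference defect $\Delta_d\le\sqrt{2/g}$. This rate is what the coupling approach buys over the paper's purely qualitative limit.
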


\begin{proof}
Set $q=u/\|u\|_2$.  Haar invariance makes $Rq$ uniform on
$\mathbb S^{g-1}$.  If $G\sim\Normal(0,I_g)$, then
$Rq\overset{d}=G/\|G\|_2$.  For $X=(Rq)_{1:d}$,
\[
    \|X\|_2^2
    =\frac{\sum_{j\le d}G_j^2}{\sum_{j\le g}G_j^2}
    \sim\operatorname{Beta}\!\left(\frac d2,\frac{g-d}{2}\right).
\]
Rotational invariance in the first $d$ coordinates and normalization in polar
coordinates give
\[
    f_X(x)=
    \frac{\Gamma(g/2)}{\pi^{d/2}\Gamma((g-d)/2)}
    (1-\|x\|_2^2)^{(g-d-2)/2},
    \qquad \|x\|_2\le1.
\]
The change of variables $v=\sqrt g\,x$ yields
\cref{eq:joint_hypersphere_density}.  Finally,
\[
    V\overset{d}=
    \frac{G_{1:d}}{\sqrt{\|G\|_2^2/g}}
    \Rightarrow \Normal(0,I_d)
\]
by the law of large numbers and Slutsky's theorem.
Moreover, $\mathbb E\|V\|_2^2=d=\mathbb E_{Q_d}\|V\|_2^2$ for every $g$.
Weak convergence together with convergence of second moments is equivalent to
$W_2$ convergence, which proves the stated metric claim.
\end{proof}

The preceding proposition motivates the reference law but does not describe
the implemented transform.  The next result applies directly to one
random-sign Hadamard map and exposes the input-dependent quantities that must
be checked.

\begin{proposition}[One-HDP coordinate and covariance diagnostics]
\label{prop:hadamard_bridge}
Let $H\in\{\pm g^{-1/2}\}^{g\times g}$ be orthogonal, let
$D=\diag(\epsilon_1,\ldots,\epsilon_g)$ with independent Rademacher signs,
and let $\mathsf P$ be an independent uniform permutation.  For a fixed unit
vector $q$ independent of $(D,\mathsf P)$, define
$\widetilde Z=\sqrt g\,HDq$.  Every fixed Hadamard row satisfies
\begin{equation}
    \mathbb E_D \widetilde Z_i=0,
    \qquad
    \mathbb E_D \widetilde Z_i^2=1,
    \qquad
    \Pr_D(|\widetilde Z_i|\ge t)\le2e^{-t^2/2},
    \label{eq:hadamard_subgaussian}
\end{equation}
and, for a universal Berry--Esseen constant $C_{\mathrm{BE}}$,
\begin{equation}
    \sup_x|\Pr_D(\widetilde Z_i\le x)-\Phi(x)|
    \le
    C_{\mathrm{BE}}\sum_{j=1}^g|q_j|^3
    \le C_{\mathrm{BE}}\|q\|_\infty.
    \label{eq:hadamard_berry_esseen}
\end{equation}
If $I=(i_1,\ldots,i_d)$ is the ordered set of distinct Hadamard rows selected
by the first $d$ positions of $\mathsf P$, the deployed first tuple is
$Z_{1:d}:=\widetilde Z_I$, and
$\Sigma_I(q):=\mathbb E_D[\widetilde Z_I\widetilde Z_I^\top\mid I,q]$.  Then
\begin{equation}
    \mathbb E_{\mathsf P}
    \|\Sigma_I(q)-I_d\|_F^2
    =
    d(d-1)\frac{g\|q\|_4^4-1}{g-1}.
    \label{eq:hadamard_covariance_defect}
\end{equation}
\end{proposition}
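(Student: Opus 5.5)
The plan is to write each coordinate as a Rademacher sum and treat the three claims in turn. For a fixed row $i$, $\widetilde Z_i=\sqrt g\sum_j H_{ij}\epsilon_j q_j=\sum_j s_{ij}\epsilon_j q_j$, where $s_{ij}=\sqrt g H_{ij}\in\{\pm1\}$. Since $s_{ij}\epsilon_j$ are again independent Rademacher signs, $\widetilde Z_i\overset{d}=\sum_j\epsilon_j q_j$.

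\emph{Moments and tail.} The mean is zero by symmetry of the signs. The variance is $\sum_j q_j^2=\|q\|_2^2=1$. The tail bound follows from Hoeffding's inequality: the moment generating function satisfies $\mathbb E e^{\lambda\widetilde Z_i}=\prod_j\cosh(\lambda q_j)\le e^{\lambda^2/2}$, and a Chernoff bound on each side gives $2e^{-t^2/2}$.

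\emph{Berry--Esseen.} Apply the classical non-identically-distributed Berry--Esseen theorem to the independent summands $X_j=\epsilon_j q_j$. These have $\mathbb E|X_j|^3=|q_j|^3$ and total variance $1$, so the error is at most $C_{\mathrm{BE}}\sum_j|q_j|^3$. The second inequality is $\sum_j|q_j|^3\le\|q\|_\infty\sum_j q_j^2=\|q\|_\infty$.

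\emph{Covariance defect.} For distinct rows $a\ne b$, independence of the signs gives
\[
\mathbb E_D\widetilde Z_a\widetilde Z_b=g\sum_j H_{aj}H_{bj}q_j^2=\sum_j s_{aj}s_{bj}q_j^2 .
\]
The diagonal entries equal $1$ exactly, so $\|\Sigma_I-I_d\|_F^2=\sum_{k\ne l}c_{i_ki_l}^2$, where $c_{ab}=\sum_j s_{aj}s_{bj}q_j^2$. Under a uniform permutation, each ordered pair $(i_k,i_l)$ with $k\ne l$ is uniform over the ordered pairs of distinct rows. Hence
\[
\mathbb E_{\mathsf P}\|\Sigma_I-I_d\|_F^2=\frac{d(d-1)}{g(g-1)}\sum_{a\ne b}c_{ab}^2 .
\]
To evaluate $\sum_{a\ne b}c_{ab}^2$, use column orthogonality of the $\pm1$ matrix $S=\sqrt gH$, namely $\sum_a s_{aj}s_{aj'}=g\,\mathbf 1[j=j']$. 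This gives
\[
\sum_{a,b}c_{ab}^2=\sum_{j,j'}q_j^2q_{j'}^2\Bigl(\sum_a s_{aj}s_{aj'}\Bigr)^2=g^2\|q\|_4^4 .
\]
Subtracting the $g$ diagonal terms $c_{aa}=1$ leaves $g^2\|q\|_4^4-g$. Multiplying by $d(d-1)/(g(g-1))$ yields $d(d-1)(g\|q\|_4^4-1)/(g-1)$, as claimed.

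The main obstacle is this double-sum identity. The key is to expand the squares and exchange sums so that the column orthogonality of $H$, rather than its row orthogonality, collapses the inner sum. One must also check that the uniform permutation makes the ordered row pair exactly uniform over distinct pairs, and that the independence of $q$ from $(D,\mathsf P)$ allows conditioning on $I$ first.
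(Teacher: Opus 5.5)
Your proposal is correct and follows the paper's proof: a Rademacher-sum representation, the $\cosh$ moment-generating-function/Chernoff tail, Berry--Esseen on the independent summands, and column orthogonality of $\sqrt g\,H$ to collapse the squared off-diagonal covariances. The only difference is bookkeeping. You sum over all ordered row pairs to get $g^2\|q\|_4^4-g$, whereas the paper fixes one row, sums over the other to get $g\|q\|_4^4-1$, and then averages. Both give the same expected squared covariance for a uniformly sampled distinct pair.
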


\begin{proof}
Writing the unnormalized Hadamard signs as $s_{ij}\in\{\pm1\}$ gives $\widetilde Z_i=\sum_{j=1}^g s_{ij}\epsilon_jq_j$.
The mean and variance in \cref{eq:hadamard_subgaussian} follow from
independence and $\sum_jq_j^2=1$.  Moreover,
\[
    \mathbb E_D e^{\lambda \widetilde Z_i}
    =\prod_j\cosh(\lambda q_j)
    \le \exp\!\left(\frac{\lambda^2}{2}\sum_jq_j^2\right)
    =e^{\lambda^2/2}.
\]
The standard Chernoff argument gives the stated two-sided tail.  Applying the
Berry--Esseen inequality to the independent summands
$s_{ij}\epsilon_jq_j$ proves the first inequality in
\cref{eq:hadamard_berry_esseen}; the second uses
$\sum_j|q_j|^3\le\|q\|_\infty\sum_jq_j^2$.

For two distinct selected rows $i\ne k$,
\[
    [\Sigma_I(q)]_{ik}
    =\sum_{j=1}^g s_{ij}s_{kj}q_j^2,
    \qquad [\Sigma_I(q)]_{ii}=1.
\]
Fix $i$ and average the squared off-diagonal expression over $k\ne i$.
Hadamard-column orthogonality gives
\begin{align*}
    \sum_{k=1}^g
    \left(\sum_j s_{ij}s_{kj}q_j^2\right)^2
    &=
    \sum_{j,\ell}s_{ij}s_{i\ell}q_j^2q_\ell^2
    \sum_k s_{kj}s_{k\ell}
    =g\sum_jq_j^4.
\end{align*}
The $k=i$ term equals $(\sum_jq_j^2)^2=1$, so a uniformly sampled distinct
pair has expected squared covariance
$(g\|q\|_4^4-1)/(g-1)$.  There are $d(d-1)$ ordered off-diagonal entries,
which proves \cref{eq:hadamard_covariance_defect}.
\end{proof}

{\Cref{prop:hadamard_bridge} characterizes random-seed behavior for blocks independent of the sampled transform. For an adaptively trained checkpoint, \cref{thm:frozen_codebook_transfer} instead uses its actual post-HDP tuple law through $\Delta_d(P)$. The distinction is concrete: after fixing $R$, choosing $u=R^\top e_1$ gives $Ru=e_1$; sparse fixed blocks can also have non-Gaussian one-RHT laws. Related worst-case analyses establish sufficient multi-RHT constructions for scalar and vector guarantees \citep{benbasat2026rht}. CanonQ uses the random-seed bounds as structural diagnostics and the checkpoint-specific defect to quantify transfer of the frozen reference codebook.}

\begin{lemma}[Normalization regularization defect]
\label{lem:epsilon_defect}
{Sample a nonzero block $U\in\mathbb R^g$ by the chosen group law, then sample one of its $g/d$ tuples uniformly. Let $P_0$ and $P_\epsilon$ be the tuple laws under normalization by $\|U\|_2$ and $\|U\|_2+\epsilon$, respectively, using the same fixed orthogonal map. Then}
\begin{equation}
 {\frac{W_2(P_\epsilon,P_0)}{\sqrt d}\le
 \left[\mathbb E\left(\frac{\epsilon}{\|U\|_2+\epsilon}\right)^2\right]^{1/2},\qquad
 \Delta_d(P_\epsilon)\le\Delta_d(P_0)+
 \left[\mathbb E\left(\frac{\epsilon}{\|U\|_2+\epsilon}\right)^2\right]^{1/2}.}
 \label{eq:epsilon_defect}
\end{equation}
\end{lemma}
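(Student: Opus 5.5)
The plan is to build an explicit coupling of $P_\epsilon$ and $P_0$ and bound its mean squared tuple displacement; the inequality for $\Delta_d$ then follows from the triangle inequality for $W_2$. Use one common draw: the same block $U$ and the same tuple index $j$ feed both normalizations. Write $Y=\sqrt g\,RU$, and let $T_j$ extract the $j$th tuple. Then
\[
V_0=T_j Y/\|U\|_2,\qquad V_\epsilon=T_jY/(\|U\|_2+\epsilon).
\]
The pair $(V_\epsilon,V_0)$ has marginals $P_\epsilon$ and $P_0$, so it is an admissible coupling. Consequently $W_2(P_\epsilon,P_0)^2\le\mathbb E\|V_\epsilon-V_0\|_2^2$.

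Next, compute the displacement exactly. We have
\[
V_\epsilon-V_0=-\frac{\epsilon}{\|U\|_2+\epsilon}\,V_0,
\qquad\text{so}\qquad
\|V_\epsilon-V_0\|_2^2=\Big(\frac{\epsilon}{\|U\|_2+\epsilon}\Big)^2\|V_0\|_2^2 .
\]
Condition on $U$ and average over the uniform tuple index $j$. Since $R$ is orthogonal, $\sum_j\|T_jY\|_2^2=g\|RU\|_2^2=g\|U\|_2^2$. Because there are $g/d$ tuples, $\mathbb E[\|V_0\|_2^2\mid U]=(d/g)\cdot g=d$. The prefactor depends only on $U$, so it factors out of the conditional expectation. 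Taking the outer expectation gives
\[
\mathbb E\|V_\epsilon-V_0\|_2^2=d\,\mathbb E\Big(\frac{\epsilon}{\|U\|_2+\epsilon}\Big)^2 .
\]
Dividing by $d$ and taking square roots yields the first inequality in \cref{eq:epsilon_defect}.

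For the second inequality, $W_2$ is a metric on laws with finite second moment, so
\[
W_2(P_\epsilon,Q_d)\le W_2(P_0,Q_d)+W_2(P_\epsilon,P_0).
\]
Dividing by $\sqrt d$ and applying \cref{def:canonicalization_defect} gives the claim.

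The main care point is the conditioning step. The identity $\mathbb E[\|V_0\|^2\mid U]=d$ holds only after averaging over the uniform tuple position; an individual tuple's energy can deviate from $d$. This averaging is why the lemma samples the tuple uniformly, and I would state it explicitly. I would also note the integrability checks: $P_0$ has finite second moment, equal to $d$, and the prefactor $\epsilon/(\|U\|_2+\epsilon)$ lies in $[0,1)$. Hence every expectation above is finite.
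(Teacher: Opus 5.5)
Your proposal is correct and follows the paper's own proof essentially step for step. Both couple the same block and tuple via $Z_\epsilon=aZ_0$ with $a=\|U\|_2/(\|U\|_2+\epsilon)$, use orthogonality to get mean tuple energy $d$ after averaging over the uniform tuple index, and finish with the triangle inequality for $W_2$; your remark that the energy identity holds only after this averaging makes explicit a point the paper leaves implicit.
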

\begin{proof}
{Couple each regularized tuple to the corresponding unregularized tuple of the same block. Writing $a=\|U\|_2/(\|U\|_2+\epsilon)$ gives $Z_\epsilon=aZ_0$. Orthogonality implies $\|Z_0\|_2^2=g$ for the full block, so conditional mean squared tuple error is $d(1-a)^2$. This coupling bounds $W_2^2$; the triangle inequality gives the second claim.}
\end{proof}
{The bound isolates the regularizer from the HDP defect, but does not bound the latter. Zero blocks require the implementation's explicit convention; the differentiable-norm results below apply to nonzero blocks.}

\subsection{Root-Distortion Stability and Frozen-Table Transfer}
\label{app:transfer_proof}

\begin{lemma}[Root distortion is $W_2$-stable]
\label{lem:root_distortion_stability}
For any finite nonempty codebook $C$ and any tuple laws $P,Q$ with finite
second moment,
\begin{equation}
    |\mathcal R_{P,d}(C)-\mathcal R_{Q,d}(C)|
    \le\frac1{\sqrt d}W_2(P,Q).
    \label{eq:root_distortion_w2}
\end{equation}
Consequently,
$|\mathcal R_{P,b,d}^{\star}-\mathcal R_{Q,b,d}^{\star}|
\le W_2(P,Q)/\sqrt d$.
\end{lemma}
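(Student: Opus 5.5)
The plan is to recognize $\mathcal R_{P,d}(C)$ as an $L^2$ norm of the function $d_C(V)=\dist(V,C)$ and combine an optimal coupling with the $1$-Lipschitz property of the distance function. Fix a finite nonempty $C$. For any $v,w\in\mathbb R^d$ and any $c\in C$ we have $\dist(v,C)\le\|v-c\|_2\le\|v-w\|_2+\|w-c\|_2$. Minimizing over $c$ and then exchanging the roles of $v$ and $w$ gives $|d_C(v)-d_C(w)|\le\|v-w\|_2$. Since $C$ is finite, $d_C$ is continuous and satisfies $d_C(v)\le\|v\|_2+\min_{c}\|c\|_2$. Hence $d_C(V)\in L^2$ whenever the law of $V$ has a finite second moment, so both root distortions are finite.

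Next, take an optimal coupling $(V,W)$ with $V\sim P$, $W\sim Q$ and $\mathbb E\|V-W\|_2^2=W_2(P,Q)^2$; such a coupling exists for laws with finite second moment. If one prefers to avoid the existence question, use an $\eta$-optimal coupling and let $\eta\to0$. Minkowski's inequality in $L^2$ gives
\[
\bigl|(\mathbb E\, d_C(V)^2)^{1/2}-(\mathbb E\, d_C(W)^2)^{1/2}\bigr|
\le(\mathbb E\,|d_C(V)-d_C(W)|^2)^{1/2}
\le(\mathbb E\|V-W\|_2^2)^{1/2}=W_2(P,Q).
\]
Dividing by $\sqrt d$ turns the left side into $|\mathcal R_{P,d}(C)-\mathcal R_{Q,d}(C)|$, which proves \cref{eq:root_distortion_w2}.

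For the consequence about optimal root distortions, the key point is that the bound just proved is uniform over all codebooks. For every admissible $C$ with $1\le|C|\le2^{bd}$, we have $\mathcal R_{P,b,d}^{\star}\le\mathcal R_{P,d}(C)\le\mathcal R_{Q,d}(C)+W_2(P,Q)/\sqrt d$. Taking the infimum over $C$ on the right gives $\mathcal R_{P,b,d}^{\star}\le\mathcal R_{Q,b,d}^{\star}+W_2(P,Q)/\sqrt d$. Swapping $P$ and $Q$ gives the reverse inequality. No attainment of the infimum is needed, because we only use the fact that an infimum of functions that are uniformly close is itself close.

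The only delicate step is making sure the $L^2$ triangle inequality is applied to finite quantities. This is guaranteed by the linear growth bound on $d_C$ together with the standing finite-second-moment assumption. Otherwise the argument is routine.
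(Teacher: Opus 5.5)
Your proof is correct and follows the paper's argument essentially verbatim. You combine the $1$-Lipschitz property of $\dist(\cdot,C)$ with the reverse triangle inequality in $L^2$ under a coupling and pass to the (near-)optimal coupling; for the consequence you apply the bound uniformly over feasible codebooks, take infima, and swap $P,Q$. Your explicit check that $d_C(V)\in L^2$ via the linear growth bound is a small rigor addition the paper leaves implicit.
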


\begin{proof}
For any coupling $(X,Y)$ of $(P,Q)$, the distance to a fixed set is
one-Lipschitz.  The reverse triangle inequality in $L_2$ therefore gives
\begin{align*}
    \sqrt d\,
    |\mathcal R_{P,d}(C)-\mathcal R_{Q,d}(C)|
    &=
    \big|\|d_C(X)\|_{L_2}-\|d_C(Y)\|_{L_2}\big|\\
    &\le\|d_C(X)-d_C(Y)\|_{L_2}
    \le\|X-Y\|_{L_2}.
\end{align*}
Taking the infimum over couplings proves \cref{eq:root_distortion_w2}.  Applying
this inequality to every feasible $C$, taking infima, and exchanging $P,Q$
proves the optimal-distortion statement.
\end{proof}

\begin{proof}[Proof of \cref{thm:frozen_codebook_transfer}]
The product alphabet $A_b^d$ has $(2^b)^d=2^{bd}$ entries, so it is feasible
in the definition of $\mathcal R_{Q_d,b,d}^{\star}$.  This proves the first
part of \cref{eq:same_rate_root_dominance}.  From
\cref{eq:canonical_fit_gap} and
$\sqrt{a+e}\le\sqrt a+\sqrt e$,
\[
    \mathcal R_{Q_d,d}(\widehat C_{b,d})
    \le\mathcal R_{Q_d,b,d}^{\star}
    +\sqrt{\varepsilon_{\mathrm{vq}}},
\]
which proves its second part.

Let $\Delta=\Delta_d(P)$.  By
\cref{lem:root_distortion_stability},
\[
    \mathcal R_{P,d}(\widehat C_{b,d})
    \le\mathcal R_{Q_d,d}(\widehat C_{b,d})+\Delta,
    \qquad
    \mathcal R_{P,b,d}^{\star}
    \ge\mathcal R_{Q_d,b,d}^{\star}-\Delta.
\]
Subtracting and using the fitting gap yields
\cref{eq:root_regret_transfer}.  Applying the same two inequalities to
$\widehat C_{b,d}$ and $A_b^d$ gives
\begin{align*}
    \mathcal R_{P,d}(\widehat C_{b,d})-
    \mathcal R_{P,d}(A_b^d)
    &\le
    \mathcal R_{Q_d,d}(\widehat C_{b,d})-
    \mathcal R_{Q_d,d}(A_b^d)+2\Delta\\
    &=2\Delta-m_Q(A_b^d,\widehat C_{b,d}),
\end{align*}
which is \cref{eq:comparator_margin_transfer}.
\end{proof}

Root distortion gives a transfer bound under finite second moments. Squared distance is not globally Lipschitz on the Gaussian reference's unbounded support, so a bounded-support $W_1$ argument would not cover this codec.

\begin{lemma}[Correction for assign-then-gain decoding]
\label{lem:gain_correction}
Define
\[
    \mathcal R_{P,d}^{(\gamma)}(C)
    :=
    \left(\frac1d\mathbb E
    \|V-\gamma\Pi_C(V)\|_2^2\right)^{1/2},
    \qquad B_C:=\max_{c\in C}\|c\|_2.
\]
For the implementation that assigns with $C$ and scales only after assignment,
\begin{equation}
    |\mathcal R_{P,d}^{(\gamma)}(C)-
      \mathcal R_{P,d}^{(1)}(C)|
    \le\frac{|\gamma-1|B_C}{\sqrt d}.
    \label{eq:gain_correction}
\end{equation}
\end{lemma}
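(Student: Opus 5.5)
The plan is to treat both root distortions as $L_2$ norms of random vectors and apply the reverse triangle inequality in $L_2$, in the same way as the proof of \cref{lem:root_distortion_stability}. This reduces the claim to a pointwise bound on the difference of the two residuals.

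First I would write the two residuals explicitly. Let $V\sim P$ and $c(V):=\Pi_C(V)$, using the implementation's deterministic tie rule. The key point is that assignment is performed against $C$ before any scaling, so $c(V)$ is the same in both expressions. Then
\[
\sqrt d\,\mathcal R_{P,d}^{(\gamma)}(C)=\|V-\gamma c(V)\|_{L_2},
\qquad
\sqrt d\,\mathcal R_{P,d}^{(1)}(C)=\|V-c(V)\|_{L_2}.
\]
I would also note that $\mathcal R^{(1)}_{P,d}(C)$ coincides with $\mathcal R_{P,d}(C)$ from \cref{eq:root_distortion}, because $\|V-\Pi_C(V)\|_2=\dist(V,C)$. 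Both quantities are finite, since $P$ has a finite second moment and $C$ is finite.

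Next I would apply the reverse triangle inequality for the $L_2(P)$ norm of vector-valued random variables:
\[
\sqrt d\,\big|\mathcal R^{(\gamma)}_{P,d}(C)-\mathcal R^{(1)}_{P,d}(C)\big|
\le\big\|(V-\gamma c(V))-(V-c(V))\big\|_{L_2}
=|\gamma-1|\,\|c(V)\|_{L_2}.
\]
Since $c(V)\in C$ almost surely, we have $\|c(V)\|_2\le B_C$ pointwise, so $\|c(V)\|_{L_2}\le B_C$. Dividing by $\sqrt d$ then gives \cref{eq:gain_correction}.

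No step here is a real obstacle; the main care point is the first one. One must make sure the gain does not change which codeword is selected. If the codec instead assigned against $\gamma C$, the codewords $c(V)$ would differ between the two expressions and this cancellation would fail. The statement specifies assign-then-gain decoding, which is exactly what makes the cancellation valid.

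I would also remark in passing that the bound can be sharpened to $|\gamma-1|\,(\mathbb E_P\|\Pi_C(V)\|_2^2)^{1/2}/\sqrt d$. Combining \cref{eq:gain_correction} with \cref{thm:frozen_codebook_transfer} by the triangle inequality then extends the transfer bounds to $\gamma\neq1$ at an additive cost of $|\gamma-1|B_C/\sqrt d$.
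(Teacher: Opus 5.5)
Your proposal is correct and matches the paper's proof. Both apply the reverse triangle inequality in $L_2(P)$ to $V-\gamma\Pi_C(V)$ and $V-\Pi_C(V)$, bound their difference $(1-\gamma)\Pi_C(V)$ by $|1-\gamma|B_C$, and divide by $\sqrt d$; your observations that assign-then-gain keeps the selected codeword fixed and that $B_C$ can be sharpened to $(\mathbb E_P\|\Pi_C(V)\|_2^2)^{1/2}$ are both valid.
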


\begin{proof}
Apply the reverse triangle inequality in $L_2(P)$ to
$V-\gamma\Pi_C(V)$ and $V-\Pi_C(V)$.  Their difference is
$(1-\gamma)\Pi_C(V)$, whose norm is at most
$|1-\gamma|B_C$.  Division by $\sqrt d$ gives
\cref{eq:gain_correction}.
\end{proof}

Notice that $\gamma\Pi_C(V)$ is generally not the Euclidean projection onto
$\gamma C$: the assignment boundaries remain those of $C$.  Thus
same-rate dominance cannot simply be reused at non-unit gain.  An exact
projection onto the scaled table would instead assign by
$\gamma\Pi_C(V/\gamma)$ for $\gamma>0$.

\subsection{Gain-Aware Task-Loss Bounds}
\label{app:local_loss_proof}

{For a hard-quantized network, downstream code-cell switches can make the loss discontinuous as the intervention moves. Theorem~\ref{thm:local_loss_certificate} is valid on a smooth continuation or a neighborhood in which downstream assignments remain fixed and other operations are $C^3$. Its smoothness hypothesis must be checked; freezing the codebook does not imply it. For activation/KV interventions, stationarity in model parameters does not imply a small gradient in the intervention coordinates.}

\begin{proof}[Proof of \cref{thm:local_loss_certificate}]
Abbreviate $e=e_{C,\gamma}(u)$ and
$\eta=\eta_{C,\gamma}(z)$.  Taylor's theorem gives
\[
    \Loss(u+e)-\Loss(u)
    =\nabla\Loss(u)^\top e+
    \frac12e^\top\nabla^2\Loss(u)e+r_3,
    \qquad |r_3|\le\frac{M_3}{6}\|e\|_2^3.
\]
Orthogonality and \cref{eq:actual_block_perturbation} imply
$\|e\|_2=\rho(u)\|\eta\|_2/\sqrt g$.  Cauchy--Schwarz and the
operator-norm Hessian bound then give exactly
\cref{eq:full_local_loss_bound}.
\end{proof}

For a random block $U$, the magnitude of the quadratic term obeys
\[
    \mathbb E\left|\frac12e_{C,\gamma}(U)^\top
    \nabla^2\Loss(U)e_{C,\gamma}(U)\right|
    \le\frac12D^{\mathrm{curv}}_{S,\gamma}(C).
\]
This follows by applying the pointwise operator-norm inequality before taking
expectations.  It requires neither residual isotropy nor an almost-sure bound
on $\rho(U)^2\Lambda_U$.

\begin{corollary}[Simultaneous quantization at a common graph cut]
\label{cor:multisite_local_loss}
Let $e=(e_1,\ldots,e_m)$ concatenate intervention coordinates exposed at a
common cut of the computation graph, and regard $\Loss$ as a function of their
concatenation.  If
the full Hessian and third derivative satisfy the same operator-norm bounds as
in \cref{thm:local_loss_certificate}, then
\begin{equation}
    |\Loss(x+e)-\Loss(x)|
    \le
    \|\nabla\Loss(x)\|_2\|e\|_2
    +\frac{\Lambda_x}{2}\|e\|_2^2
    +\frac{M_3}{6}\|e\|_2^3.
    \label{eq:multisite_local_loss}
\end{equation}
\end{corollary}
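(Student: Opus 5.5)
The plan is to apply the second-order Taylor theorem with third-order remainder directly to $\Loss$, now viewed as a function of the concatenated intervention vector at the common cut. This mirrors the proof of \cref{thm:local_loss_certificate}. The only new ingredient is checking that the cut makes $\Loss$ a genuine function of the joint perturbation $e=(e_1,\ldots,e_m)$. Because all sites lie on a common cut, every downstream computation depends on the exposed coordinates only through the tuple $x=(x_1,\ldots,x_m)$. We can therefore write the loss as a single map $x\mapsto\Loss(x)$ on the product space, with no site feeding into another. This is what distinguishes the corollary from the causally ordered case, which needs telescoping.

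Assume, as hypothesized, that $\Loss$ is $C^3$ on a neighborhood of the segment $\{x+te:t\in[0,1]\}$. Assume also that $\|\nabla^2\Loss(x)\|_{\mathrm{op}}\le\Lambda_x$ and $\|\nabla^3\Loss(w)\|_{\mathrm{op}}\le M_3$ along the segment. Taylor's theorem in integral or Lagrange form then gives
\[
\Loss(x+e)-\Loss(x)=\nabla\Loss(x)^\top e+\tfrac12 e^\top\nabla^2\Loss(x)e+r_3,
\qquad |r_3|\le\tfrac{M_3}{6}\|e\|_2^3.
\]
The remainder bound follows by writing $r_3=\tfrac12\int_0^1(1-t)^2\,\nabla^3\Loss(x+te)[e,e,e]\,dt$ and bounding the trilinear form by $M_3\|e\|_2^3$.

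Next I bound the linear term by Cauchy--Schwarz, giving $\|\nabla\Loss(x)\|_2\|e\|_2$. I bound the quadratic term by the operator-norm inequality, giving $\tfrac{\Lambda_x}{2}\|e\|_2^2$. The triangle inequality then yields \cref{eq:multisite_local_loss}.

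Two remarks follow. First, $\|e\|_2^2=\sum_i\|e_i\|_2^2$. If each $e_i$ is a canonical perturbation $e_{C,\gamma}(u_i)$, orthogonality of each $R_{S_i}$ gives $\|e_i\|_2=\rho(u_i)\|\eta_i\|_2/\sqrt{g_i}$, which recovers per-site distortions. Second, the cross-site Hessian blocks are retained inside $\Lambda_x$ rather than dropped.

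The only delicate step is the hypothesis check rather than the calculation. For a hard-quantized branch, the smoothness and the derivative bounds must hold for the joint continuation, with downstream assignments fixed along the entire segment. Once that is granted, the argument is routine.
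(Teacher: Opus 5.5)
Your proposal is correct and follows essentially the same route as the paper's proof. Both expand $\Loss$ to second order with a third-order remainder in the concatenated cut variable, reuse the bounding steps of \cref{thm:local_loss_certificate}, and keep the cross-site Hessian blocks $e_i^\top H_{ij}e_j$ inside the operator-norm bound. Both also reserve telescoping for causally ordered sites.
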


\begin{proof}
Apply the same Taylor argument to the concatenated variable.  The quadratic
form $e^\top\nabla^2\Loss(x)e$ contains every cross-site block
$e_i^\top H_{ij}e_j$; \cref{eq:multisite_local_loss} bounds the full form and
does not discard these interactions.
For causally dependent internal sites, apply this result sequentially and
telescope the loss differences rather than treating the sites as coordinates
of one cut.
\end{proof}

\begin{lemma}[Coupled errors in linear and attention blocks]
\label{lem:coupled_linear_attention}

For a linear map with $\widehat W=W+E_W$ and $\widehat a=a+e_A$,
\[
\|\widehat W\widehat a-Wa\|_2
\le \|E_W\|_{\mathrm{op}}\|a\|_2
+\|W\|_{\mathrm{op}}\|e_A\|_2
+\|E_W\|_{\mathrm{op}}\|e_A\|_2.
\]
For attention with head dimension $h$, define
$S=QK^\top/\sqrt h$, $A=\operatorname{softmax}_{\mathrm{row}}(S)$,
and $Y=AV$. Let $\widehat Q=Q+E_Q$, $\widehat K=K+E_K$,
$\widehat V=V+E_V$, with corresponding $\widehat S$, $\widehat A$ and
$\widehat Y$. On the same attention mask,
\begin{align*}
\|\widehat S-S\|_F
&\le\frac{1}{\sqrt h}\bigl(
\|E_Q\|_F\|K\|_{\mathrm{op}}
+\|Q\|_{\mathrm{op}}\|E_K\|_F
+\|E_Q\|_F\|E_K\|_{\mathrm{op}}\bigr),\\
\|\widehat Y-Y\|_F
&\le \tfrac12\|\widehat S-S\|_F\|V\|_{\mathrm{op}}
+\|\widehat A\|_{\mathrm{op}}\|E_V\|_F.
\end{align*}
Masked logits are compared only on their common active entries. These are deterministic local bounds; no independence or isotropy of the errors is assumed.

\end{lemma}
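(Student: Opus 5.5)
The plan is to write each perturbed output as an exact algebraic expansion, then bound each term by the triangle inequality together with the mixed norm inequality $\|AB\|_F\le\|A\|_F\|B\|_{\mathrm{op}}$ (and its transpose form $\|AB\|_F\le\|A\|_{\mathrm{op}}\|B\|_F$). The only non-algebraic ingredient is a Lipschitz bound for the row softmax.

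\emph{Linear block.} We use the identity $\widehat W\widehat a-Wa=E_Wa+We_A+E_We_A$ already displayed before \cref{thm:local_loss_certificate}. The triangle inequality and $\|Mx\|_2\le\|M\|_{\mathrm{op}}\|x\|_2$ applied to each of the three terms give the first claim directly.

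\emph{Attention logits.} Expanding $\widehat Q\widehat K^\top$ gives
\[
\widehat S-S=\frac{1}{\sqrt h}\bigl(E_QK^\top+QE_K^\top+E_QE_K^\top\bigr).
\]
The three terms are bounded as follows.
\begin{itemize}
\item For the first, $\|E_QK^\top\|_F\le\|E_Q\|_F\|K\|_{\mathrm{op}}$.
\item For the second, $\|QE_K^\top\|_F\le\|Q\|_{\mathrm{op}}\|E_K\|_F$.
\item For the third, $\|E_QE_K^\top\|_F\le\|E_Q\|_F\|E_K\|_{\mathrm{op}}$, using that transposition preserves the operator norm.
\end{itemize}
On a mask, the same identity holds entrywise on the common active set. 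Restricting all three products to those entries can only decrease their Frobenius norms, so the bound is unchanged.

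\emph{Attention output.} We write
\[
\widehat Y-Y=\widehat A\widehat V-AV=(\widehat A-A)V+\widehat AE_V,
\]
which gives
\[
\|\widehat Y-Y\|_F\le\|\widehat A-A\|_F\|V\|_{\mathrm{op}}+\|\widehat A\|_{\mathrm{op}}\|E_V\|_F.
\]
It remains to show $\|\widehat A-A\|_F\le\tfrac12\|\widehat S-S\|_F$, which is the main step. We argue row by row: the softmax over the active entries of a row maps $s\mapsto p(s)$ and has Jacobian $J(s)=\diag(p)-pp^\top$. This Jacobian is symmetric and positive semidefinite. By Gershgorin, row $i$ has diagonal entry $p_i(1-p_i)$ and absolute off-diagonal sum $p_i\sum_{k\ne i}p_k=p_i(1-p_i)$. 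Hence every eigenvalue is at most $2p_i(1-p_i)\le\tfrac12$, so $\|J\|_{\mathrm{op}}\le\tfrac12$.

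The mean-value inequality along the segment from $s$ to $\widehat s$ then gives $\|p(\widehat s)-p(s)\|_2\le\tfrac12\|\widehat s-s\|_2$ for each row. Squaring and summing over rows yields the Frobenius statement. Masked entries are zero in both $A$ and $\widehat A$, so they contribute nothing.

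The step needing most care is this uniform softmax Lipschitz constant $\tfrac12$, which must hold for arbitrary logits and row lengths. The Gershgorin argument above is the route I would use first. Everything else is deterministic norm algebra, with no independence or isotropy assumptions, consistent with the statement.
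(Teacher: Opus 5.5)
Your proposal is correct and follows the paper's proof essentially step for step. It uses the same three-term expansions for the linear map and the logits, the same mixed Frobenius/operator-norm product inequalities, the decomposition $\widehat Y-Y=(\widehat A-A)V+\widehat AE_V$, and a row-wise softmax Lipschitz bound obtained by integrating the Jacobian $\diag(p)-pp^\top$ along the segment. The only cosmetic difference is that you justify $\|J\|_{\mathrm{op}}\le\tfrac12$ via Gershgorin and positive semidefiniteness, whereas the paper uses symmetry together with the absolute row sum $2p_i(1-p_i)\le\tfrac12$; both rest on the same quantity.
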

\begin{proof}

Expanding $(W+E_W)(a+e_A)-Wa$ and applying the triangle inequality gives the first result. Similarly,
$\widehat S-S=(E_QK^\top+QE_K^\top+E_QE_K^\top)/\sqrt h$;
the Frobenius/operator-norm product inequalities give the logit bound.
For a probability vector $p$, the softmax Jacobian is
$J=\operatorname{diag}(p)-pp^\top$. Its absolute row sum is
$2p_i(1-p_i)\le1/2$; symmetry therefore implies
$\|J\|_{\mathrm{op}}\le1/2$. Integrating along the line segment between each pair of unmasked logit rows gives
$\|\widehat A-A\|_F\le\tfrac12\|\widehat S-S\|_F$.
Finally,
$\widehat Y-Y=(\widehat A-A)V+\widehat A E_V$,
which proves the output bound. A fixed causal mask is handled by applying the same argument on each row's active coordinates.

\end{proof}

\subsection{The STE Jacobian Induced by the Canonical Codec}
\label{app:ste_jacobian}

The following exact calculation is the method-specific part of the
optimization analysis.  It assumes the normalization scale is differentiated,
as in the stated fake-quantization graph, and that the nonnegative gain is an
independent parameter when differentiating with respect to $u$.

\begin{proposition}[Exact normalized-codebook STE Jacobian]
\label{prop:ste_jacobian}
Let $u\ne0$, set
$s(u)=\rho(u)/\sqrt g=(\|u\|_2+\epsilon)/\sqrt g$,
$z=Ru/s(u)$, $q=\Pi_C^{(d)}(z)$, and
$\widehat u=\gamma s(u)R^\top q$ for $\gamma\ge0$.  Under the identity STE
$Dq/Dz=I$, the surrogate Jacobian is
\begin{equation}
    J_{\mathrm{STE}}(u)
    =
    \gamma\left[I+R^\top(q-z)\nabla s(u)^\top\right].
    \label{eq:exact_ste_jacobian}
\end{equation}
With the assignment defect
$\delta_a(u)=\|q-z\|_2/\sqrt g$ and reconstruction defect
$\delta_r(u)=\|\gamma q-z\|_2/\sqrt g$,
\begin{equation}
    \|J_{\mathrm{STE}}-\gamma I\|_{\mathrm{op}}
    =\gamma\delta_a,
    \qquad
    \|J_{\mathrm{STE}}\|_{\mathrm{op}}
    \le\gamma(1+\delta_a),
    \qquad
    \|\widehat u-u\|_2=\rho(u)\delta_r.
    \label{eq:ste_jacobian_defects}
\end{equation}
The deviation from the gain-preserving reference output $\gamma u$ is
$\|\widehat u-\gamma u\|_2=\gamma\rho(u)\delta_a$.  If $\gamma$ is trained,
the gain-column defect is
\begin{equation}
    \left\|\partial_\gamma\widehat u-
    \partial_\gamma(\gamma u)\right\|_2
    =\|sR^\top(q-z)\|_2=\rho(u)\delta_a.
    \label{eq:gain_column_defect}
\end{equation}
\end{proposition}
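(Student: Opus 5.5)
The plan is a direct chain-rule computation, followed by a rank-one operator-norm identity. I would write $\widehat u=\gamma s(u)R^\top q(z(u))$ with $z=Ru/s(u)$. Under the identity STE, $Dq/Dz=I$, so the surrogate derivative of $q$ with respect to $u$ is just $Dz/Du$. Differentiating $z=Ru/s$ gives
\[
Dz/Du=\frac{R}{s}-\frac{Ru\,\nabla s(u)^\top}{s^2}=\frac{R}{s}-\frac{z\,\nabla s^\top}{s}.
\]
The product rule applied to $\gamma sR^\top q$, with $\gamma$ held independent of $u$, then gives
\[
J_{\mathrm{STE}}=\gamma\bigl[R^\top q\,\nabla s^\top+sR^\top(Dz/Du)\bigr]=\gamma\bigl[R^\top q\nabla s^\top+I-R^\top z\nabla s^\top\bigr],
\]
which is \cref{eq:exact_ste_jacobian}. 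For $u\neq0$ we have $\nabla s(u)=u/(\sqrt g\|u\|_2)$, a vector of norm exactly $1/\sqrt g$. This holds for every $\epsilon\ge0$, since $\epsilon$ only shifts $s$ by a constant.

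For the norm identities, $J_{\mathrm{STE}}-\gamma I=\gamma R^\top(q-z)\nabla s^\top$ is rank one. The operator norm of $ab^\top$ is $\|a\|_2\|b\|_2$, and $R$ is orthogonal, so this norm equals $\gamma\|q-z\|_2/\sqrt g=\gamma\delta_a$. The triangle inequality then gives $\|J_{\mathrm{STE}}\|_{\mathrm{op}}\le\gamma(1+\delta_a)$. For the reconstruction defect I would write $u=sR^\top z$, which is exact from the definition of $z$. Then $\widehat u-u=sR^\top(\gamma q-z)$, and orthogonality gives $\|\widehat u-u\|_2=s\|\gamma q-z\|_2=\rho(u)\delta_r$, using $s\sqrt g=\rho$.

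The same representation handles the remaining two claims. We have $\widehat u-\gamma u=\gamma sR^\top(q-z)$, whose norm is $\gamma\rho\delta_a$. For \cref{eq:gain_column_defect}, note that $\partial_\gamma\widehat u=sR^\top q$ because $q$ does not depend on $\gamma$ (assignment precedes the gain), and $\partial_\gamma(\gamma u)=u=sR^\top z$. Their difference is $sR^\top(q-z)$, with norm $\rho\delta_a$.

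No step is a real obstacle. The only points needing care are bookkeeping. First, $\nabla s$ must be computed correctly so that its norm is exactly $g^{-1/2}$ regardless of $\epsilon$; this is where the claimed equality, rather than an inequality, comes from. Second, the identity $u=sR^\top z$ is exact, with $\epsilon$ included in $s$. Third, the rank-one operator-norm formula should be used rather than a cruder submultiplicative bound, because only the formula yields equality.
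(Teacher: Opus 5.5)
Your proposal is correct and follows the paper's proof essentially step for step. The paper also differentiates $z=Ru/s$ under the identity STE, written in differential form as $dz=R\,du/s-z\,ds/s$, and uses $sR^\top z=u$ to collapse the product rule to $\gamma\bigl(du+R^\top(q-z)\,ds\bigr)$. It then obtains the norm identities from the rank-one structure with $\|\nabla s(u)\|_2=1/\sqrt g$, and from orthogonality of $R$ applied to $\widehat u-u=sR^\top(\gamma q-z)$, $\widehat u-\gamma u=\gamma sR^\top(q-z)$, and $\partial_\gamma\widehat u-u=sR^\top(q-z)$.
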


\begin{proof}
Differentiate $z=Ru/s$ using the identity STE:
$dz=R\,du/s-z\,ds/s$. Since $sR^\top z=u$,
\begin{align*}
    d\widehat u
    &=\gamma\left(R^\top q\,ds+sR^\top dz\right)\\
    &=\gamma\left(du+R^\top(q-z)\,ds\right),
\end{align*}
which proves \cref{eq:exact_ste_jacobian}.  {For $u\ne0$, $\|\nabla s(u)\|_2=1/\sqrt g$. The matrix $A(u)=R^\top(q-z)\nabla s(u)^\top$ has rank at most one and satisfies $\|A(u)\|_{\mathrm{op}}=\|q-z\|_2/\sqrt g=\delta_a$. Hence $J_{\mathrm{STE}}-\gamma I=\gamma A(u)$ has operator norm $\gamma\delta_a$, and the Jacobian bounds follow.}

Finally,
\[
    \widehat u-u=sR^\top(\gamma q-z),
    \qquad
    \widehat u-\gamma u=\gamma sR^\top(q-z),
\]
and orthogonality gives the two forward-error identities.
Finally,
$\partial_\gamma\widehat u=sR^\top q$ and
$\partial_\gamma(\gamma u)=u=sR^\top z$, which gives
\cref{eq:gain_column_defect}.
\end{proof}

The two defects differ because the current implementation assigns in the
unscaled table $C$ and applies $\gamma$ afterward.  Assignment distortion
controls the STE Jacobian relative to its gain-scaled identity, whereas
reconstruction distortion controls the actual error relative to $u$.
If $z$ is partitioned into $g/d$ tuples $V_j$ and $P$ is obtained by sampling
one of them uniformly, then
\begin{equation}
    \delta_a(u)^2
    =\frac1g\sum_{j=1}^{g/d}
      \|\Pi_C(V_j)-V_j\|_2^2
    =D_{P,d}(C).
    \label{eq:assignment_defect_equals_distortion}
\end{equation}
Thus \cref{thm:frozen_codebook_transfer} directly bounds the RMS assignment
defect that appears in the STE Jacobian, subject only to the local sensitivity
factors in the next corollary.

\begin{corollary}[Codec-controlled STE bias and alignment]
\label{cor:codec_controlled_ste_bias}
Let $f$ be a downstream loss whose gradient is $L_f$-Lipschitz.  Define
$g_{\mathrm{STE}}=J_{\mathrm{STE}}^\top\nabla f(\widehat u)$.  Relative to
the gain-preserving smooth continuation $u\mapsto f(\gamma u)$,
\begin{align}
    \|g_{\mathrm{STE}}-\gamma\nabla f(\gamma u)\|_2
    &\le
    \gamma\delta_a\|\nabla f(\gamma u)\|_2
    +\gamma^2(1+\delta_a)L_f\rho(u)\delta_a
    =:B_C(u).
    \label{eq:codec_controlled_bias}
\end{align}
Consequently, with $g_{\mathrm{ref}}=\gamma\nabla f(\gamma u)$,
\begin{equation}
    \langle g_{\mathrm{STE}},g_{\mathrm{ref}}\rangle
    \ge\|g_{\mathrm{ref}}\|_2
    \bigl(\|g_{\mathrm{ref}}\|_2-B_C(u)\bigr).
    \label{eq:derived_ste_alignment}
\end{equation}
If $g_{\mathrm{ref}}\ne0$ and
$B_C(u)\le\kappa\|g_{\mathrm{ref}}\|_2$ for some $\kappa<1$, then
\begin{equation}
    \cos\angle(g_{\mathrm{STE}},g_{\mathrm{ref}})
    \ge\frac{1-\kappa}{1+\kappa}.
    \label{eq:ste_cosine_bound}
\end{equation}
\end{corollary}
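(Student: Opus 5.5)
The plan is to split $g_{\mathrm{STE}}-g_{\mathrm{ref}}$ into a Jacobian defect and a gradient-evaluation defect, and bound each with \cref{prop:ste_jacobian}. Concretely, write
\[
g_{\mathrm{STE}}-\gamma\nabla f(\gamma u)
=(J_{\mathrm{STE}}-\gamma I)^\top\nabla f(\gamma u)
+J_{\mathrm{STE}}^\top\bigl(\nabla f(\widehat u)-\nabla f(\gamma u)\bigr).
\]
The first term has norm at most $\|J_{\mathrm{STE}}-\gamma I\|_{\mathrm{op}}\|\nabla f(\gamma u)\|_2=\gamma\delta_a\|\nabla f(\gamma u)\|_2$. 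This uses the exact identity in \cref{eq:ste_jacobian_defects}; transposition preserves the operator norm.

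For the second term, I combine three facts: the bound $\|J_{\mathrm{STE}}\|_{\mathrm{op}}\le\gamma(1+\delta_a)$, the $L_f$-Lipschitz gradient, and the forward identity $\|\widehat u-\gamma u\|_2=\gamma\rho(u)\delta_a$ from \cref{prop:ste_jacobian}. Together they give the bound $\gamma(1+\delta_a)\cdot L_f\gamma\rho(u)\delta_a$. Adding the two pieces by the triangle inequality yields $B_C(u)$ exactly.

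The only subtle point is evaluating the reference gradient at $\gamma u$ rather than $u$. That choice is what makes the forward deviation involve the assignment defect $\delta_a$ rather than the reconstruction defect $\delta_r$. With that choice fixed, the bias bound is routine.

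For the alignment inequality \cref{eq:derived_ste_alignment}, write $g_{\mathrm{STE}}=g_{\mathrm{ref}}+b$ with $\|b\|_2\le B_C(u)$. Cauchy--Schwarz then gives
\[
\langle g_{\mathrm{STE}},g_{\mathrm{ref}}\rangle=\|g_{\mathrm{ref}}\|_2^2+\langle b,g_{\mathrm{ref}}\rangle\ge\|g_{\mathrm{ref}}\|_2\bigl(\|g_{\mathrm{ref}}\|_2-B_C(u)\bigr).
\]

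For the cosine bound \cref{eq:ste_cosine_bound}, the step to watch is the denominator, since it needs an upper bound on $\|g_{\mathrm{STE}}\|_2$. The triangle inequality gives $\|g_{\mathrm{STE}}\|_2\le\|g_{\mathrm{ref}}\|_2+B_C\le(1+\kappa)\|g_{\mathrm{ref}}\|_2$. The numerator is at least $(1-\kappa)\|g_{\mathrm{ref}}\|_2^2$, which is positive because $\kappa<1$; this also guarantees $g_{\mathrm{STE}}\ne0$, so the cosine is defined. Dividing gives $(1-\kappa)/(1+\kappa)$.
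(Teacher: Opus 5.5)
Your proposal is correct and follows the paper's proof essentially step for step. It uses the same add-and-subtract of $J_{\mathrm{STE}}^\top\nabla f(\gamma u)$, the Jacobian-defect and forward-deviation identities from \cref{prop:ste_jacobian} with gradient Lipschitzness, Cauchy--Schwarz for the alignment inequality, and the bound $\|g_{\mathrm{STE}}\|_2\le(1+\kappa)\|g_{\mathrm{ref}}\|_2$ for the cosine; your remark that the positive numerator forces $g_{\mathrm{STE}}\ne0$ is a detail the paper leaves implicit.
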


\begin{proof}
Add and subtract
$J_{\mathrm{STE}}^\top\nabla f(\gamma u)$:
\begin{align*}
    g_{\mathrm{STE}}-\gamma\nabla f(\gamma u)
    &=
    (J_{\mathrm{STE}}^\top-\gamma I)\nabla f(\gamma u)
    +J_{\mathrm{STE}}^\top
    [\nabla f(\widehat u)-\nabla f(\gamma u)].
\end{align*}
Use \cref{eq:ste_jacobian_defects}, gradient Lipschitzness, and
$\|\widehat u-\gamma u\|_2=\gamma\rho(u)\delta_a$ to obtain
\cref{eq:codec_controlled_bias}.  Cauchy--Schwarz yields
\cref{eq:derived_ste_alignment}.  Under the relative-error condition,
$\|g_{\mathrm{STE}}\|_2\le(1+\kappa)\|g_{\mathrm{ref}}\|_2$ and the inner
product is at least
$(1-\kappa)\|g_{\mathrm{ref}}\|_2^2$, which proves
\cref{eq:ste_cosine_bound}.
\end{proof}

\begin{corollary}[RMS distortion controls the blockwise STE certificate]
\label{cor:rms_ste_bias}
Suppose on the analyzed region that
$0\le\gamma\le\bar\gamma$, $\rho(u)\le\bar\rho$,
$\|\nabla f(\gamma u)\|_2\le G$, and
$\delta_a(u)\le\bar\delta$.  With
$K:=\bar\gamma G+\bar\gamma^2(1+\bar\delta)L_f\bar\rho$, one has $B_C(u)\le K\delta_a(u)$.  If $P$ is the uniformly sampled tuple law
of the group, then
\begin{equation}
    B_C(u)^2\le K^2D_{P,d}(C).
    \label{eq:rms_ste_bias_from_distortion}
\end{equation}
For a random group $U$, taking expectation gives $\mathbb E B_C(U)^2\le K^2\mathbb E D_{P_U,d}(C)$; the tuple law must use the same group-sampling distribution.
\end{corollary}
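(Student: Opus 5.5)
The plan is to chain the pointwise certificate of \cref{cor:codec_controlled_ste_bias} with the stated region bounds, and then to use the identity \cref{eq:assignment_defect_equals_distortion}. The only work is to bound each coefficient in $B_C(u)$ by its supremum. Starting from
\[
B_C(u)=\gamma\delta_a\|\nabla f(\gamma u)\|_2+\gamma^2(1+\delta_a)L_f\rho(u)\delta_a,
\]
every factor multiplying $\delta_a$ is nonnegative. We can therefore replace $\gamma$ by $\bar\gamma$, $\|\nabla f(\gamma u)\|_2$ by $G$, $\rho(u)$ by $\bar\rho$, and the $\delta_a$ inside $(1+\delta_a)$ by $\bar\delta$. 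We keep the single leading factor $\delta_a$ explicit. Monotonicity in each variable holds because $\gamma\ge0$, $L_f\ge0$, and $\rho>0$. This gives $B_C(u)\le(\bar\gamma G+\bar\gamma^2(1+\bar\delta)L_f\bar\rho)\,\delta_a(u)=K\delta_a(u)$.

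Next, squaring both sides is legitimate since both are nonnegative, so $B_C(u)^2\le K^2\delta_a(u)^2$. By \cref{eq:assignment_defect_equals_distortion}, when $P=P_u$ is the uniform law over the $g/d$ tuples of the canonicalized group $z$, we have $\delta_a(u)^2=D_{P_u,d}(C)$. Substituting this yields \cref{eq:rms_ste_bias_from_distortion}. This identity uses the same nearest-codeword rule $\Pi_C$ tuple by tuple, with ties broken as in the implementation. The factor $1/g$ in $\delta_a^2$ matches $\frac1d\cdot\frac{d}{g}$ from averaging over the $g/d$ tuples.

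For a random group $U$, the plan is to apply the pointwise inequality almost surely on the analyzed region and take expectations. This gives $\mathbb E B_C(U)^2\le K^2\mathbb E D_{P_U,d}(C)$, where measurability of $U\mapsto D_{P_U,d}(C)$ follows from measurability of $\dist(\cdot,C)$. If the global tuple law $P$ is formed by first sampling $U$ and then a uniform tuple, the tower property gives $\mathbb E D_{P_U,d}(C)=D_{P,d}(C)$. This is the only place where the matching-sampling caveat enters.

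The main obstacle is not analytic. It is the bookkeeping that the bounds $\bar\gamma,\bar\rho,G,\bar\delta$ hold simultaneously on the support of the group law; otherwise the expectation step fails. It also matters that the tuple law in \cref{eq:rms_ste_bias_from_distortion} is the same-group uniform law, rather than an arbitrary checkpoint law $P$.
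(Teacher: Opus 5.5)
Your proposal is correct and follows the paper's proof: bound each nonnegative coefficient in \cref{eq:codec_controlled_bias} by its regional bound to get $B_C(u)\le K\delta_a(u)$, square, invoke \cref{eq:assignment_defect_equals_distortion} for the same-group uniform tuple law, and take expectations for a random group $U$. Your remarks on the tower property and on the regional bounds holding simultaneously on the group law's support are sound bookkeeping that the paper leaves implicit.
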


\begin{proof}
Apply the regional bounds to \cref{eq:codec_controlled_bias}, then use
\cref{eq:assignment_defect_equals_distortion}.
\end{proof}

\begingroup
\paragraph{From source mismatch to STE bias.}
For the numerical reference table $C=\widehat C_{b,d}$, \cref{lem:root_distortion_stability,eq:canonical_fit_gap} give
\[
\mathcal R_{P,d}(C)\le \mathcal R_{Q_d,d}(C)+\Delta_d(P)
\le \mathcal R^\star_{Q_d,b,d}+\sqrt{\varepsilon_{\mathrm{vq}}}+\Delta_d(P).
\]
Under the regional bounds of \cref{cor:rms_ste_bias} and matching group/tuple sampling, it follows that
\begin{equation}
\mathbb E\!\left[B_C(U)^2\right]
\le K^2\!\left(\mathcal R^\star_{Q_d,b,d}+\sqrt{\varepsilon_{\mathrm{vq}}}+\Delta_d(P)\right)^2.
\label{eq:source_to_ste_bias}
\end{equation}
This makes the roles of reference distortion, numerical fitting, source mismatch, and downstream sensitivity explicit in one blockwise bias certificate.
\endgroup

{Comparing with the gain-scaled downstream gradient evaluated at the unquantized input $u$, the same decomposition and \cref{eq:ste_jacobian_defects} give}

\begin{equation}
    \|g_{\mathrm{STE}}-\gamma\nabla f(u)\|_2
    \le
    \gamma\delta_a\|\nabla f(u)\|_2
    +\gamma(1+\delta_a)L_f\rho(u)\delta_r.
    \label{eq:two_defect_bias}
\end{equation}
Assignment error $\delta_a$ controls the Jacobian defect, while reconstruction error $\delta_r$ controls forward error relative to $u$. Optimizing only the latter can leave the former large when gain is applied after assignment.

\begin{proposition}[Cellwise shadow objective]
\label{prop:cellwise_shadow}
Let $f:\mathbb R^d\to\mathbb R$ have $L_f$-Lipschitz gradient and let $V_k$
be the open Voronoi cell of codeword $c_k$.  Define
\[
    \mathcal S_Cf(z)
    :=f(c_k)+\langle\nabla f(c_k),z-c_k\rangle,
    \qquad z\in V_k.
\]
Inside each cell,
$\nabla\mathcal S_Cf(z)=\nabla f(\Pi_C(z))$, exactly the identity-STE
direction for a fixed-scale quantizer, and
\begin{equation}
    |\mathcal S_Cf(z)-f(z)|
    \le\frac{L_f}{2}\|z-\Pi_C(z)\|_2^2.
    \label{eq:cellwise_shadow_error}
\end{equation}
\end{proposition}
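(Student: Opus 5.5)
The argument works cell by cell, using only the definition of $\mathcal S_Cf$ and the descent lemma for $L_f$-smooth functions. Fix a codeword $c_k$ and work on its open Voronoi cell $V_k$. Because $V_k$ is open, every $z\in V_k$ has a neighborhood contained in $V_k$. On that neighborhood $\mathcal S_Cf$ coincides with the affine function $z\mapsto f(c_k)+\langle\nabla f(c_k),z-c_k\rangle$. So $\mathcal S_Cf$ is differentiable at $z$ with $\nabla\mathcal S_Cf(z)=\nabla f(c_k)$.

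Next we identify this gradient with the STE direction. Inside the open cell, $\Pi_C(z)=c_k$ unambiguously: the tie rule only acts on cell boundaries, which are excluded. Hence $\nabla\mathcal S_Cf(z)=\nabla f(\Pi_C(z))$. For a fixed-scale quantizer, the identity STE treats $Dq/Dz=I$ and evaluates the downstream gradient at the quantized point $q=\Pi_C(z)$. This gives exactly $\nabla f(\Pi_C(z))$, which proves the first claim. Unlike \cref{prop:ste_jacobian}, there is no normalization term $\nabla s$ here, since the scale is fixed.

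For \cref{eq:cellwise_shadow_error} we use the standard consequence of $L_f$-Lipschitz gradients. For all $x,y$,
\[
|f(y)-f(x)-\langle\nabla f(x),y-x\rangle|\le\frac{L_f}{2}\|y-x\|_2^2 .
\]
This follows by writing $f(y)-f(x)=\int_0^1\langle\nabla f(x+t(y-x)),y-x\rangle\,dt$, subtracting $\langle\nabla f(x),y-x\rangle$, and bounding the integrand by $L_f t\|y-x\|_2^2$ via Cauchy--Schwarz. Taking $x=c_k=\Pi_C(z)$ and $y=z$, the left-hand side becomes $|f(z)-\mathcal S_Cf(z)|$, and the bound becomes $\frac{L_f}{2}\|z-\Pi_C(z)\|_2^2$.

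No step is a genuine obstacle. The only point needing care is that differentiability and the gradient identity hold only on open cells. On the measure-zero boundaries $\mathcal S_Cf$ need not be continuous, so the statement is deliberately restricted to interiors. The error bound still holds pointwise there if $\mathcal S_Cf$ is extended using the implementation's tie rule.
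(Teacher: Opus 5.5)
Your proposal is correct and follows the paper's proof: the gradient identity holds because $\mathcal S_Cf$ is affine on the open cell, where $\Pi_C(z)=c_k$. The error bound is the two-sided first-order smoothness remainder of $f$ expanded at $c_k$, and your remarks on cell openness and boundary ties are accurate refinements of the same argument rather than a different route.
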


\begin{proof}
The gradient statement follows because $\mathcal S_Cf$ is affine on $V_k$.
The smoothness remainder for the first-order expansion of $f$ at $c_k$ gives
\cref{eq:cellwise_shadow_error}.
\end{proof}

The cellwise construction relates local STE modeling error to codebook distortion. It can be discontinuous across cell boundaries, so its guarantee is local to each cell.

For a full network, let $\mathcal B_\xi(\omega,e)$ denote the backward map when
the concatenated perturbation vector $e$ includes forward residuals,
local STE-Jacobian deviations, and trainable-quantizer Jacobian-column errors, with
$\mathcal B_\xi(\omega,0)=\nabla f_\xi(\omega)$.  The following elementary
lemma explains how the sitewise certificates aggregate.

\begin{lemma}[From aggregate codec errors to global STE bias]
\label{lem:ste_bias_from_codec}
{Assume $\mathbb E_\xi\nabla f_\xi(\omega)=\nabla F(\omega)$, $\mathbb E_\xi\|e(\xi)\|_2^2<\infty$, and the required gradients are integrable. If, on the analyzed region,}
\begin{equation}
    \|\mathcal B_\xi(\omega,e)-
      \mathcal B_\xi(\omega,0)\|_2
    \le\kappa\|e\|_2,
    \label{eq:backprop_stability}
\end{equation}
then for
$G(\omega;\xi)=\mathcal B_\xi(\omega,e(\xi))$ and
$F(\omega)=\mathbb E_\xi f_\xi(\omega)$,
\begin{equation}
    \|\mathbb E_\xi G(\omega;\xi)-\nabla F(\omega)\|_2^2
    \le\kappa^2\mathbb E_\xi\|e(\xi)\|_2^2.
    \label{eq:global_codec_bias}
\end{equation}
The exact blockwise bound in \cref{eq:codec_controlled_bias} supplies a
sharper local value when the normalization graph is modeled explicitly.
\end{lemma}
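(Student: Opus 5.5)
The approach is a pointwise-then-average argument. Write $b(\omega):=\mathbb E_\xi G(\omega;\xi)-\nabla F(\omega)$ and first express it as an expectation of a per-sample difference. Assume $\mathbb E_\xi\nabla f_\xi(\omega)=\nabla F(\omega)$ (this is where differentiation under the expectation enters), use $\mathcal B_\xi(\omega,0)=\nabla f_\xi(\omega)$, and use integrability of both $G(\omega;\xi)$ and $\nabla f_\xi(\omega)$. Together these give
\[
b(\omega)=\mathbb E_\xi\bigl[\mathcal B_\xi(\omega,e(\xi))-\mathcal B_\xi(\omega,0)\bigr].
\]
Integrability is what makes the subtraction inside a single expectation legitimate, so I check it before anything else.

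Next, I apply Jensen's inequality to the convex map $x\mapsto\|x\|_2^2$, or equivalently the bound $\|\mathbb E X\|_2\le\mathbb E\|X\|_2$ followed by Cauchy--Schwarz. This gives
\[
\|b(\omega)\|_2^2\le\mathbb E_\xi\bigl\|\mathcal B_\xi(\omega,e(\xi))-\mathcal B_\xi(\omega,0)\bigr\|_2^2.
\]
I then insert the backward-map stability bound \cref{eq:backprop_stability} pointwise in $\xi$. This yields $\kappa^2\mathbb E_\xi\|e(\xi)\|_2^2$, which is finite by hypothesis, and that proves \cref{eq:global_codec_bias}.

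The main obstacle is a domain issue rather than any calculation. Stability is assumed only on the analyzed region, so the argument needs $(\omega,e(\xi))$ to lie in that region for almost every $\xi$. I would state this explicitly as the interpretation of the hypothesis. The alternative is to restrict the expectation to the event where it holds and carry the complement as a separate term.

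I would close with a remark on the sharper local value. For each site, the blockwise certificate \cref{eq:codec_controlled_bias} can replace the generic $\kappa\|e\|_2$ contribution once the normalization graph is modeled explicitly. Summing those site contributions, or taking their root-sum-square, gives the finer constant.
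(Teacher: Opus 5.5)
Your proposal is correct and matches the paper's proof: you rewrite the bias as $\mathbb E_\xi[\mathcal B_\xi(\omega,e(\xi))-\mathcal B_\xi(\omega,0)]$, apply Jensen's inequality to the squared norm, and insert the stability bound pointwise in $\xi$. Your remarks on integrability and on requiring $(\omega,e(\xi))$ to lie in the analyzed region almost surely make explicit assumptions that the paper leaves implicit.
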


\begin{proof}
Jensen's inequality followed by \cref{eq:backprop_stability} gives
\begin{align*}
    \|\mathbb E_\xi[G-\nabla f_\xi]\|_2^2
    \le\mathbb E_\xi\|G-\nabla f_\xi\|_2^2
    \le\kappa^2\mathbb E_\xi\|e(\xi)\|_2^2.
\end{align*}
\end{proof}

\subsection{Distortion-Controlled Optimization Consequences}
\label{app:ste_convergence}

{For the SGD update in \cref{thm:distortion_controlled_ste}, the following analysis separates optimization and sampling error from the codec-dependent bias certificate. The result concerns the smooth reference objective under the stated update rule; the experimental optimizer is AdamW.}

\begin{proof}[Proof of \cref{thm:distortion_controlled_ste}]
Let $g_t=\nabla F(\omega_t)$,
$\bar G_t=\mathbb E_tG_t$, and
$b_t=\bar G_t-g_t$.  $L$-smoothness gives
\begin{equation}
    \mathbb E_tF(\omega_{t+1})
    \le
    F(\omega_t)-\alpha_t\langle g_t,g_t+b_t\rangle
    +\frac{L\alpha_t^2}{2}\mathbb E_t\|G_t\|_2^2.
    \label{eq:biased_ste_descent_start}
\end{equation}
The variance condition implies
\[
    \mathbb E_t\|G_t\|_2^2
    \le2\|g_t\|_2^2+2\|b_t\|_2^2+\sigma^2.
\]
Also,
$-\langle g_t,b_t\rangle
\le\tfrac14\|g_t\|_2^2+\|b_t\|_2^2$.
Substituting both inequalities into
\cref{eq:biased_ste_descent_start} and using
$L\alpha_t\le1/4$ yields
\begin{equation}
    \mathbb E_tF(\omega_{t+1})
    \le
    F(\omega_t)-\frac{\alpha_t}{2}\|g_t\|_2^2
    +\frac{5\alpha_t}{4}\|b_t\|_2^2
    +\frac{L\alpha_t^2\sigma^2}{2}.
    \label{eq:biased_ste_one_step}
\end{equation}
Take total expectation, sum from $t=0$ to $T-1$, use
$F(\omega_T)\ge F_\star$, and substitute the codec-bias bound.  Multiplying
the resulting inequality by two and dividing by $S_T$ gives
\cref{eq:distortion_controlled_stationarity}.
\end{proof}

{For the constant schedule $\alpha_t=c/\sqrt T\le1/(4L)$, $S_T=c\sqrt T$ and $\sum_{t<T}\alpha_t^2=c^2$. The optimization and sampling terms in \cref{eq:distortion_controlled_stationarity} therefore decay as $T^{-1/2}$, leaving the weighted codec-bias certificate. If that certificate also vanishes, the bound establishes stationarity of the smooth reference; otherwise it gives a residual upper bound determined by the codec and sensitivity conditions.}

\begin{corollary}[{Local PL bound with codec-dependent residual}]
\label{cor:pl_distortion_floor}
Suppose, in addition, that the iterates remain in a region satisfying the PL
inequality
\begin{equation}
    \frac12\|\nabla F(\omega)\|_2^2
    \ge\mu(F(\omega)-F_\star),
    \label{eq:local_pl}
\end{equation}
and that $\|b_t\|_2^2\le\overline B^2$.  For a constant
$0<\alpha\le\min\{1/(4L),1/\mu\}$,
\begin{equation}
    \mathbb E[F(\omega_T)-F_\star]
    \le
    (1-\mu\alpha)^T(F(\omega_0)-F_\star)
    +\frac{5\overline B^2}{4\mu}
    +\frac{L\alpha\sigma^2}{2\mu}.
    \label{eq:pl_codec_floor}
\end{equation}
\end{corollary}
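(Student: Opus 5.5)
The plan is to reuse the one-step inequality \cref{eq:biased_ste_one_step} from the proof of \cref{thm:distortion_controlled_ste}. It holds verbatim for a constant step $\alpha\le 1/(4L)$. Write $\Delta_t:=F(\omega_t)-F_\star$ and take total expectations. With $\|b_t\|_2^2\le\overline B^2$, the one-step bound becomes
\[
\mathbb E\Delta_{t+1}\le \mathbb E\Delta_t-\frac{\alpha}{2}\mathbb E\|\nabla F(\omega_t)\|_2^2+\frac{5\alpha}{4}\overline B^2+\frac{L\alpha^2\sigma^2}{2}.
\]
Because the iterates stay in the PL region, \cref{eq:local_pl} holds at every $\omega_t$. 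It gives $-\frac{\alpha}{2}\|\nabla F(\omega_t)\|_2^2\le-\alpha\mu\Delta_t$ pointwise, and hence in expectation.

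This produces the linear recursion
\[
\mathbb E\Delta_{t+1}\le(1-\mu\alpha)\mathbb E\Delta_t+c,\qquad c:=\frac{5\alpha}{4}\overline B^2+\frac{L\alpha^2\sigma^2}{2}.
\]
The condition $\alpha\le 1/\mu$ makes the contraction factor $1-\mu\alpha$ lie in $[0,1)$. Unrolling $T$ times gives $(1-\mu\alpha)^T\Delta_0$ plus $c\sum_{k<T}(1-\mu\alpha)^k$. The geometric sum is at most $c/(\mu\alpha)$, which equals $\frac{5\overline B^2}{4\mu}+\frac{L\alpha\sigma^2}{2\mu}$. This is exactly the bound in \cref{eq:pl_codec_floor}.

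I do not expect a serious obstacle; two points need checking.

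\textbf{Conditioning.} The bias bound is used inside the conditional expectation $\mathbb E_t$. Since $\|b_t\|_2^2\le\overline B^2$ holds almost surely, the bound survives taking total expectation.

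\textbf{Sign of $\Delta_t$.} Applying PL requires $\Delta_t\ge0$, which holds because $F_\star$ is the infimum. The region-invariance assumption is taken as given in the statement.
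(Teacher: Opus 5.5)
Your proposal is correct and follows the paper's proof: you substitute the PL inequality into the one-step bound \cref{eq:biased_ste_one_step}, take total expectations to get the recurrence with contraction factor $1-\mu\alpha\in[0,1)$, and unroll it, bounding the geometric sum by $(\mu\alpha)^{-1}$. Your sign check on $\Delta_t$ is harmless but not needed, because multiplying the PL inequality by $-\alpha$ and unrolling with a nonnegative factor require no information about the sign of $F(\omega_t)-F_\star$.
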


\begin{proof}
Apply \cref{eq:local_pl} to \cref{eq:biased_ste_one_step} and take total
expectations:
\[
    \Delta_{t+1}
    \le(1-\mu\alpha)\Delta_t
    +\frac{5\alpha}{4}\overline B^2
    +\frac{L\alpha^2\sigma^2}{2},
    \qquad
    \Delta_t:=\mathbb E[F(\omega_t)-F_\star].
\]
Unrolling the geometric recurrence and upper-bounding the finite geometric
sum by $(\mu\alpha)^{-1}$ proves \cref{eq:pl_codec_floor}.
\end{proof}

\subsection{Voronoi Margins, Code Switching, and Index Stability}
\label{app:assignment_stability}

\begin{lemma}[Assignment-switch margin]
\label{lem:assignment_switch}
For a point $v$ with unique nearest codeword $c_{a(v)}$, define
\begin{equation}
    m_C(v)
    :=
    \min_{j\ne a(v)}
    \frac{\|v-c_j\|_2^2-\|v-c_{a(v)}\|_2^2}
    {2\|c_j-c_{a(v)}\|_2}.
    \label{eq:voronoi_margin}
\end{equation}
Set $m_C(v)=0$ when the nearest-codeword assignment is tied.
Then $m_C(v)$ is the distance from $v$ to the boundary of its Voronoi cell.
If $\|v'-v\|_2<m_C(v)$, then $a(v')=a(v)$.  Consequently, if
$\|V'-V\|_2\le r$ almost surely,
\begin{equation}
    \Pr(a(V')\ne a(V))
    \le\Pr(m_C(V)\le r).
    \label{eq:switch_probability}
\end{equation}
\end{lemma}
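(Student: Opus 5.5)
The plan is to prove the three claims of \cref{lem:assignment_switch} in order, using only that Voronoi cells are intersections of half-spaces. Write $a=a(v)$. For each $j\ne a$, the competing constraint $\|x-c_a\|_2^2<\|x-c_j\|_2^2$ expands, after cancelling $\|x\|_2^2$, to the affine inequality
\[
\langle x,\,c_j-c_a\rangle<\tfrac12\bigl(\|c_j\|_2^2-\|c_a\|_2^2\bigr).
\]
So the open cell of $c_a$ is the finite intersection of the open half-spaces $\mathcal H_j$. The same expansion gives the signed distance from $v$ to the bisecting hyperplane $\partial\mathcal H_j$:
\[
\frac{\tfrac12(\|c_j\|_2^2-\|c_a\|_2^2)-\langle v,c_j-c_a\rangle}{\|c_j-c_a\|_2}
=\frac{\|v-c_j\|_2^2-\|v-c_a\|_2^2}{2\|c_j-c_a\|_2}.
\]
This is positive because the nearest codeword is unique. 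Since codewords are distinct, the denominators are nonzero.

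\textbf{Distance to the boundary.} Next I will show that the distance from $v$ to the complement of a finite intersection of open half-spaces containing $v$ equals the minimum of the distances to the bounding hyperplanes. For the lower bound, take any $x$ outside the cell. Then $x\notin\mathcal H_j$ for some $j$. The segment from $v$ to $x$ crosses $\partial\mathcal H_j$, so $\|x-v\|_2$ is at least the distance to that hyperplane, hence at least $m_C(v)$. For the upper bound, the orthogonal projection of $v$ onto the minimizing hyperplane is at distance exactly $m_C(v)$ and lies outside the open cell. Together these identify $m_C(v)$ with the distance from $v$ to the cell boundary. In the tied case $v$ already lies on a bisector, which is consistent with the convention $m_C(v)=0$.

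\textbf{Stability and switching probability.} If $\|v'-v\|_2<m_C(v)$, the previous step puts $v'$ in the open cell of $c_a$. Then $c_a$ is the unique nearest codeword of $v'$, so $a(v')=a(v)$ regardless of the tie rule. This gives the inclusion of events
\[
\{a(V')\ne a(V)\}\subseteq\{\|V'-V\|_2\ge m_C(V)\}\subseteq\{m_C(V)\le r\}
\]
on the almost-sure event $\|V'-V\|_2\le r$. Taking probabilities yields \cref{eq:switch_probability}. When $V$ lands on a tie, $m_C(V)=0\le r$, so the inclusion holds trivially.

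\textbf{Main obstacle.} The only delicate point is the tie and boundary bookkeeping. I must verify that points strictly inside the open cell are assigned to $c_a$ whatever deterministic tie rule is used; this holds because no tie occurs there. I must also handle that the inclusion uses $\ge$ rather than $>$, so that the non-strict bound $m_C(V)\le r$ is the correct event. Measurability of $m_C$ follows because it is a minimum of finitely many continuous functions off the tie set.
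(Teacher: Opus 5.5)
Your proposal is correct and follows essentially the same route as the paper. Both arguments identify each competing constraint with a bisecting hyperplane, compute the signed distance from $v$ to that hyperplane as the ratio in \cref{eq:voronoi_margin}, take the minimum to get the distance to the cell boundary, and note that the open ball of that radius stays inside the cell; you also make explicit two steps the paper leaves implicit, namely the two-sided argument that the minimum over all (possibly redundant) hyperplanes equals the distance to the complement, and the event inclusion that turns the deterministic stability claim into \cref{eq:switch_probability}.
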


\begin{proof}
For $j\ne a(v)$, the bisector between $c_{a(v)}$ and $c_j$ is a hyperplane.
The signed Euclidean distance from $v$ to that hyperplane is precisely the
corresponding ratio in \cref{eq:voronoi_margin}.  The minimum is therefore the
distance to the boundary of the current cell.  The open ball of that radius
lies inside the cell, proving both claims.
\end{proof}

For the idealized $\epsilon=0$ normalization
$z(u)=\sqrt g Ru/\|u\|_2$, if $\|h\|_2\le\|u\|_2/2$, then
\[
    \|z(u+h)-z(u)\|_2
    \le\frac{2\sqrt g\|h\|_2}{\|u\|_2}.
\]
Let $z_j(u)\in\mathbb R^d$ denote tuple $j$.  A sufficient condition that one
shadow update preserve every tuple index is
\[
    \frac{2\sqrt g\|h\|_2}{\|u\|_2}
    <\min_j m_C(z_j(u)).
\]
{This inequality certifies unchanged assignments when it holds. Violating it permits, but does not imply, switching or chattering.}

\begin{corollary}[Stability of the code-index distribution]
\label{cor:index_distribution_stability}
Let $a_C$ be the nearest-codeword index map, let $P,Q$ be tuple laws, and let
$V\sim Q$.  Then
\begin{equation}
    \operatorname{TV}((a_C)_\#P,(a_C)_\#Q)
    \le
    \inf_{r>0}
    \left\{
        \frac{W_2(P,Q)^2}{r^2}
        +Q(m_C(V)\le r)
    \right\}.
    \label{eq:index_distribution_stability}
\end{equation}
\end{corollary}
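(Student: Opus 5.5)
The bound follows from a coupling argument combined with Markov's inequality, using the Voronoi margin of \cref{lem:assignment_switch}.

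\emph{Reduction to coupled variables.} Fix $r>0$ and take an optimal $W_2$ coupling $(U,V)$ with $U\sim P$ and $V\sim Q$. An optimal coupling exists because both laws have finite second moment. Any coupling bounds total variation of pushforwards:
\[
\operatorname{TV}\bigl((a_C)_\#P,(a_C)_\#Q\bigr)\le\Pr\bigl(a_C(U)\ne a_C(V)\bigr).
\]
So it suffices to bound the probability of an index disagreement under this coupling.

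\emph{Splitting the disagreement event.} We split on the event $\{\|U-V\|_2\ge r\}$. Markov's inequality applied to $\|U-V\|_2^2$ gives
\[
\Pr\bigl(\|U-V\|_2\ge r\bigr)\le \frac{\mathbb E\|U-V\|_2^2}{r^2}=\frac{W_2(P,Q)^2}{r^2}.
\]
On the complement, $\|U-V\|_2<r$. If in addition $m_C(V)>r$, then $\|U-V\|_2<m_C(V)$, and \cref{lem:assignment_switch}, applied pointwise with $v=V$ and $v'=U$, gives $a_C(U)=a_C(V)$. Therefore
\[
\{a_C(U)\ne a_C(V)\}\subseteq\{\|U-V\|_2\ge r\}\cup\{m_C(V)\le r\}.
\]
A union bound then gives the bracketed expression for each fixed $r$. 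Taking the infimum over $r>0$ yields \cref{eq:index_distribution_stability}.

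\emph{Technical points.} The main care is needed in two places.
\begin{enumerate}
\item Ties: the convention $m_C(v)=0$ at tied points places them in $\{m_C(V)\le r\}$ for every $r>0$, so the deterministic tie rule causes no gap.
\item Measurability: $m_C$ is a minimum of finitely many continuous ratios off the tie set, which has measure-theoretically benign structure, so all events above are measurable.
\end{enumerate}
If an optimal coupling were not available, we would use an $\eta$-optimal coupling and let $\eta\to0$; this gives the same bound.
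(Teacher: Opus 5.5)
Your proof is correct and follows the paper's argument: you couple $P$ and $Q$, bound the TV distance of the pushforwards by the index-mismatch probability, and control that probability by a large-displacement event (via Markov) plus a small-margin event (via \cref{lem:assignment_switch}), then take the infimum over $r$. The only cosmetic difference is that you fix an optimal $W_2$ coupling, whereas the paper takes an arbitrary coupling and infimizes over couplings at the end; both are valid.
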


\begin{proof}
Take a coupling $(X,Y)$ of $(P,Q)$.  If $\|X-Y\|_2\le r$ and
$m_C(Y)>r$, \cref{lem:assignment_switch} gives $a_C(X)=a_C(Y)$.  Hence
\[
    \Pr(a_C(X)\ne a_C(Y))
    \le\Pr(\|X-Y\|_2>r)+Q(m_C(Y)\le r).
\]
Markov's inequality bounds the first term by
$\mathbb E\|X-Y\|_2^2/r^2$.  Total variation between pushforwards is at most
the mismatch probability under any coupling.  Infimizing first over couplings
and then over $r$ proves \cref{eq:index_distribution_stability}.
\end{proof}

{Code-index stability is governed jointly by transport distance and the probability mass near Voronoi boundaries. The bound separates these contributions: $W_2$ controls displacement between tuple laws, while the boundary term controls the sensitivity of discrete assignments to that displacement.}

\end{document}